\def\defemb#1#2{\expandafter\def\csname #1\endcsname
                              {\relax\ifmmode #2\else\hbox{$#2$}\fi}}
\newcommand{\commentout}[1]{}
\newtheorem{definition}{Definition}
{\theorembodyfont{\normalfont}

\newtheorem{example}{Example}
}
\newtheorem{theorem}{Theorem}
\newtheorem{lemma}{Lemma}
\newtheorem{corollary}{Corollary}}
\newenvironment{proof}{{\vspace{0.5cm}\noindent \bf Proof:}~~}
                      {$\square$\vspace{0.3cm}}
\newenvironment{nproof}[1]{{\vspace{0.5cm}\noindent\bf Proof (#1)}~~}
                          {$\square$\vspace{0.5cm}}
\newenvironment{atheorem}[1]{{\vspace{0.4cm}\noindent\bf
Theorem~\ref{#1}}~~}
                            {}
\newcommand{\searchtcp}{{\sf Search-TCP}}
\newcommand{\CIT}{CIT}
\newcommand{\cmplet}{Comp}
\newcommand{\imp}{\rhd}
\newcommand{\TCPn}{TCP-net}
\newcommand{\tcpn}{TCP-net}
\newcommand{\TCPns}{TCP-nets}
\newcommand{\cNN}{\cN^{\star}}
\newcommand{\cparc}[2]{\langle \overrightarrow{#1, #2} \rangle}
\newcommand{\iarc}[2]{(\overrightarrow{#1, #2})}
\newcommand{\ciarc}[2]{(#1, #2)}
\newcommand{\shared}{{\mathsf{shared}}}
\begin{document}


\title{On Graphical Modeling of Preference and Importance}

\author{\name Ronen I. Brafman \email brafman@cs.stanford.edu \\
\addr Department of Computer Science \\
     Stanford University \\
      Stanford CA 94305
\AND
\name Carmel Domshlak \email dcarmel@ie.technion.ac.il \\
\addr Faculty of Industrial Engineering and Management \\
      Technion - Israel Institute of Technology\\
      Haifa, Israel 32000
\AND
\name Solomon E. Shimony \email shimony@cs.bgu.ac.il \\
\addr Department of Computer Science \\
     Ben-Gurion University \\
      Beer Sheva, Israel 84105
}


\maketitle

\begin{abstract}
\commentout{
  The ability to make decisions and to assess potential courses of
  action is a corner-stone of many AI applications. Typically,
  this ability requires explicit information about the decision-maker's
  preferences.  In many applications, preference elicitation is a
  serious bottleneck.  The user either does not have the time, the
  knowledge, or the expert support required to specify complex
  multi-attribute utility functions. In such cases, a method that is
  based on intuitive, yet expressive, preference statements is
  required.  In this paper we suggest the use of TCP-nets, an
  enhancement of CP-nets, as a tool for representing, and reasoning
  about qualitative preference statements. We present and motivate
  this framework, define its semantics, and study various computational
  aspects of reasoning with TCP-nets. Finally, we show how to
  perform constrained optimization efficiently given a TCP-net.
  }
  In recent years, CP-nets have emerged as a useful tool for supporting preference
  elicitation, reasoning, and representation. CP-nets capture and support reasoning with
  qualitative conditional preference statements, statements that are relatively natural for users to express.
  In this paper, we extend the CP-nets formalism to handle another
  class of very natural qualitative statements one
  often uses in expressing preferences in daily life -- statements
  of relative importance of attributes. The resulting formalism, TCP-nets, maintains
  the spirit of CP-nets, in that it remains focused on using only
  simple and natural preference statements, uses the {\em ceteris paribus\/} semantics,
  and utilizes a graphical representation of this information to reason about
  its consistency and to perform, possibly constrained, optimization using
  it. The extra expressiveness it provides allows us to
  better model tradeoffs users would like to make, more faithfully 
 representing their preferences.
  \end{abstract}

\section{Introduction}
The ability to make decisions and to assess potential courses of
action is a corner-stone of numerous AI applications, including expert
systems, autonomous agents, decision-support systems, recommender
systems, configuration software, and constrained optimization
applications. To make good decisions, we must be able to assess and
compare different alternatives. Sometimes, this comparison is
performed implicitly, as in many recommender systems~\cite{Burke:00,acmc40}.  But frequently,
explicit information about the decision-maker's preferences is
required.

In classical decision theory and decision analysis utility functions
are used to represent the decision-maker's preferences. However, the process of obtaining the type of
information required to generate a good utility function is involved, time-consuming and requires non-negligible effort on the part of
the user~\cite{french}. Sometimes such effort is necessary and possible, but in many applications
the user cannot be engaged for a lengthy period of time and cannot be
supported by a human decision analyst. For instance, this is the case in on-line product recommendation systems and other software
decision-support applications.

When a utility function cannot be or need not be obtained,
one should resort to other, more qualitative forms of preference
representation.  Ideally, this qualitative information should be
easily obtainable from the user by non-intrusive means. That is, we
should be able to extract such information from natural and relatively simple
statements of preference provided by the user, and this
elicitation process should be amenable to automation. In addition,
automated reasoning about such qualitative preference information should be semantically effective and computationally efficient.

One framework for preference representation that addresses these
concerns is that of {\em Conditional Preference Networks\/} (CP-nets)
(Boutilier et al.~\citeyearR{BBHP.UAI99,BBDHP.journal}).
CP-nets is a graphical preference representation model grounded in the notion
of conditional preferential independence.
In  preference elicitation with CP-nets, the decision maker (directly or indirectly) describes how her preference
over the values of one variable depends on the value of other
variables. For example, she may state that her preference for a
dessert depends on  the main-course as well as whether or
not she had an alcoholic beverage. In turn, her preference for an alcoholic beverage
may depend on the main course and the time of day. This information is
described by a graphical structure in which the nodes represent
variables of interest and edges capture direct preferential dependence relations between the variables.
Each node is annotated with a {\em conditional
preference table\/} (CPT) describing the user's preference over
alternative values of this node given different values of the parent
nodes.  CP-nets capture a class of intuitive and useful natural
language statements of the form ``I prefer the value $x_0$ for
variable $X$ given that $Y=y_0$ and $Z=z_0$''. Such statements do not
require complex introspection nor a quantitative assessment.

From the practical perspective,
there is another class of preference statements
that is no less intuitive or important, yet is not captured by the CP-net model. These statements have the form: ``It is more important to me that the value of $X$ be better than that the value of $Y$ be better.''
We call these {\em relative importance\/}
statements.  For instance, one might say ``The length of the journey is more important to me than the choice of airline''.  A more refined notion of
importance, though still intuitive and easy to communicate, is that of {\em
  conditional relative importance\/}: ``The length of the journey is more
important to me than the choice of airline if I need to give a
talk the following day. Otherwise, the choice of airline is more
important.'' The latter statement is of the form: ``A better
assignment for $X$ is more important than a better assignment for
$Y$ given that $Z=z_0$.''  Notice that information about relative
importance is different from information about preferential
independence. For instance, in the example above, user's
preference for an airline does not depend on the duration of the
journey because, e.g., she compares airlines based only on their
service, security levels, and the quality of their frequent flyer
program.  Informally, using statements of relative importance the
user expresses her preference over {\em compromises} that may be
required. Such information is very important in customized
product configuration
applications~\cite{Sabin:Weigel:ieee98,Haag:ieee98,Freuder:OSullivan:cp01},
where production, supply, and other constraints are posed on the
product space by the producer, and these constraints are
typically even unknown to the customer. Indeed, in many
applications, various resource (e.g., money, time, bandwidth)
constraints exist, and the main computational task is that of
finding a solution that is feasible and not preferentially dominated by any other solution.

In this paper we consider enhancing the expressive power of CP-nets by
introducing information about importance relations, obtaining a
preference-representation structure which we call TCP-nets (for {\em
tradeoffs-enhanced\/} CP-nets).
By capturing information about both conditional preferential independence and conditional relative
importance, TCP-nets provide a richer framework for representing
user preferences, allowing stronger conclusions to be drawn, yet remaining committed to the use of only very intuitive, qualitative information.  At the same time, we show that the added relative importance information has significant impact on both the consistency of the specified relation, and the techniques used for reasoning about it.  Focusing on these computational issues, we show that the graphical structure of the ``mixed'' set of preference statements captured in a TCP-net can often be exploited in order to achieve efficiency both in consistency testing and in preferential reasoning.

This paper is organized as follows: Section~\ref{S:backg}
describes the notions underlying \TCPns: preference relations,
preferential independence, and relative importance.  In
Section~\ref{S:tcpnet} we define \TCPns, specify their semantics, and provide a number of
examples.
In Section~\ref{S:acyclic} we characterize a class of conditionally acyclic TCP-nets whose consistency is guaranteed and then, in Section~\ref{S:tcp-consistency} we discuss the complexity of identifying
members of this class.
In Section~\ref{S:constraints} we present an algorithm for outcome
optimization in conditionally acyclic TCP-nets, and discuss the
related tasks of reasoning about preferences given a TCP-net. We conclude with a discussion of related and future work in
Section~\ref{S:summary}.

\section{Preference Orders, Independence, and Relative Importance}
\label{S:backg}
In this section we describe the semantic concepts underlying TCP-nets:
preference orders, preferential independence, conditional preferential
independence, as well as relative importance and conditional relative
importance.

\subsection{Preference and Independence}

We model a {\em preference relation\/} as a strict partial order.
Thus, we use the terms preference order and strict partial order
interchangeably. A strict partial order is a binary relation over
outcomes that is anti-reflexive, anti-symmetric and transitive. Given two
outcomes $o,o'$, we write $o\succ o'$ to denote that $o$ is
strictly preferred to $o'$.

The above choice implies that two outcomes cannot be equally
preferred. This choice follows from the fact that the language of
preferences we use in this paper does not allow statements of
indifference (as opposed to incomparability), and thus there is
no need for using weak orderings. Incorporating statements of
indifference is pretty straightforward, as explained by~\citeA{BBDHP.journal}, but introduces much overhead if we were
to formally treat it throughout this paper.

The types of outcomes we are concerned with consist of possible
assignments to some set of variables. More formally, we assume
some given set $\bV=\{X_1,\ldots,X_n\}$ of variables with
corresponding domains $\cD(X_1),\ldots,\cD(X_n)$. The set of
possible outcomes is then $\cD(\bV) =
\cD(X_1)\times\cdots\times\cD(X_n)$, where we use $\cD(\cdot)$ to
denote the domain of a set of variables as well. For example, in
the context of the problem of configuring a personal computer
(PC), the variables may be {\em processor type, screen size,
operating system\/} etc., where {\em screen size} has the domain
{\em \{17in, 19in, 21in\}}, {\em operating system\/} has the
domain {\em \{LINUX, Windows98, WindowsXP\}\/}, etc. Each
complete assignment to the set of variables specifies an outcome
-- a particular PC configuration. Thus, a preference relation over
these outcomes specifies a strict partial order over possible PC
configurations.

The number of possible outcomes is exponential in $n$, while the
set of possible orderings on them is more than doubly exponential
in $n$. Therefore, explicit specification and representation of
an ordering is not realistic, and thus we must describe it
implicitly using a compact representation model. The notion of
preferential independence plays a key role in such
representations. Intuitively, $\bX \subset \bV$ is {\em
preferentially independent\/} of $\bY=\bV-\bX$ if and only if for
all assignments to $\bY$, our preference over $\bX$ values is
identical.
\begin{definition}
\label{d:p1}
Let $\bx_1,\bx_2\in\cD(\bX)$ for some $\bX
\subseteq \bV$, and $\by_1,\by_2\in\cD(\bY)$, where $\bY= \bV -\bX$.
We say that
$\bX$ is {\em preferentially independent} of $\bY$ iff, for all
$\bx_1$, $\bx_2$, $\by_1$, $\by_2$ we have that
\begin{equation}
\label{e:pi} \bx_1\by_1 \succ \bx_2\by_1\ {\mathrm{iff}}\
\bx_1\by_2 \succ \bx_2\by_2
\end{equation}
\end{definition}
For example, in our PC configuration example, the user may assess
{\em screen size\/} to be preferentially independent of {\em processor
type\/} and {\em operating system\/}. This could be the case if
the user always prefers a larger screen to a smaller screen,
independent of the selection of processor and/or OS.

Preferential independence is a strong property, and is therefore not
very common. A more refined notion is that of conditional preferential
independence. Intuitively, $\bX$ is {\em conditionally preferentially
  independent\/} of $\bY$ given $\bZ$ if and only if for every fixed
assignment to $\bZ$, the ranking of $\bX$ values is independent of the
value of $\bY$.
\begin{definition}
\label{d:p2}
Let $\bX,\bY$ and $\bZ$ be a partition of
$\bV$ and let $\bz\in\cD(\bZ)$.  $\bX$ is {\em
  conditionally preferentially independent of $\bY$ given $\bz$} iff, for
all
$\bx_1$, $\bx_2$, $\by_1$, $\by_2$ we have that
\begin{equation}
\label{e:cpi} \bx_1\by_1\bz \succ \bx_2\by_1\bz\ {\mathrm{iff}}\
\bx_1\by_2\bz \succ \bx_2\by_2\bz,
\end{equation}
and $\bX$ is {\em conditionally preferentially independent of $\bY$
given $\bZ$} iff $\bX$ is conditionally preferentially independent of
$\bY$ given every assignment $\bz\in\cD(\bZ)$.
\end{definition}
Returning to our PC example, the user may assess {\em operating
system\/} to be independent of all other features given {\em
processor type\/}. That is, he always prefers LINUX given an AMD
processor and WindowsXP given an Intel processor (e.g., because
he might believe that WindowsXP is optimized for the Intel
processor, whereas LINUX is otherwise better). Note that the
notions of preferential independence and conditional preferential
independence are among a number of standard and well-known notions
of independence in multi-attribute utility theory~\cite{KR}.

\subsection{Relative Importance}
\label{S:importance}
Although statements of preferential independence are natural and
useful, the orderings obtained by relying on them alone are
relatively weak.  To understand this, consider two preferentially
independent boolean attributes $A$ and $B$ with values $a_1,a_2$ and
$b_1,b_2$, respectively.  If $A$ and $B$ are preferentially
independent, then we can specify a preference order over $A$ values,
say $a_1\succ a_2$, independently of the value of $B$. Similarly, our
preference over $B$ values, say $b_1\succ b_2$, is independent of the
value of $A$. From this we can deduce that $a_1b_1$ is the most
preferred outcome and $a_2b_2$ is the least preferred outcome. However,
we do not know the relative order of $a_1b_2$ and $a_2b_1$. This is
typically the case when we consider independent variables: We can rank
each one given a fixed value of the other, but often, we cannot
compare outcomes in which both values are different. One type of
information that can address some (though not necessarily all) such
comparisons is information about relative importance. For instance, if
we state that $A$ is more important than $B$, it means that we
prefer an improvement in $A$ over an improvement in $B$.
In that case, we know that $a_1b_2\succ a_2b_1$, and can
totally order the set of outcomes as
$a_1b_1\succ a_1b_2 \succ a_2b_1 \succ a_2b_2$.

One may ask why it is important for us to order $a_1b_2$ and
$a_2b_1$ -- after all, we know that $a_1b_1$ is the most
preferred outcome. However, in many typical scenarios, we have
auxiliary or user constraints that prevent us from providing the
user with the most preferred (unconstrained) outcome. A simple
and common example is that of budget constraints, other resource
limitations, such are bandwidth and buffer size (as in the
adaptive rich-media systems described by~\citeA{BF05} are also
common. In such cases, it is important to know which attributes
the user cares about more strongly, and to try to maintain good
values for these attributes, compromising on the others.

Returning to our PC configuration example, suppose that the attributes {\em operating
system\/} and {\em processor type\/} are mutually preferentially independent.  We might say that {\em processor type\/} is more important than {\em
operating system\/}, e.g, because we believe that the effect of the
processor's type on system performance is more significant than the
effect of the operating system.
\begin{definition}
\label{d:p3}
Let a pair of variables $X$ and $Y$ be mutually preferentially independent given
$\bW=\bV-\{X,Y\}$. We say that $X$ is {\em more important than}
$Y$, denoted by $X\imp Y$, if for every assignment $\bw\in\cD(\bW)$
and for every $x_i,x_j\in\cD(X)$, $y_a,y_b\in\cD(Y)$,
such that $x_i\succ x_j$ given $\bw$,
we have that:
\begin{equation}
\label{e:ri}
  x_i y_a \bw \succ x_j y_b \bw.
\end{equation}
\end{definition}
Note that Eq.~\ref{e:ri} holds even when $y_b\succ y_a$ given $\bw$.
For instance, when both $X$ and $Y$ are binary variables, and $x_1 \succ
x_2$ and
$y_1 \succ y_2$ hold given $\bw$, then $X \imp Y$ iff we have $x_1 y_2
\bw \succ x_2 y_1 \bw$ for all $\bw\in\cD(\bW)$.
Notice that this is a strict notion of importance -- any reduction in
$Y$ is preferred to any reduction in $X$. Clearly, this idea can be
further refined by providing an actual ordering over elements of $\cD(XY)$, and we discuss this extension
in Section~\ref{ss:relaxing}.
In addition, one can consider relative importance assessments
among more than two variables. However, we feel that the benefit
of capturing such statements is small: We believe that statements
of relative importance referring to more than two attributes are
not very natural for users to articulate, and their inclusion
would significantly reduce the computational advantages of
graphical modeling. Therefore, in this work we focus only on
relative importance statements referring to pairs of attributes.

Relative importance information is a natural enhancement of
independence information.
As such, relative importance retains a desirable property -
it corresponds to statements that a naive user would find simple and
clear to evaluate and articulate. Moreover, it can be generalized
naturally to a notion of {\em conditional relative importance}. For
instance, suppose that the relative importance of {\em processor
type\/} and {\em operating system\/} depends on the primary usage of
the PC. For example, when the PC is used primarily for graphical
applications, then the choice of an operating system is more important
than that of a processor because certain important software packages
for graphic design are not available on LINUX. However, for other
applications, the processor type is more important because
applications for both Windows and LINUX exist. Thus, we say that $X$
is more important than $Y$ given $\bz$ if we always prefer to reduce
the value of $Y$ rather than the value of $X$, whenever $\bz$ holds.
\begin{definition}
\label{d:p4}
Let $X$ and $Y$ be a pair of variables from $\bV$, and let $\bZ\subseteq\bW=\bV-\{X,Y\}$. We say
that $X$ is {\em more important than $Y$ given}
$\bz\in\cD(\bZ)$ iff, for every assignment
$\bw'$ on $\bW'=\bV - (\{X,Y\} \cup \bZ)$ we have:
\begin{equation}
\label{e:cri}
  x_i y_a \bz \bw' \succ x_j y_b \bz \bw'
\end{equation}
whenever $x_i\succ x_j$ given $\bz\bw'$. We denote this relation by $X\imp_{\bz} Y$. Finally, if for some $\bz\in\cD(\bZ)$ we have either
$X\imp_{\bz}Y$, or $Y\imp_{\bz}X$, then we say
that the relative importance of $X$ and $Y$ is conditioned on $\bZ$,
and write $\cR\cI(X,Y | \bZ)$.
\end{definition}

\section{TCP-nets}
\label{S:tcpnet}

The TCP-net (for {\em Tradeoff-enhanced CP-nets\/}) model is an
extension of CP-nets~\cite{BBDHP.journal} that encodes conditional
relative importance statements, as well as the conditional
preference statements supported in CP-nets. The primary usage of
the TCP-net graphical structure is in consistency analysis of the
provided preference statements, and in classification of
complexity and developing efficient algorithms for various
reasoning tasks over these statements. In particular, as we later
show, when this structure is ``acyclic'' (for a suitable
definition of this notion!), the set of preference statements
represented by the TCP-net is guaranteed to be consistent -- that
is, there is a strict total order over the outcomes that
satisfies all the preference statements. In what follows we
formally define the TCP-net model. As it subsumes the CP-net
model, we will immediately define this more general model rather
than proceed in stages.

\subsection{TCP-net Definition}

TCP-nets are annotated graphs with three types of edges.  The nodes of
a \TCPn\ correspond to the problem variables $\bV$. The first type of
(directed) edges comes from the original CP-nets model and captures direct preferential dependencies, that is, such an edge from
$X$ to $Y$ implies that the user has different preferences over $Y$
values given different values of $X$.
The second (directed) edge type captures relative importance
relations. The existence of such an edge from $X$ to $Y$ implies that
$X$ is more important than $Y$.  The third (undirected) edge type
captures conditional importance relations: Such an edge between nodes
$X$ and $Y$ exists if there exists a non-empty variable subset $\bZ \subseteq \bV - \{X,Y\}$ for which
$\cR\cI(X,Y|\bZ)$ holds.  Without loss of generality, in what follows, the set $\bZ$ is assumed to be the minimal set of variables upon which the relative importance between $X$ and $Y$ depends.

As in CP-nets, each node $X$ in a \TCPn\ is annotated with a {\em conditional
  preference table\/} (CPT). This table associates preferences over $\cD(X)$
for every possible value assignment to the parents of $X$ (denoted $Pa(X)$).
In addition, in \TCPns, each undirected edge is annotated with a {\em
  conditional importance table\/} (CIT).  The CIT associated with such an edge
$(X,Y)$ describes the relative importance of
$X$ and $Y$ given the
value of the corresponding importance-conditioning variables $\bZ$.
\begin{definition}
\label{d:tcpnet}
A \TCPn\ $\cN$ is a tuple $\langle \bV,{\sf cp},{\sf i},{\sf
  ci},{\sf cpt},{\sf cit} \rangle$ where:
\begin{enumerate}[(1)]
  \item $\bV$ is a set of nodes, corresponding to the problem
  variables $\{X_1,\ldots,X_n\}$.

\item ${\mathsf{cp}}$ is a set of directed {\em ${\mathsf{cp}}$-arcs}
  $\{\alpha_1,\ldots,\alpha_k\}$ (where ${\mathsf{cp}}$ stands for
  {\em conditional preference}).  A ${\mathsf{cp}}$-arc
  $\cparc{X_i}{X_j}$ is in $\cN$ iff the preferences over the
  values of $X_j$ depend on the actual value of $X_i$.
  For each $X \in \bV$, let $Pa(X) = \{X' | \cparc{X'}{X} \in {\mathsf{cp}}\}$.

\item ${\mathsf{i}}$ is a set of directed {\em ${\mathsf{i}}$-arcs}
  $\{\beta_1,\ldots,\beta_l\}$ (where ${\mathsf{i}}$ stands for {\em
    importance}).  An ${\mathsf{i}}$-arc $\iarc{X_i}{X_j}$ is in
  $\cN$ iff $X_i \imp X_j$.

\item ${\mathsf{ci}}$ is a set of undirected {\em ${\mathsf{ci}}$-arcs}
  $\{\gamma_1,\ldots,\gamma_m\}$ (where ${\mathsf{ci}}$ stands for {\em
    conditional importance}).  A ${\mathsf{ci}}$-arc $\ciarc{X_i}{X_j}$
    is in $\cN$ iff we have $\cR\cI(X_i,X_j|\bZ)$ for some $\bZ \subseteq \bV -
  \{X_i,X_j\}$.\footnote{Observe that every ${\mathsf{i}}$-arc
    $\iarc{X_i}{X_j}$ can be seen as representing $\cR\cI(X_i,X_j|\emptyset)$.
    However, a clear distinction between ${\mathsf{i}}$-arcs and
    ${\mathsf{ci}}$-arc simplifies specification of many forthcoming notions and claims (e.g., Lemma~\ref{l:aux2} in Section~\ref{S:acyclic}, as well as the related notion of root variables.)}
     We call $\bZ$ the {\bf {\em selector set\/}} of $\ciarc{X_i}{X_j}$ and denote it
  by $\cS(X_i,X_j)$.

\item ${\mathsf{cpt}}$ associates a CPT with every node $X\in\bV$, where $CPT(X)$ is a
mapping from $\cD(Pa(X))$ (i.e., assignments to $X$'s parent
nodes) to strict partial orders over $\cD(X)$.

\item ${\mathsf{cit}}$ associates with every ${\mathsf{ci}}$-arc
  $\gamma = (X_i,X_j)$ a (possibly partial)
  mapping $CIT(\gamma)$ from
   $\cD\left(\cS(X_i,X_j)\right)$
   to orders over the set $\{X_i,X_j\}$.%
   \footnote{That is, the relative importance relation between
   $X_i$ and $X_j$ may be specified only for certain values of the
   selector set.}
\end{enumerate}
\end{definition}

A TCP-net in which the sets ${\mathsf{i}}$ and ${\mathsf{ci}}$ (and therefore also
${\mathsf{cit}}$) are empty, is also a CP-net. Thus, it is
the elements ${\mathsf{i}}$, ${\mathsf{ci}}$, and
${\mathsf{cit}}$ that describe absolute and conditional importance of attributes provided by TCP-nets,
beyond the conditional preference information captured by CP-nets.

\subsection{TCP-net Semantics}
\label{S:semantics} 
The semantics of a \TCPn\ is  defined in terms of the set of strict partial orders
consistent with the set of constraints imposed by the preference
and importance information captured by this TCP-net. The intuitive
idea is rather straightforward:
(1) A strict partial order
$\succ$ satisfies the conditional preferences for variable $X$ if
any two complete assignments that differ only on the value of $X$ are
ordered by $\succ$ consistently with the ordering on $X$ values
in the CPT of $X$. Recall that this ordering can depend on the
parent of $X$ in the graph. (2) A strict partial order $\succ$
satisfies the assertion that $X$ is more important than $Y$ if
given any two complete assignments that differ on the value of $X$
and $Y$ only, $\succ$ prefers that assignment which provides $X$
with a better value. (3) A strict partial order $\succ$ satisfies
the assertion that $X$ is more important than $Y$ given some
assignment $z$ to variable set $Z$ if given any two complete
assignments that differ on the value $X$ and $Y$ only, and in
(both of) which $Z$ is assigned $z$, $\succ$ prefers that
assignment which provides $X$ with a better value.

This is defined more formally below. We use $\succ^{X}_{\bu}$ to
denote the preference relation over the values of $X$ given an
assignment $\bu$ to $\bU\supseteq Pa(X)$.

\begin{definition}
  \label{def:tcp-satisf}
  Consider a TCP-net $\cN = \langle \bV,{\sf cp},{\sf i},{\sf
  ci},{\sf cpt},{\sf cit} \rangle$.
\begin{enumerate}
\item Let $\bW = \bV - (\{X\} \cup Pa(X))$ and let $\bp\in\cD(Pa(X))$.
  A preference (=strict partial) order $\succ$ over $\cD(\bV)$ satisfies $\succ^X_{\bp}$ iff $x_i \bp
  \bw \succ x_j \bp \bw$, for every $\bw\in\cD(\bW)$, whenever
  $x_i \succ^X_{\bp} x_j$ holds.

\item A preference order $\succ$ over $\cD(\bV)$ satisfies $CPT(X) \in {\sf cpt}$  iff it
  satisfies $\succ^X_{\bp}$ for every assignment $\bp$ of $Pa(X)$.

\item A preference order $\succ$ over $\cD(\bV)$ satisfies $X \imp Y$ iff for every
  $\bw\in\cD(\bW)$ such that $\bW=\bV-\{X,Y\}$, 
  $x_i y_a \bw \succ x_j y_b  \bw$ whenever
  $x_i \succ^X_{\bw} x_j$.

\item A preference order $\succ$ over $\cD(\bV)$ satisfies $X \imp_{\bz}Y$ iff for
  every $\bw\in\cD(\bW)$ such that $\bW=\bV-(\{X,Y\}\cup \bZ)$, 
  $x_i y_a \bz\bw \succ x_j y_b \bz\bw$ whenever $x_i \succ^X_{\bz\bw} x_j$.

\item A preference order $\succ$ over $\cD(\bV)$ satisfies $CIT(\gamma)$ of the
${\mathsf{ci}}$-arc
  $\gamma=\ciarc{X}{Y} \in {\sf cit}$
  if it satisfies $X\imp_{\bz}Y$ whenever an
  entry in the table conditioned on $\bz$ ranks $X$ as more important.
\end{enumerate}

A {\em preference order $\succ$ over $\cD(\bV)$ satisfies a
TCP-net}
  $\cN = \langle \bV,{\sf cp},{\sf i},{\sf
  ci},{\sf cpt},{\sf cit} \rangle$ iff:
\begin{enumerate}[(1)]
\item for every $X \in \bV$, $\succ$ satisfies $CPT(X)$,
\item for every ${\mathsf{i}}$-arc $\beta = \iarc{X_i}{X_j} \in {\sf i}$, $\succ$ satisfies $X \imp Y$, and
\item for every ${\mathsf{ci}}$-arc $\gamma = \ciarc{X_i}{X_j} \in {\sf ci}$, $\succ$ satisfies $CIT(\gamma)$.
\end{enumerate}
\end{definition}

\begin{definition}
  \label{def:tcp-satisf2}
  A \TCPn\ is {\em satisfiable} iff there is some strict partial order $\succ$ over $\cD(\bV)$ that
  satisfies it; $o\succ o'$ is {\em implied\/} by a \TCPn\ $\cN$ iff it holds
  in all preference orders over $\cD(\bV)$ that satisfy $\cN$.
\end{definition}

\begin{lemma}
  \label{l:tcp-transitive}
  Preferential entailment with respect to a satisfiable TCP-net is transitive.
  That is, if $\cN \models o\succ o'$ and $\cN \models o'\succ o''$, then $\cN
  \models o\succ o''$.
\end{lemma}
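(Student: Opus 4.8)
The plan is to unfold the definition of preferential entailment and reduce the claim to the transitivity of each individual order that satisfies $\cN$. By Definition~\ref{def:tcp-satisf2}, the statement $\cN \models o \succ o'$ means precisely that $o \succ o'$ holds in \emph{every} strict partial order $\succ$ over $\cD(\bV)$ that satisfies $\cN$. Thus I would first restate the two hypotheses in this form: in every satisfying order we have both $o \succ o'$ and $o' \succ o''$.

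Next I would fix an arbitrary preference order $\succ$ satisfying $\cN$ and argue within it. The first hypothesis gives $o \succ o'$ and the second gives $o' \succ o''$ in this particular order. Since $\succ$ is, by the definition adopted in Section~\ref{S:backg}, a strict partial order and hence transitive, $o \succ o''$ follows at once. As the order was chosen arbitrarily among those satisfying $\cN$, the relation $o \succ o''$ holds in all of them, which is by definition $\cN \models o \succ o''$.

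I do not expect a genuine obstacle here: the content of the argument is simply that entailment is a universal quantification over satisfying orders---equivalently, the intersection of a family of transitive relations---and that any such intersection is again transitive. The satisfiability hypothesis is not needed for the logical derivation (were $\cN$ unsatisfiable, every comparison would be vacuously entailed and the claim would still hold), but I would point out that it is what makes the lemma non-vacuous, guaranteeing that the family of satisfying orders is non-empty so that the entailment relation is itself a bona fide strict partial order.
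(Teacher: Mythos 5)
Your argument is correct and follows the same route as the paper's own proof: unfold entailment as universal quantification over satisfying orders, then invoke the transitivity of each individual strict partial order. Your added remark that satisfiability is only needed to keep the lemma non-vacuous is a fair observation, but the core reasoning is identical.
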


\begin{proof}
  If $\cN \models o\succ o'$ and $\cN \models o'\succ o''$, then
  $o\succ o'$ and $o'\succ o''$ in all preference orders satisfying $N$.
  As each of these ordering is transitive, we must have $o\succ o''$
  in all satisfying orderings.
\end{proof}

Note that, strictly speaking, we should use the term ``satisfiable'' rather
than ``consistent'' with respect to a set of preference statements, given that
we provide a model theory, and not a proof theory. However, since
the corresponding proof theory follows in a completely straightforward manner from our
semantics combined with transitivity, this raises no problem.

\subsection{TCP-net Examples}

Having provided the formal specification of the TCP-nets model,
let us now illustrate TCP-nets with a few examples.
For simplicity of presentation, in the following examples all variables are binary, although the semantics of \TCPns\ is
given by Definitions~\ref{def:tcp-satisf} and~\ref{def:tcp-satisf2} with respect to arbitrary finite domains.

\begin{example}[{\sf Evening Dress}]
  \label{e:example1}
  Figure~\ref{f:example1}(a) presents a CP-net that consists of three
  variables $J$, $P$, and $S$, standing for the jacket, pants, and
  shirt, respectively.  I prefer black to white as a color
  for both the jacket and the pants, while my preference for the shirt
  color (red/white) is conditioned on the {\em color combination} of
  jacket and pants: If they are of the same color, a white shirt will
  make my dress too colorless, therefore, red shirt is preferable.
  Otherwise, if the jacket and the pants are of different colors, a
  red shirt will probably make my evening dress too flashy, therefore,
  a white shirt is preferable. The solid lines in
  Figure~\ref{f:example1}(c) show the preference relation induced
  directly by the information captured by this CP-net; The top and the
  bottom elements are the worst and the best outcomes, respectively,
  and the arrows are directed from less preferred to more preferred
  outcomes.

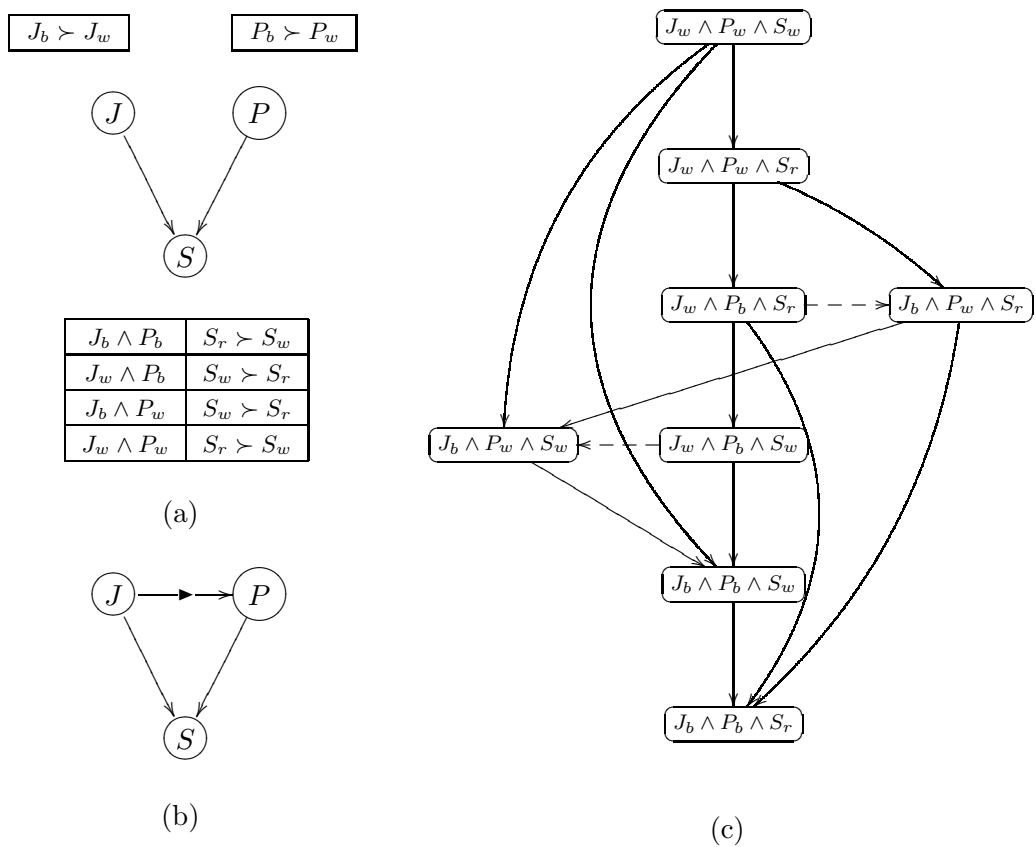
\begin{figure}[ht]
    \mbox{
    \begin{tabular}{cc}
      \begin{minipage}{2in}
      \begin{tabular}{ccc}
        \fbox{
          {\footnotesize
          $J_b \succ J_w$
          }} & &
        \fbox{
          {\footnotesize
          $P_b \succ P_w$
          }}\\
        \ \\
      \multicolumn{3}{c}{
      \def\objectsizestyle{\scriptstyle}
      \xymatrix @R=35pt@C=8pt{
        *++[o][F]{J} \ar[dr] & & *++[o][F]{P}\ar[dl]\\
        & *++[o][F]{S}
        }}\\
        \ \\
        \multicolumn{3}{c}{
          {\footnotesize
            \setlength{\extrarowheight}{2pt}
        \begin{tabular}{|c|c|}
          \hline
          $J_b \wedge P_b$ & $S_r \succ S_w$\\
          \hline
          $J_w \wedge P_b$ & $S_w \succ S_r$\\
          \hline
          $J_b \wedge P_w$ & $S_w \succ S_r$\\
          \hline
          $J_w \wedge P_w$ & $S_r \succ S_w$\\
          \hline
        \end{tabular}
        }}\\
        \ \\
        & (a)\\
        \ \\
        \multicolumn{3}{c}{
        \def\objectsizestyle{\scriptstyle}
        \xymatrix @R=35pt@C=8pt{
          *++[o][F]{J} \ar[dr] \ar[rr]|{\RHD} & & *++[o][F]{P}\ar[dl]\\
          & *++[o][F]{S}
          }}\\
        \ \\
        & (b)\\
      \end{tabular}
      \end{minipage} &
      \begin{minipage}{1.5in}
        \begin{tabular}{c}
          {\scriptsize
          \def\objectsizestyle{\scriptstyle}
          \xymatrix @R=40pt@C=30pt{
            & *+[F-:<3pt>]{J_w \wedge P_w \wedge S_w} \ar[d]
\ar@/_2pc/[dddl]
              \ar@/_4.5pc/[dddd]\\
            & *+[F-:<3pt>]{J_w \wedge P_w \wedge S_r} \ar[d]
\ar@/^0.5pc/[dr]\\
            & *+[F-:<3pt>]{J_w \wedge P_b \wedge S_r} \ar[d]
\ar@/^2.7pc/[ddd] \ar@{-->}[r]&
              *+[F-:<3pt>]{J_b \wedge P_w \wedge S_r} \ar@/^1.5pc/[dddl]
\ar[dll] \\
            *+[F-:<3pt>]{J_b \wedge P_w \wedge S_w} \ar[dr] &
              *+[F-:<3pt>]{J_w \wedge P_b \wedge S_w} \ar[d] \ar@{-->}[l] \\
            & *+[F-:<3pt>]{J_b \wedge P_b \wedge S_w} \ar[d]\\
            & *+[F-:<3pt>]{J_b \wedge P_b \wedge S_r}
            }}\\
           \ \\
           \ \\
           (c)
        \end{tabular}
      \end{minipage}
    \end{tabular}
    }
  \caption{``Evening Dress'' CP-net \& TCP-net.} \label{f:example1}
\end{figure}

  Figure~\ref{f:example1}(b) depicts a \TCPn\ that extends
  this CP-net by adding an ${\mathsf{i}}$-arc from $J$ to $P$, i.e.,
  having black jacket is (unconditionally) more important than having black
  pants. This induces additional relations among outcomes, captured by
  the dashed lines in Figure~\ref{f:example1}(c). $\Diamond$
\end{example}

The reader may rightfully ask whether the statement of importance
in Example~\ref{e:example1} is not redundant: According to my
preference, it seems that I will always wear a black suit with a
red shirt. However, while my preferences are clear, various
constraints may make some outcomes, including the most preferred
one, infeasible. For instance, I may not have a clean black
jacket, in which case the most preferred feasible alternative is a
white jacket, black pants, and a white shirt. Alternatively,
suppose that the only clean clothes I have are velvet black
jacket and white pants, and silk white jacket and black pants. My
wife forbids me to mix velvet and silk, and so I will have to
compromise, and to wear either the black (velvet) jacket with the
white (velvet) pants, or the white (silk) jacket with the black
(silk) pants. In this case, the fact that I prefer wearing the
preferred jacket to wearing the preferred pants determines higher
desirability for the velvet combination. Now, if my wife has to
prepare my evening dress while I am late at work writing a paper,
having this information will help her to choose among the {\em
available\/}  options an outfit that I would like most.

Indeed, as noted earlier, many applications involve 
limited resources, such as money, time, bandwidth, memory, etc. In
many instances, the optimal assignment violates these resource
constraints, and we must compromise and accept a less desirable,
but feasible assignment. TCP-nets capture information that allows
us make more informed compromises.

\begin{example}[{\sf Flight to the USA}]
  \label{e:example2}
  Figure~\ref{f:example2}(a) illustrates a more complicated CP-net,
  describing my preference over the flight options to a conference in
  the USA, from Israel. This network consists of five variables, standing
  for various parameters of the flight:
  \begin{description}
  \item[\underline{D}ay of the Flight] The variable $D$ distinguishes
    between flights leaving a day ($D_{1d}$) and two days ($D_{2d}$)
    before the conference, respectively.  Since I am married, and I am
    really busy with my work, I prefer to leave on the day before the
    conference.

  \item[\underline{A}irline] The variable $A$ represents the airline.
    I prefer to fly with British Airways ($A_{ba}$) than with KLM
($A_{klm}$).

  \item[Departure \underline{T}ime] The variable $T$ distinguishes
    between morning/noon ($T_m$) and evening/night ($T_n$) flights.
    Among flights leaving two days before the conference I prefer
    an evening/night flight, because it will allow me to work
    longer on the day of the flight.
    However, among flights leaving a day before the
    conference I prefer a morning/noon flight, because I would like to
    have a few hours before the conference opening in order to     rest at the hotel.

  \item[\underline{S}top-over] The variable $S$ distinguishes between direct
$(S_{0s})$
    and indirect $(S_{1s})$ flights, respectively.
    On day flights I am awake most of the time and, being a smoker, prefer
    a stop-over in Europe (so I can have a smoking break).
    However, on night flights I sleep, leading to a preference for direct
    flights, since they are shorter.

  \item[Ticket \underline{C}lass] The variable $C$ stands for ticket class.
    On a night flight, I prefer to sit in economy class
    ($C_e$) (I don't care where I sleep, and these seats are
    significantly cheaper), while on a day flight I prefer to pay for
    a seat in business class ($C_b$) (Being awake, I can better
    appreciate the good seat, food, and wine).
  \end{description}

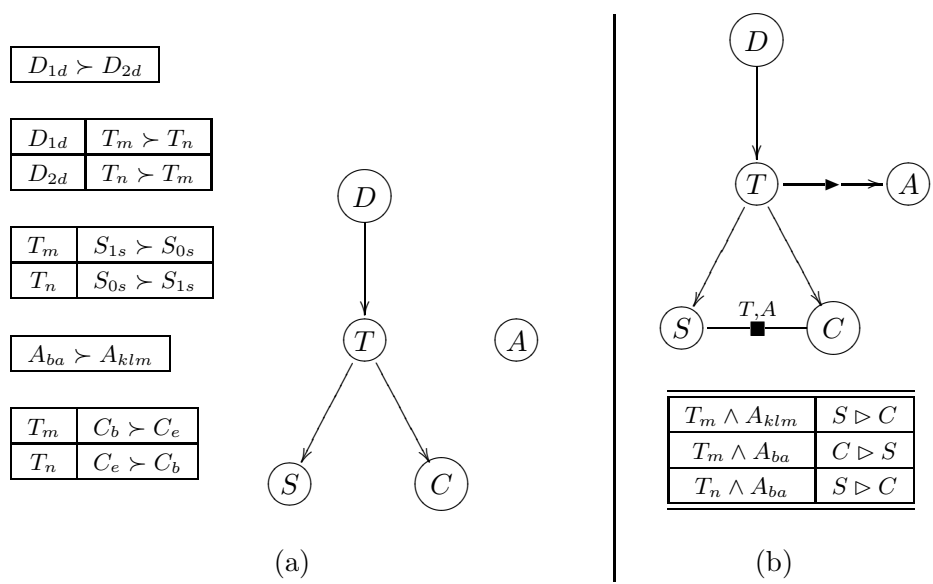
\begin{figure}[ht]
    \begin{center}
      \begin{tabular}{c|c}
        \begin{minipage}{3.2in}
        \begin{tabular}{ll}
          {\footnotesize
            \setlength{\extrarowheight}{2pt}
          \begin{tabular}{|c|}
            \hline
            $D_{1d} \succ D_{2d}$\\
            \hline
          \end{tabular}
          } &
          \multirow{5}{2in}
          {
            \def\objectsizestyle{\scriptstyle}
            \xymatrix @R=35pt@C=8pt{
              \\
              & *++[o][F]{D} \ar[d]\\
              & *++[o][F]{T} \ar[dl] \ar[dr] & & *++[o][F]{A}\\
              *++[o][F]{S} & & *++[o][F]{C}
              }
            }\\
          \ \\
          {\footnotesize
            \setlength{\extrarowheight}{2pt}
          \begin{tabular}{|c|c|}
            \hline
            $D_{1d}$ & $T_m \succ T_n$\\
            \hline
            $D_{2d}$ & $T_n \succ T_m$\\
            \hline
          \end{tabular}
          }\\
          \ \\
          {\footnotesize \setlength{\extrarowheight}{2pt}
          \begin{tabular}{|c|c|}
            \hline
            $T_m$ & $S_{1s} \succ S_{0s}$\\
            \hline
            $T_n$ & $S_{0s} \succ S_{1s}$\\
            \hline
          \end{tabular}
          }\\
          \ \\{\footnotesize
            \setlength{\extrarowheight}{2pt}
          \begin{tabular}{|c|}
            \hline
            $A_{ba} \succ A_{klm}$\\
            \hline
          \end{tabular}
          } \\
          \ \\
          {\footnotesize \setlength{\extrarowheight}{2pt}
          \begin{tabular}{|c|c|}
            \hline
            $T_m$ & $C_b \succ C_e$\\
            \hline
            $T_n$ & $C_e \succ C_b$\\
            \hline
          \end{tabular}
          }
        \end{tabular}
        \end{minipage} & 
        \begin{minipage}{1.5in}
          \begin{tabular}{c}
            \def\objectsizestyle{\scriptstyle}
            \xymatrix @R=35pt@C=8pt{
              & *++[o][F]{D} \ar[d]\\
              & *++[o][F]{T} \ar[dl] \ar[dr] \ar[rr]|{\RHD} & &
*++[o][F]{A}\\
              *++[o][F]{S} \ar@{-}[rr]|{\blacksquare}^{T,A} & & *++[o][F]{C}
              }\\
            \ \\
            {\footnotesize \setlength{\extrarowheight}{2pt}
            \begin{tabular}{|c|c|}
             \hline
              \hline
              $T_m \wedge A_{klm}$ & $S \imp C$\\
              \hline
              $T_m \wedge A_{ba}$ & $C \imp S$\\
              \hline
              $T_n \wedge A_{ba}$ & $S \imp C$\\
              \hline
        \hline
            \end{tabular}
            }
          \end{tabular}
        \end{minipage}\\
        \ \\
        (a) & (b)
      \end{tabular}
\end{center}
\caption{``Flight to the USA'' CP-net \& TCP-net from
Example~\ref{e:example2}.} \label{f:example2}
\end{figure}

  The CP-net in Figure~\ref{f:example2}(a) captures all these preference
  statements, and the underlying preferential dependencies, while
  Figure~\ref{f:example2}(b) presents a \TCPn\ that extends this
  CP-net to capture relative importance relations between some
  parameters of the flight. First, there is an ${\mathsf{i}}$-arc from
  $T$ to $A$, because getting more suitable flying time is more
  important to me than getting the preferred airline.
  Second, there is a ${\mathsf{ci}}$-arc between $S$ and $C$, where the
  relative importance of $S$ and $C$ depends on the values of $T$ and $A$:\footnote{For clarity, the ${\mathsf{ci}}$-arc in Figure~\ref{f:example2}(b) is {\em schematically} labeled with its importance-conditioning variables $T$ and $A$.}
  \begin{enumerate}
  \item On a KLM day flight, an intermediate
    stop in Amsterdam is more important to me than flying
    business class (I feel that KLM's business class does not have a
    good cost/performance ratio, while visiting a casino in
    Amsterdam's airport sounds to me like a good idea.)

  \item For a British Airways night flight, the fact that
    the flight is direct is more important to me than getting a cheaper
economy
    seat (I am ready to pay for business class,
    in order not to spend even one minute at Heathrow airport at night).

  \item On a British Airways day flight,
    business class is more important to me than having a
    short intermediate break (it is hard to find a nice smoking
    area at Heathrow).
  \end{enumerate}

  The CIT of this ${\mathsf{ci}}$-arc is also shown in
  Figure~\ref{f:example2}(b). $\Diamond$
\end{example}

\begin{figure}[ht]
\begin{center}
\begin{minipage}{\textwidth}
\begin{tabular}{rcl}
                {\footnotesize
                \setlength{\extrarowheight}{2pt}
          \begin{tabular}{|c|}
            \hline
            $D_{1d} \succ D_{2d}$\\
            \hline
          \end{tabular}
          }
          \vspace{1.5cm}
      &
         \multirow{3}{1.3in}{
         \mbox{
            \xymatrix @R=35pt@C=15pt{
              *++[o][F]{D} \ar[d]  & & *++[o][F]{A} \ar[ddl]\\
              *++[o][F]{T} \ar[dr] \\
               & *++[F-]{SC}
              }
          }
         }
      &
         {\footnotesize
            \setlength{\extrarowheight}{2pt}
          \begin{tabular}{|c|}
            \hline
            $A_{ba} \succ A_{klm}$\\
            \hline
          \end{tabular}
          }
      \\
          {\footnotesize
            \setlength{\extrarowheight}{2pt}
          \begin{tabular}{|c|c|}
            \hline
            $D_{1d}$ & $T_m \succ T_n$\\
            \hline
            $D_{2d}$ & $T_n \succ T_m$\\
            \hline
          \end{tabular}
          }
          \vspace{1.5cm}
       & &
       \multirow{2}{2in}{
       {\footnotesize \setlength{\extrarowheight}{2pt}
          \begin{tabular}{|c|c|}
            \hline
            $T_m \wedge A_{ba}$ & $S_{1s}C_{b} \succ S_{0s}C_{b} \succ S_{1s}C_{e} \succ S_{0s}C_{e}$\\
            \hline
            $T_m \wedge A_{klm}$ & $S_{1s}C_{b}  \succ S_{1s}C_{e} \succ S_{0s}C_{b} \succ S_{0s}C_{e}$\\
            \hline
            $T_{n} \wedge A_{ba}$ & $S_{0s}C_{e} \succ S_{0s}C_{b} \succ S_{1s}C_{e} \succ S_{1s}C_{b}$\\
            \hline
            $T_{n} \wedge A_{ba}$ &
            $S_{0s}C_{e} \succ S_{0s}C_{b} \succ S_{1s}C_{b}$\\
             & $S_{0s}C_{e} \succ S_{1s}C_{e} \succ S_{1s}C_{b}$\\
            \hline
          \end{tabular}
          }
       }\\
       & & \\
\end{tabular}
\end{minipage}
\end{center}
\caption{The network obtained by clustering variables $S$ and $C$ in Example~\ref{e:example2}.}
\label{fig:cluster}
\end{figure}
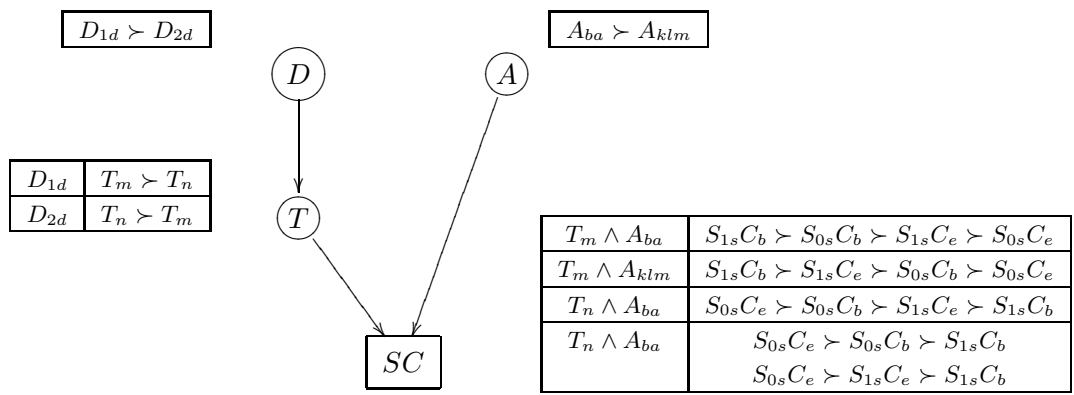

\subsection{Relative Importance with Non-binary Variables}
\label{ss:relaxing}
Having read so far, the reader may rightfully ask
whether the notion of relative (conditional) importance {\em ceteris
  paribus}, as specified in Section~\ref{S:importance} (Eq.~\ref{e:ri} and~\ref{e:cri}), is not
too strong when the variables are not binary. For example, consider a
more refined notion of departure time (variable $T$) in Example~\ref{e:example2}, and suppose there
are more than two companies flying from Israel to the USA (variable
$A$). In this case, one may prefer a better flight time, even
if this requires a compromise in the airline, {\em as long as} this
compromise is not too significant. For instance, to get a better flight time, one may be willing
to compromise and accept any airline but only among those she ranks in the top $i$ places in
this context.

More generally, our notion of importance, as well as some more refined notions of it, 
are really means of specifying an ordering
over assignments to variable pairs. In a sense, one could reduce
TCP-nets into CP-nets 
by combining variables between which we have an importance relation.
Thus, for instance, in the
``Flight to the USA'' example, we could combine the variables $S$
and $C$ (see Figure~\ref{fig:cluster}). The resulting variable, $SC$ will have as its domain the
Cartesian product of the domains of $S$ and $C$. The preferences
for the values of $SC$ are now conditioned on $T$, the current
parent of $S$ and $C$, as well as on $A$, which belongs to the
selector set of their CIT. In general, the selector set (and parents
of) a pair of variables can be viewed as conditioning the
preferences over the value combinations for this pair.
Hence, such clustering can help us already in the case of binary
variables as certain orderings over the assignments to two binary variables
cannot be specified with a TCP-net. However, this
is clearly more of an issue in the case of non-binary variables, where
the number of combinations of pairs of values is much larger.

The bottom line is that more complex importance relations between
pairs of variables can be captured. The main questions is how.
The strict importance relation we use captures certain such
relations in a very compact manner. As such, its specification
(e.g., in terms of natural language statements) is very easy.
This does not rule out the possibility of expressing more refined
relations. Various linguistic constructs could be used to express
such relations. However, technically, they can all be captured by
clustering the relevant variables, and the resulting
representation would be a TCP-net, or possibly simply a CP-net.
Of course, it is quite possible that some relations have an
alternative compact representation that could help make reasoning
with them more efficient than simply collapsing them, and this can
be a useful question for future research to examine.

\commentout{
The most compact form of capturing such information will be to maintain more refined CITs. In such a CIT,
the relative importance of two variables $X$ and $Y$ will be
conditioned not only on the values of some selector variables, but also
on the actual values of $X$ and $Y$. In other words, the (conditional) preference ranking will be provided
over the elements of  $\cD(XY)$. Schematically, the reader may view this construction simply as {\em clustering} of $X$ and $Y$, that is, representing $X$ and $Y$ by a single variable having $\cD(X)\times\cD(Y)$ as its domain.
In turn, clustering all pairs of variables related by a refined relative importance relation as above results in a TCP-net as in Definition~\ref{d:tcpnet}, and thus considering only strict notions of importance as in Definitions~\ref{d:p3} and~\ref{d:p4} is conceptually without loss of generality.
Practically, however, the fact that this clustering is only schematic is important, because the preferences over the values of $X$ and $Y$ could still be expressed and captured independently.
}

\section{Conditionally Acyclic TCP-nets}
\label{S:acyclic}

Returning to the notion of TCP-net satisfiability, observe that
Definition~\ref{def:tcp-satisf2} provides no practical tools for
verifying satisfiability of a given TCP-net.
Tackling this issue, in this section we introduce a large
class of TCP-nets whose members are guaranteed to be satisfiable.
We refer to this class of TCP-nets as {\em
  conditionally acyclic}. 

Let us begin with the notion of the dependency graph induced by a
\TCPn.

\begin{figure}[t]
  \begin{center}
    \hbox{
    \begin{tabular}{cc}
      \begin{minipage}{1in}
        \begin{tabular}{c}
          {\footnotesize
          \def\objectsizestyle{\scriptstyle}
          \xymatrix @R=35pt@C=8pt{
            & *++[o][F]{D} \ar[d]\\
            & *++[o][F]{T} \ar[dl] \ar[dr] \ar[rr]|{\RHD} & & *++[o][F]{A}\\
            *++[o][F]{S} \ar@{-}[rr]|{\blacksquare}^{T,A} & & *++[o][F]{C}
            }
          }\\
          \ \\
          {\scriptsize \setlength{\extrarowheight}{2pt}
            \begin{tabular}{|c|c|}
              \hline
              \hline
              $T_m \wedge A_{klm}$ & $S \imp C$\\
              \hline
              $T_m \wedge A_{ba}$ & $C \imp S$\\
              \hline
              $T_n \wedge A_{ba}$ & $S \imp C$\\
              \hline
              \hline
            \end{tabular}
            }
        \end{tabular}
      \end{minipage} &
      \begin{minipage}{1.5in}
        \begin{tabular}{c}
          {\footnotesize
          \def\objectsizestyle{\scriptstyle}
          \xymatrix @R=35pt@C=8pt{
            & *++[o][F]{D} \ar[d]\\
            & *++[o][F]{T} \ar[dl] \ar[dr] \ar[rr] & & *++[o][F]{A} \ar[dl]
\ar[dlll]\\
            *++[o][F]{S} \ar@{-}[rr] & & *++[o][F]{C}
            }
          }
        \end{tabular}\\
        \ \\
        \ \\
        \ \\
      \end{minipage}\\
      \ \\
      (a) & (b)\\
      \ \\
      \multicolumn{2}{c}{
      \begin{tabular}{cccc}
        \begin{minipage}{1.2in}
          \begin{tabular}{c}
            {\footnotesize
              \def\objectsizestyle{\scriptstyle}
              \xymatrix @R=15pt@C=5pt{
                & *+[o][F]{D} \ar[d]\\
                & *+[o][F]{T} \ar[dl] \ar[dr] \ar[rr] & & *+[o][F]{A} \ar[llld] \ar[ld]\\
                *+[o][F]{S} \ar[rr] & & *+[o][F]{C}
              }
            }
          \end{tabular}
        \end{minipage} &
        \begin{minipage}{1.2in}
          \begin{tabular}{c}
            {\footnotesize
              \def\objectsizestyle{\scriptstyle}
              \xymatrix @R=15pt@C=5pt{
                & *+[o][F]{D} \ar[d]\\
                & *+[o][F]{T} \ar[dl] \ar[dr] \ar[rr] & & *+[o][F]{A} \ar[llld] \ar[ld]\\
                *+[o][F]{S} & & *+[o][F]{C}
              }
            }
          \end{tabular}
        \end{minipage} &
        \begin{minipage}{1.2in}
          \begin{tabular}{c}
            {\footnotesize
              \def\objectsizestyle{\scriptstyle}
              \xymatrix @R=15pt@C=5pt{
                & *+[o][F]{D} \ar[d]\\
                & *+[o][F]{T} \ar[dl] \ar[dr] \ar[rr] & & *+[o][F]{A} \ar[llld] \ar[ld]\\
                *+[o][F]{S} & & *+[o][F]{C} \ar[ll]
              }
            }
          \end{tabular}
        \end{minipage} &
        \begin{minipage}{1.2in}
          \begin{tabular}{c}
            {\scriptsize
              \def\objectsizestyle{\scriptstyle}
              \xymatrix @R=15pt@C=3pt{
                & *+[o][F]{D} \ar[d]\\
                & *+[o][F]{T} \ar[dl] \ar[dr] \ar[rr] & & *+[o][F]{A} \ar[llld] \ar[ld]\\
                *+[o][F]{S} & & *+[o][F]{C} \ar[ll]
              }
            }
          \end{tabular}
        \end{minipage}\\
        {\scriptsize $(T_m\wedge A_{klm})$-directed} &
        {\scriptsize $(T_n\wedge A_{klm})$-directed} &
        {\scriptsize $(T_m\wedge A_{ba})$-directed} &
        {\scriptsize $(T_n\wedge A_{ba})$-directed}\\
        \ \\
      \multicolumn{4}{c}{
        (c)
        }
      \end{tabular}
      }
    \end{tabular}
    }
  \end{center}
  \vspace{-0.5cm}
  \caption{(a) ``Flight to USA'' TCP-net. (b) Its dependency graph. (c), Four
    $\bw$-directed graphs.}
  \label{fig:dependency-flight}
\end{figure}
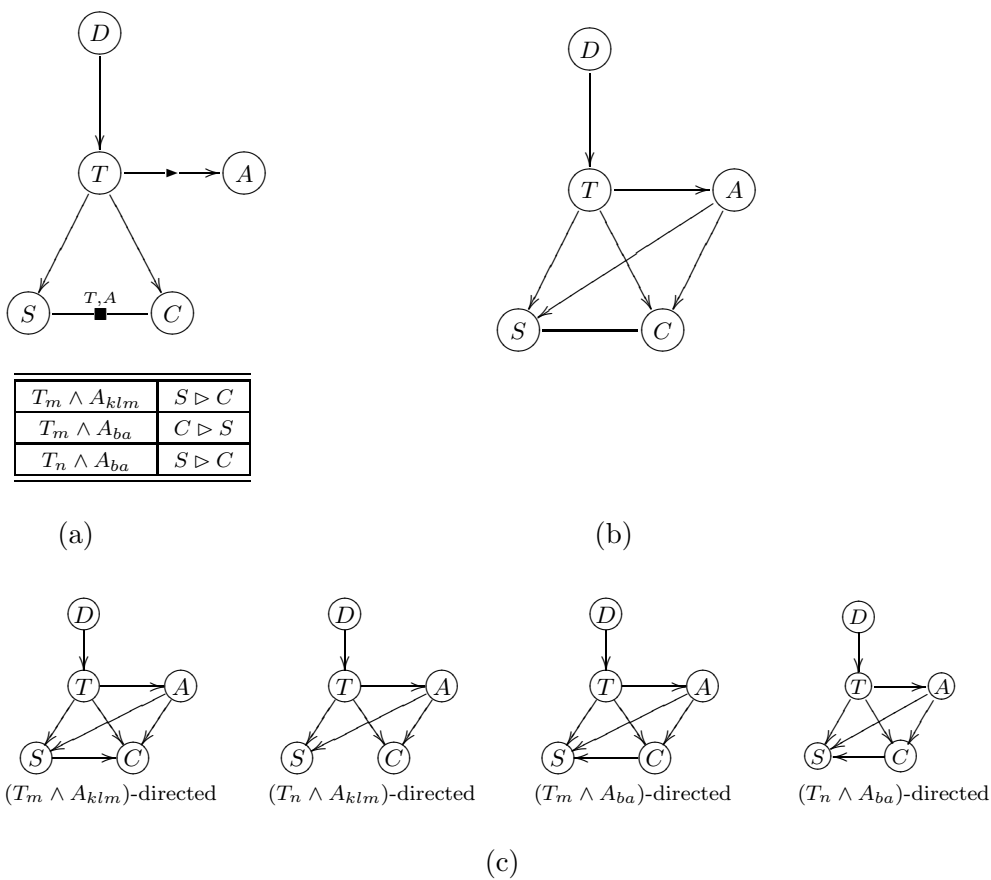

\begin{definition}
  \label{def:tcp-dependency}
  %
  \commentout{
  The {\em dependency graph\/} $\cNN$ of TCP-net $\cN$ contains all the nodes
  and edges of $\cN$, as well as a pair of directed edges $(X_k,X_i)$ and
  $(X_k,X_j)$ for every ${\mathsf{ci}}$-arc $\ciarc{X_i}{X_j}$ in
  $\cN$ and every $X_k\in\cS(X_i,X_j)$.
  }
  The {\em dependency graph\/} $\cNN$ of TCP-net $\cN$ contains all the nodes
  and edges of $\cN$. Additionally, for every ${\mathsf{ci}}$-arc $\ciarc{X_i}{X_j}$ in
  $\cN$ and every $X_k\in\cS(X_i,X_j)$, $\cNN$ contains  a pair of directed edges $(X_k,X_i)$ and
  $(X_k,X_j)$, if these edges are not already in $\cN$.
\end{definition}

Figure~\ref{fig:dependency-flight}(b) depicts the dependency
graph of the TCP-net from the ``Flight to USA'' example, repeated
for convenience in Figure~\ref{fig:dependency-flight}(a). For the
next definition, recall that the {\em selector set\/} of a
${\mathsf{ci}}$-arc is the set of nodes whose value determines the
``direction'' of this arc. Recall also, that once we assign a
value to the selector set, we are, in essence, orienting all the
conditional importance edges. More generally, once all selector
sets are assigned, we transform both $\cN$ and $\cNN$. This
motivates the following definition.

\begin{definition}
  \label{def:tcp-w-directed}
  \commentout{
Let $\cS(\cN)$ be the union of all selector sets of $\cN$.  Given an
assignment $\bw$ to all nodes in $\cS(\cN)$, the {\em \bw-directed graph}
of  $\cNN$ consists of all the nodes and directed edges of $\cNN$,
extended with an edge from $X_i$ to $X_j$ if $(X_i,X_j)$ is a ${\mathsf{ci}}$-arc of $\cN$
and the CIT for $(X_i,X_j)$ specifies that $X_i \imp X_j$ given $\bw$.
}
Let $\cS(\cN)$ be the union of all selector sets of $\cN$.  Given an
assignment $\bw$ to all nodes in $\cS(\cN)$, the {\em \bw-directed graph}
of  $\cNN$ consists of all the nodes and directed edges of $\cNN$. In addition it has
a directed edge from $X_i$ to $X_j$ if such an edge is not already in
$\cNN$, and $(X_i,X_j)$ is a ${\mathsf{ci}}$-arc of $\cN$
and the CIT for $(X_i,X_j)$ specifies that $X_i \imp X_j$ given $\bw$.
\end{definition}

Figure~\ref{fig:dependency-flight}(c) presents all the four $\bw$-directed
graphs of the TCP-net from the ``Flight to USA'' example. Note that,
for the KLM night flights, the relative importance
of $S$ and $C$ is not specified, thus there is no edge between $S$ and $C$
in the $(T_n\wedge A_{klm})$-directed graph of $\cNN$.

Using Definitions~\ref{def:tcp-dependency}
and~\ref{def:tcp-w-directed}, we specify the class of
conditionally acyclic \TCPns, and show that it is
satisfiable\footnote{The authors would like to thank Nic Wilson
for pointing out an error in the original definition of
conditionally acyclic TCP-nets in~\cite{Brafman:Domshlak:uai02}.}.

\begin{definition}
\label{d:adag}
A \TCPn\ $\cN$ is {\em conditionally acyclic} if, for every assignment $\bw$ to $\cS(\cN)$,
the induced $\bw$-directed graphs of $\cNN$ are acyclic.
\end{definition}

We now show that every conditionally acyclic \TCPn\ is satisfiable,
and begin with providing two auxiliary lemmas.

\begin{lemma}
\label{l:aux1}
The property of conditional acyclicity of TCP-nets is hereditary. That is, given two TCP-nets $\cN = \langle \bV,{\sf cp},{\sf i},{\sf
  ci},{\sf cpt},{\sf cit} \rangle$ and $\cN' = \langle \bV',{\sf cp}',{\sf i}',{\sf  ci}',{\sf cpt}',{\sf cit}' \rangle$, if
\begin{enumerate}
  \item $\cN$ is conditionally acyclic, and
  \item $\bV' \subseteq \bV$, ${\sf cp}'\subseteq {\sf cp}$, ${\sf i}'\subseteq {\sf i}$, ${\sf ci}'\subseteq {\sf ci}$, ${\sf cpt}'\subseteq {\sf cpt}$, ${\sf cit}'\subseteq {\sf cit}$,
\end{enumerate}
then  $\cN'$ is also conditionally acyclic.
\end{lemma}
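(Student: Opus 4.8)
The plan is to prove heredity by a straightforward subgraph argument: I will show that for every assignment to the selector variables of $\cN'$ there is a corresponding $\bw$-directed graph of $\cNN$ that contains the relevant $\bw'$-directed graph of $(\cN')^\star$ as a subgraph. Acyclicity then transfers downward automatically, since any directed cycle in a subgraph is also a directed cycle in the supergraph.

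First I would fix an arbitrary assignment $\bw'$ to $\cS(\cN')$, the union of the selector sets of $\cN'$. Because ${\sf ci}' \subseteq {\sf ci}$ and each retained ${\sf ci}$-arc keeps its selector set, $\cS(\cN') \subseteq \cS(\cN)$; I would then extend $\bw'$ to an assignment $\bw$ on all of $\cS(\cN)$ by choosing arbitrary values on $\cS(\cN) \setminus \cS(\cN')$. The only fact I will need is that $\bw$ and $\bw'$ agree on every selector variable of $\cN'$.

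Next I would check edge-by-edge that the $\bw'$-directed graph of $(\cN')^\star$ is a subgraph of the $\bw$-directed graph of $\cNN$. Its nodes lie in $\bV' \subseteq \bV$. Its ${\sf cp}$-arcs and ${\sf i}$-arcs lie in ${\sf cp}' \subseteq {\sf cp}$ and ${\sf i}' \subseteq {\sf i}$, hence are edges of $\cN$ and so of $\cNN$. For each ${\sf ci}$-arc $(X_i,X_j) \in {\sf ci}' \subseteq {\sf ci}$, the dependency-graph construction adds the same induced pair $(X_k,X_i),(X_k,X_j)$ for each selector $X_k$ in both $(\cN')^\star$ and $\cNN$, since the selector set is shared; and because ${\sf cit}' \subseteq {\sf cit}$, the CIT of $(X_i,X_j)$ is identical in both nets, so it orients the arc $X_i \to X_j$ under $\bw'$ exactly when it does so under $\bw$ (the orientation depends only on the values on the selector set, on which $\bw$ and $\bw'$ coincide). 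Thus every edge of the $\bw'$-directed graph of $(\cN')^\star$ is an edge of the $\bw$-directed graph of $\cNN$. The ``if not already present'' provisos in Definitions~\ref{def:tcp-dependency} and~\ref{def:tcp-w-directed} are immaterial, as they affect only whether a duplicate edge is introduced, not which edges are ultimately present.

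Finally, since $\cN$ is conditionally acyclic, the $\bw$-directed graph of $\cNN$ is acyclic, and a subgraph of an acyclic directed graph is acyclic, so the $\bw'$-directed graph of $(\cN')^\star$ is acyclic. As $\bw'$ was arbitrary, every $\bw'$-directed graph of $(\cN')^\star$ is acyclic, i.e.\ $\cN'$ is conditionally acyclic. The one point requiring care — and the main potential obstacle — is the bookkeeping around selector sets and CITs under the inclusion: I must be sure that restricting to $\cN'$ neither shrinks the selector set of a retained ${\sf ci}$-arc nor alters its CIT, so that the induced and conditionally-directed edges of $(\cN')^\star$ genuinely reappear in $\cNN$ under the extended assignment $\bw$.
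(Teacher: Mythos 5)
Your proof is correct and follows essentially the same route as the paper's: the paper's own proof is a one-sentence observation that passing to a subnet can only delete edges from the $\bw$-directed graphs of $\cNN$, which is exactly the subgraph containment you verify edge-by-edge. Your version is just a more careful elaboration (fixing $\bw'$, extending it arbitrarily to $\bw$, and checking each edge type), and the bookkeeping point you flag about selector sets and CITs is handled correctly under the stated inclusion hypotheses.
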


\begin{proof}
  The proof is straightforward from Definition~\ref{d:adag} since removing nodes and/or edges from $\cN$, as well as removing some preference and importance information from CPTs and CITs of $\cN$, can only remove cycles from the $\bw$-directed graphs of $\cNN$. Hence, if $\cN$ is conditionally acyclic, then so is any subnet of $\cN$.
\end{proof}

\begin{lemma}
\label{l:aux2}
Every conditionally acyclic TCP-net $\cN = \langle \bV,{\sf cp},{\sf i},{\sf ci},{\sf cpt},{\sf cit} \rangle$ contains at least one variable $X \in \bV$, such that, for each $Y \in \bV \setminus \{X\}$, we have $\cparc{Y}{X} \not\in {\sf cp}$, $\iarc{Y}{X} \not\in {\sf i}$, and $\ciarc{X}{Y} \not\in {\sf ci}$.
\end{lemma}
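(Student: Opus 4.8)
The plan is to exhibit the required variable $X$ as a \emph{source} (a node with no incoming arc) of one of the acyclic graphs handed to us by the hypothesis, and then to read off the three desired non-membership conditions from the fact that $X$ is a source of the dependency graph $\cNN$. First I would fix an arbitrary assignment $\bw$ to the selector variables $\cS(\cN)$; such an assignment exists because all domains are finite and non-empty (and if $\cS(\cN)=\emptyset$ we simply take the empty assignment). By Definition~\ref{d:adag}, the induced $\bw$-directed graph of $\cNN$ is acyclic, and every finite acyclic directed graph contains at least one source; let $X$ be such a source.

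Next I would transfer this property back from the $\bw$-directed graph to $\cNN$ itself. By Definition~\ref{def:tcp-w-directed}, the $\bw$-directed graph is obtained from the directed arcs of $\cNN$ by adding only further oriented ${\sf ci}$-arcs, so it contains every directed arc of $\cNN$. Consequently a node with no incoming arc in the $\bw$-directed graph has no incoming directed arc in $\cNN$ either. Since by Definition~\ref{def:tcp-dependency} every ${\sf cp}$-arc and every ${\sf i}$-arc of $\cN$ is a directed arc of $\cNN$, this immediately gives $\cparc{Y}{X}\not\in{\sf cp}$ and $\iarc{Y}{X}\not\in{\sf i}$ for every $Y\in\bV\setminus\{X\}$.

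It then remains to rule out $\ciarc{X}{Y}\in{\sf ci}$ for every $Y$, and this is the step where the argument genuinely uses the construction of $\cNN$. I would argue by contradiction: suppose $X$ participates in some ${\sf ci}$-arc $\gamma$, with $X$ playing the role of one endpoint. By the standing convention on ${\sf ci}$-arcs, and the deliberate separation of ${\sf i}$-arcs from ${\sf ci}$-arcs recorded in the footnote to Definition~\ref{d:tcpnet}, its selector set is non-empty, so we may choose some variable $X_k$ in it. Then Definition~\ref{def:tcp-dependency} places the directed arc $(X_k,X)$ into $\cNN$ (it is added if not already present, and is present in either case), giving $X$ an incoming directed arc in $\cNN$ and contradicting that $X$ is a source there. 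Hence no such ${\sf ci}$-arc exists, and all three conditions hold simultaneously for this single $X$.

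The main obstacle, and the only delicate point, is precisely this last step: the whole argument hinges on every ${\sf ci}$-arc carrying a non-empty selector set, so that mere participation in a ${\sf ci}$-arc forces an incoming arc in $\cNN$. Were ${\sf ci}$-arcs permitted to have empty selector sets, i.e.\ not kept distinct from ${\sf i}$-arcs, a source of the $\bw$-directed graph could still sit at the more-important end of such an arc and the claim would break; the distinction drawn in Definition~\ref{d:tcpnet} is exactly what makes the source variable a genuine ``root''.
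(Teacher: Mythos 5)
Your proof is correct and takes essentially the same route as the paper's: both arguments produce a node with no incoming directed edge (you by taking a source of one acyclic $\bw$-directed graph, the paper by arguing directly on $\cNN$ that such a node must exist or conditional acyclicity fails), and both hinge on the same key point that the non-empty selector set of every ${\sf ci}$-arc forces an incoming directed edge onto each of its endpoints in $\cNN$, so a directed-edge source cannot touch any ${\sf ci}$-arc. No gaps.
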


\begin{proof}
To prove the existence of such a root variable $X \in \cN$,
consider the dependency graph $\cNN$. Since $\cN$ is
conditionally acyclic, there has to be a node $X' \in \cNN$ that
has neither incoming directed nor undirected edges associated
with it. The see the latter, observe that (i)
every endpoint of an undirected edge in $\cNN$ will also have an
incoming directed edge, and (ii) there has to be at least one
node in $\cNN$ with no incoming directed edges, or otherwise the
conditional acyclicity of $\cN$ will be trivially violated.
However, such a node $X'$ will also be a root node in $\cN$ since
the edge set of $\cNN$ is a superset of that of $\cN$.
\end{proof}

\begin{theorem}
\label{l:sat1}
Every conditionally acyclic \TCPn\ is satisfiable.
\end{theorem}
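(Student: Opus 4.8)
The plan is to prove the theorem by induction on the number of variables $n = |\bV|$, peeling off a root variable at each step with Lemma~\ref{l:aux2} and keeping the residual network conditionally acyclic by the monotonicity underlying Lemma~\ref{l:aux1}. For the base case $n=1$, the net $\cN$ is a single node $X$ whose unconditioned $CPT(X)$ is already a strict partial order over $\cD(X)$ satisfying $\cN$. For the inductive step, Lemma~\ref{l:aux2} supplies a root $X$: no ${\sf cp}$-arc or ${\sf i}$-arc enters $X$, and $X$ is the endpoint of no ${\sf ci}$-arc. In particular $Pa(X)=\emptyset$, so $CPT(X)$ is a single unconditioned order $\succ_0$ over $\cD(X)$. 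For each value $x\in\cD(X)$ I would build a reduced TCP-net $\cN^{(x)}$ on $\bV\setminus\{X\}$ by deleting $X$ and all arcs incident to it, replacing $CPT(Y)$ of each child $Y$ of $X$ by its slice at $X=x$, and, for every ${\sf ci}$-arc whose selector set contains $X$, replacing its CIT by the slice at $X=x$ (dropping $X$ from the selector set).

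First I would check that each $\cN^{(x)}$ is conditionally acyclic, so the induction hypothesis applies. Fix an assignment $\bw'$ to $\cS(\cN^{(x)})$ and extend it to $\bw=\bw'\cup\{X=x\}$, an assignment to $\cS(\cN)$. Since $X$ is the endpoint of no ${\sf ci}$-arc, deleting $X$ only removes selector edges emanating from $X$, and each sliced CIT agrees at $\bw'$ with the original CIT at $\bw$; hence the $\bw'$-directed graph of $(\cN^{(x)})^{\star}$ is exactly the subgraph of the $\bw$-directed graph of $\cNN$ induced by $\bV\setminus\{X\}$. Deleting a vertex from an acyclic graph leaves it acyclic, so conditional acyclicity of $\cN$ gives conditional acyclicity of $\cN^{(x)}$ for every $\bw'$. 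By the induction hypothesis each $\cN^{(x)}$ has a satisfying order $\succ_{(x)}$ over $\cD(\bV\setminus\{X\})$, which I identify with the set of outcomes of $\cD(\bV)$ having $X=x$.

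Next I would glue the slice orders together with $X$ given top priority: for outcomes $o,o'$, set $o\succ o'$ iff either $o[X]\succ_0 o'[X]$, or $o[X]=o'[X]=x$ and $o\succ_{(x)}o'$ (here $o[X]$ is the value of $X$ in $o$). A short case analysis over the two clauses shows $\succ$ is irreflexive, antisymmetric, and transitive: the second clause inherits these from each $\succ_{(x)}$, the first from the transitive, antisymmetric $\succ_0$ on the $X$-coordinate, and mixed chains collapse because any chain using the first clause is governed entirely by $\succ_0$ on $X$.

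Finally I would verify that $\succ$ satisfies $\cN$. The key observation is that every constraint of $\cN$ compares two outcomes that either agree on $X$ or do not. Those comparing outcomes that agree on $X=x$ — namely $CPT(Y)$ for $Y\neq X$, ${\sf i}$-arcs not involving $X$, and all ${\sf ci}$-arcs (whose endpoints are never $X$, so the two compared outcomes share a common $X$-value) — restrict within that slice to precisely the corresponding constraints of $\cN^{(x)}$, which $\succ_{(x)}$ satisfies by construction, after noting that slicing $CPT(Y)$ and the CITs at $X=x$ makes the within-slice requirements coincide exactly. The only constraints comparing outcomes with different $X$-values are $CPT(X)$ and the ${\sf i}$-arcs leaving $X$; each demands that improving $X$ from $x_j$ to $x_i$ with $x_i\succ_0 x_j$ be an improvement irrespective of the remaining coordinates, which is exactly the first clause defining $\succ$. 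The main obstacle, and the step to carry out most carefully, is this bookkeeping — confirming that no constraint of $\cN$ forces a cross-slice comparison beyond those handled by the first clause, and that the sliced tables of $\cN^{(x)}$ reproduce the within-slice constraints faithfully; the acyclicity-preservation argument of the second paragraph is then routine once the dependency-graph/$\bw$-directed-graph correspondence is set up correctly.
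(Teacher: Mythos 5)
Your proposal is correct and follows essentially the same route as the paper's proof: induction on the number of variables, peeling off a root variable supplied by Lemma~\ref{l:aux2}, slicing the CPTs and CITs at each value of that root, and gluing the slice orders with the root's own preference order given priority. The only cosmetic differences are that you verify conditional acyclicity of the reduced nets directly from the dependency-graph definition rather than citing the hereditary property of Lemma~\ref{l:aux1}, and you produce a strict partial order where the paper strengthens the inductive hypothesis to a strict total order (by first totalizing the order on $\cD(X)$); both suffice for satisfiability.
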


\begin{proof}
We prove this constructively by building a satisfying preference
ordering. In fact, our inductive hypothesis will be stronger: any
conditionally acyclic \TCPn\ has a strict total order that
satisfies it. The proof is by induction on the number of problem
variables. The result trivially holds for one variable by
definition of CPTs, since we can simply use any strict total
order consistent with its CPT (and trivially satisfying
Definition~\ref{def:tcp-satisf}.)

Assume that the theorem holds for all conditionally acyclic
\TCPns\ with fewer than $n$ variables. Let $\cN$ be a \TCPn\ over
$n$ variables, and $X$ be one of the root variables of $\cN$.
(The existence of such a root $X$ is guaranteed by
Lemma~\ref{l:aux2}.) Let $\cD(X) = \{x_1, \ldots, x_k\}$ be the
domain of the chosen root variable $X$, and let $x_1 \prec \ldots
\prec x_k$ be a total ordering of $\cD(X)$ that is consistent
with the (possibly partial) preferential ordering dictated by
$CPT(X)$ in $\cN$. For each $x_i$, $1 \leq i \leq k$, construct a
\TCPn\ $\cN_i$, with $n-1$ variables $\bV - \{X\}$ by removing
$X$ from the original network, and:
\begin{enumerate}
\item For each variable $Y$, such that there is a ${\mathsf{cp}}$-arc
  $\cparc{X}{Y} \in \cN$, revise the CPT of $Y$ by restricting each row
  to $X=x_i$.
\item For each ${\mathsf{ci}}$-arc $\gamma = \ciarc{Y_1}{Y_2}$, such
  that $X \in \cS(\gamma)$, revise the CIT of $\gamma$ by restricting
  each row to $X=x_i$. If, as a result of this restriction, all rows
  in the new CIT express the same relative importance between $Y_1$
  and $Y_2$, replace $\gamma$ in $\cN_i$ by the corresponding
  ${\mathsf{i}}$-arc, i.e., either $\iarc{Y_1}{Y_2}$ or
  $\iarc{Y_2}{Y_1}$. Alternatively, if the CIT of $\gamma$ becomes
  empty, then $\gamma$ is simply removed from $\cN_i$.
\item Remove the variable $X$, together with all ${\mathsf{cp}}$-arcs
  of the form $\cparc{X}{Y}$, and all ${\mathsf{i}}$-arcs of the form
  $\iarc{X}{Y}$.
\end{enumerate}

From Lemma~\ref{l:aux1} we have that conditional acyclicity of
$\cN$ implies conditional acyclicity of all the reduced TCP-nets
$\cN_i$. Therefore, by the inductive hypothesis we can construct
a preference ordering $\succ_i$ for each of the reduced networks
$\cN_i$. Now we can construct the preferential ordering for the
original network $\cN$ as follows. Every outcome with $X=x_j$ is
ranked as preferred to any outcome with $X=x_i$, for $1 \leq i <
j \leq k$. All the outcomes with identical $X$ value, $x_i$, are
ranked according to the ordering $\succ_i$ associated with
$\cN_i$ (ignoring the value of $X$). Clearly, by construction, the
ordering we defined is a strict total order: it was obtained by
taking a set of strict total orders and ordering them,
respectively. From Definition~\ref{def:tcp-satisf}, it is easy to
see that this strict total order satisfies $\cN$.
\end{proof}

A close look at the proof of Theorem~1 reveals that the key
property of conditionally acyclic TCP-nets is that they induce an
``ordering'' over the nodes of the network. This ordering is not
fixed, but is context dependent. Different assignments to the
variables in the prefix of this ordering will yield different
suffixes. Put differently, the ordering depends on the values of
the variables, and it captures the relative importance of each
variable in each particular context. In particular, nodes that
appear earlier in the ordering are more important in this
particular context.

The above observation helps explain the rationale for our
definition of the dependency graph
(Definition~\ref{def:tcp-dependency}). In some sense, this graph
captures constraints on the ordering of variables. The TCP-net is
conditionally acyclic if these constraints are satisfiable. We
use this perspective to explain some choices made in the
definition of the dependency graph which may seem arbitrary.
First, consider the direction of (unconditional) importance edges
from the more important to the less important variable. This
simply goes in line with our desire to use a topological ordering
in which the more important variables appear first. Second,
consider the direction of CP-net edges from parent to children. It
turns out that in CP-nets, there is an induced importance
relationship between parents and children: parents are more
important than their children (see~\cite{BBDHP.journal}). Thus,
edges in the dependency graph must point from parent to child.

Finally, in order to make sense of this idea of context-dependent
ordering, we must order the variables in the selector set of a
${\mathsf{ci}}$-arc before the nodes connected by this arc. The
motivation for this last choice may be a bit less clear. The
following example shows the necessity of this (i.e., why
Theorem~\ref{l:sat1} cannot be provided for a stronger notion of
TCP-net acyclicity obtained by defining $\bw$-directed graphs over
$\cN$ rather than over $\cNN$).

\begin{center}
\begin{minipage}{0.4\textwidth}
\begin{center}
\begin{tabular}{lr}
{\footnotesize \setlength{\extrarowheight}{2pt}
 \begin{tabular}{|c|c|}
  \hline
  \hline
   $c$ & $A \imp B$\\
  \hline
   $\overline{c}$ & $B \imp A$\\
   \hline
   \hline
  \end{tabular}
}
&
\multirow{3}{0.5\textwidth}{
    \def\objectsizestyle{\scriptstyle}
    \xymatrix @R=35pt@C=35pt{
      *++[o][F]{A} \ar@{-}[r]|{\blacksquare}^{C} \ar[d] & *++[o][F]{B}\\
       *++[o][F]{C}
              }
}\\
\begin{minipage}{1cm}
\vspace{0.5cm}
\end{minipage}
\\
{\footnotesize \setlength{\extrarowheight}{2pt}
\begin{tabular}{|c|c|}
\hline
$A$ & $a \succ \overline{a}$\\
\hline
$B$ & $b \succ \overline{b}$\\
\hline
$C$ & $a \;:\; \overline{c} \succ c$\\
  & $\overline{a} \;:\; c \succ \overline{c}$\\
\hline
\end{tabular}
}
\end{tabular}
\end{center}
\end{minipage}
\end{center}

Consider a TCP-net as depicted above. This TCP-net $\cN$ is defined over three boolean variables $\bV = \{A, B, C\}$, and having ${\sf cp} = \{ \cparc{A}{C} \}$, ${\sf ci} = \{ \ciarc{A}{B} \}$ with $\cS(A,B) = \{C\}$, and ${\sf i} = \emptyset$.
Clearly, the two possible $\bw$-directed graphs of $\cN$ (not of $\cNN$) are acyclic. Now, suppose that there exists a strict partial order $\succ'$ over $\cD(\bV)$ that satisfies $\cN$. By Definition~\ref{def:tcp-satisf}, we have
\begin{enumerate}[(1)]
\item $a\overline{b}\overline{c} \succ' a\overline{b}c $ (from $CPT(C)$),
\item $\overline{a}b\overline{c} \succ' a\overline{b}\overline{c}$ (from $CIT(\ciarc{A}{B})$ and $CPT(B)$),
\item $\overline{a}bc \succ' \overline{a}b\overline{c}$ (from $CPT(C)$), and
\item $a\overline{b}c \succ' \overline{a}bc $ (from $CIT(\ciarc{A}{B})$ and $CPT(A)$).
\end{enumerate}
However, this implies that $\succ'$ is not anti-symmetric, contradicting our assumption that $\succ'$ is a strict partial order.

\commentout{ The notion of conditional acyclicity also stresses
the implicit connection between the notions of preferential
dependence and relative importance. Observe that the definition
of conditionally acyclic TCP-nets treats the two types of induced
directed edges as if they correspond to related phenomena. On the
other hand, at first view, these two notions seem to be somewhat
tangential, and thus ``anonymity'' (and thus, similar
influence) of {\sf cp}- and {\sf i}-arcs in
Definitions~\ref{def:tcp-dependency}-\ref{def:tcp-w-directed} is
not apparently justifiable. For instance, the direction of {\sf
cp}-arcs from conditioning to conditioned variables is inherited
from CP-nets. However, the decision to set the direction of {\sf
i}-arcs (and the induced directions of {\sf ci}-arcs) from the
more important variable to the less important variable may look
completely arbitrary. If so, then one can {\em reverse} the
directions of importance edges. Notice that, under such edge
reversal, some conditionally cyclic TCP-nets become conditionally
acyclic. And if the direction of importance edges is arbitrary,
then nothing should prevent such ``reversely conditionally
acyclic'' TCP-nets to be satisfiable. The next example, however,
shows that this is not the case.

\begin{center}
  \begin{minipage}{\textwidth}
\begin{center}
\begin{tabular}{cccc}
{\footnotesize \setlength{\extrarowheight}{2pt}
\begin{tabular}{|c|c|}
\hline
$A$ & $a \succ \overline{a}$\\
\hline
$B$ & $a \;:\; b \succ \overline{b}$\\
    & $\overline{a} \;:\; \overline{b} \succ b$\\
\hline
$C$ & $b \;:\; c \succ \overline{c}$\\
  & $\overline{b} \;:\; \overline{c} \succ c$\\
\hline
\end{tabular}
}
 &
 \begin{minipage}{0.5\textwidth}
  \def\objectsizestyle{\scriptstyle}
  \xymatrix @R=20pt@C=35pt{
      *++[o][F]{A} \ar[d]\\
      *++[o][F]{B} \ar[d]\\
       *++[o][F]{C} \ar@/_1.5pc/[uu]|{\blacktriangle}
  }
 \end{minipage}
 &
 \begin{minipage}{2cm}
 \hspace{2cm}
 \end{minipage}
 &
 \begin{minipage}{0.5\textwidth}
  \def\objectsizestyle{\scriptstyle}
  \xymatrix @R=20pt@C=35pt{
      *++[o][F]{A} \ar[d] \ar@/^1.5pc/@{.>}[dd]|{\blacktriangledown}\\
      *++[o][F]{B} \ar[d]\\
       *++[o][F]{C}
  }
 \end{minipage}\\
 \ \\
 \multicolumn{2}{c}{(a)} & &
 (b)
\end{tabular}
\end{center}
\end{minipage}
\end{center}

In the TCP-net (a) as above, the variable $C$ is (unconditionally) more important than $A$, and this network is obviously cyclic. Our syntactic direction-reversing of the {\sf i}-arc $\iarc{C}{A}$ leads to the (conditionally) acyclic topology (b). This TCP-net, however, is not satisfiable. To see the later, assume that there exists a strict partial order $\succ'$ over $\cD(\bV)$ that satisfies $\cN$. By Definition~\ref{def:tcp-satisf}, we have
\begin{enumerate}[(1)]
\item $\overline{a}bc \succ' ab\overline{c}$ (from $CPT(C)$ and $CIT(\iarc{C}{A})$),
\item $ab\overline{c} \succ' a\overline{b}\overline{c}$ (from $CPT(B)$),
\item $a\overline{b}\overline{c} \succ' a\overline{b}c$ (from $CPT(C)$),
\item $a\overline{b}c \succ' \overline{a}\overline{b}c$ (from $CPT(A)$),
\item $\overline{a}\overline{b}c \succ' \overline{a}bc$ (from $CPT(B)$).
\end{enumerate}
However, this implies that $\succ'$ is not anti-symmetric,
contradicting our assumption that $\succ'$ is a strict partial
order. This example illustrates that, under the {\em ceteris
paribus} interpretation, the notions of preferential dependence
and relative importance are {\em not} tangential. In fact,~\citeA{BBDHP.journal} observed that the {\em ceteris
paribus} interpretation of conditional preference statements
implicitly induces some (not fully characterized to these days)
notion of relative importance between the conditioning and
conditioned variables. For the further discussion of this
paradigm, we refer the interested reader to~\cite{BBDHP.journal}.
} 

\section{Verifying Conditional Acyclicity}
\label{S:tcp-consistency}

In contrast to standard acyclicity in directed graphs, the property
of conditional acyclicity cannot be easily tested in general. Naive verification of the
acyclicity of every $\bw$-directed graph can require time exponential in the
size of $\cS(\cN)$. Here we study the complexity of
verifying conditional acyclicity, discuss some hard and polynomial subclasses
of this problem, and provide some sufficient and/or necessary conditions for
conditional acyclicity that can be easily checked for certain subclasses of TCP-nets.

Let $\cN$ be a \TCPn.
If there are no cycles in the undirected graph underlying $\cNN$
(i.e., the graph obtained from $\cNN$ by making all directed edges
into undirected edges), then clearly all $\bw$-directed graphs of
$\cNN$ are acyclic, and this property of $\cNN$ is simple to check.
Alternatively, suppose that the underlying undirected graph of $\cNN$ does contain cycles. If projection of each such cycle back to $\cNN$
contains directed arcs
oriented in different directions on the cycle (one ``clockwise'' and another ``counter-clockwise''), then all $\bw$-directed graphs of $\cNN$ are still guaranteed to be acyclic. For instance, any subset (of size $>$ 2) of the variables $\{T,A,S,C\}$ in our running example in Figure~\ref{fig:dependency-flight} forms a cycle in the undirected graph underlying $\cNN$, yet each such cycle satisfies the aforementioned criterion.
This sufficient condition for conditional acyclicity can also be checked in
(low order) polynomial time.

The remaining cases are where the dependency graph $\cNN$
contains what we define below as {\em semi-directed\/} cycles, and in the rest of this section we study these cases more closely.

\begin{definition}
\label{d:sdcyc}
Let $\cA$ be a mixed set of directed and undirected edges, and $\cA^{U}$ be the undirected graph underlying $\cA$ (that is, the graph obtained from $\cA$ by dropping orientation of its directed edges.)
We say that $\cA$ is a {\em semi-directed cycle} if and only if
\begin{enumerate}[(1)]
\item $\cA^{U}$ forms a simple cycle (that is, $\cA^{U}$ consists of a single connected
component with all vertices having degree 2 w.r.t.~$\cA^{U}$).
\item Not all of the edges in $\cA$ are directed.
\item All the directed edges of $\cA$ point in the same direction along $\cA^{U}$ (i.e., ``clockwise'' or ``counter-clockwise'').
\end{enumerate}
\end{definition}

Each assignment $\bw$ to the selector sets of ${\mathsf{ci}}$-arcs in
a semi-directed cycle $\cA$ of $\cNN$
induces a direction for all these
${\mathsf{ci}}$-arcs. We say that semi-directed cycle $\cA$ is {\em
  conditionally acyclic\/} if under no such assignment $\bw$ do we
obtain a directed cycle from $\cA$. Otherwise, $\cA$ is called {\em
  conditionally directed}. Figure~\ref{fig:smcycle} illustrates a semi-directed cycle (based on the variables from our running example) with two possible configurations of its CITs that make this semi-directed cycle conditionally directed and conditionally acyclic, respectively.

\begin{figure}[t]
\begin{center}
\begin{tabular}{cc}
\begin{minipage}{2in}
{\footnotesize
 \def\objectsizestyle{\scriptstyle}
          \xymatrix @R=45pt@C=40pt{
            *++[o][F]{T} \ar[d] \ar@{-}[r]|{\blacksquare}^{D} & *++[o][F]{A}\\
            *++[o][F]{S} \ar@{-}[r]|{\blacksquare}^{D} & *++[o][F]{C} \ar[u]
            }
          }
\end{minipage} &
\begin{tabular}{c}
          {\scriptsize \setlength{\extrarowheight}{2pt}
            \begin{tabular}{|c|c|}
              \hline
              $D_{1d} $ & $S \imp C$\\
              \hline
            \end{tabular}
            }\\
            \ \\
          {\scriptsize \setlength{\extrarowheight}{2pt}
            \begin{tabular}{|c|c|}
              \hline
              $D_{1d} $ & $A \imp T$\\
              \hline
            \end{tabular}
            }\\
            {\footnotesize (a)}\\
            \ \\
          {\scriptsize \setlength{\extrarowheight}{2pt}
            \begin{tabular}{|c|c|}
              \hline
              $D_{1d} $ & $S \imp C$\\
              \hline
            \end{tabular}
            }\\
            \ \\
          {\scriptsize \setlength{\extrarowheight}{2pt}
            \begin{tabular}{|c|c|}
              \hline
              $D_{1d} $ & $T \imp A$\\
              \hline
            \end{tabular}
            }\\
            {\footnotesize (b)}
\end{tabular}
\end{tabular}
\end{center}
\caption{A semi-directed cycle: (a) conditionally directed, and (b) conditionally acyclic.}
\label{fig:smcycle}
\end{figure}
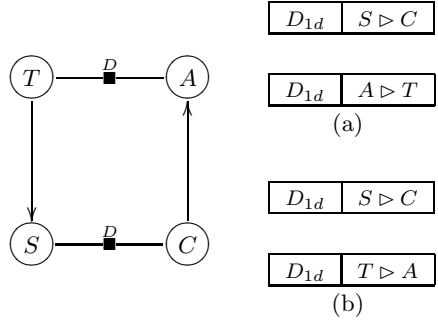

Using these notions, Lemma~\ref{l:decomposition} shows that testing conditional acyclicity
for TCP-nets is naturally decomposable.

\begin{lemma}
  \label{l:decomposition}
  A TCP-net $\cN$ is conditionally acyclic if and only if every semi-directed cycle of $\cNN$ is conditionally acyclic.
\end{lemma}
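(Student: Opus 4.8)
The plan is to prove the two directions of the biconditional separately, each by contraposition, and to lean throughout on the \emph{locality} of orientation: the direction a $\mathsf{ci}$-arc $\ciarc{X_i}{X_j}$ receives in a $\bw$-directed graph of $\cNN$ depends only on the values that $\bw$ assigns to that arc's selector set $\cS(X_i,X_j)$, while every genuinely directed edge of $\cNN$ (the $\mathsf{cp}$-, $\mathsf{i}$-, and selector-induced edges) is fixed independently of $\bw$. This locality is what lets me pass between a single semi-directed cycle and the global conditional acyclicity of $\cN$.

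First I would prove the ``only if'' direction in contrapositive form: if some semi-directed cycle $\cA$ of $\cNN$ is conditionally directed, then $\cN$ is not conditionally acyclic. By definition there is an assignment $\bw_0$ to the selector sets of the $\mathsf{ci}$-arcs occurring in $\cA$ under which all of those arcs are oriented so that $\cA$ closes into a directed cycle. I would extend $\bw_0$ arbitrarily to a full assignment $\bw$ over $\cS(\cN)$. Since the orientation of each $\mathsf{ci}$-arc of $\cA$ is determined solely by its own selector values, which already lie inside $\bw_0$, every such arc receives the same orientation under $\bw$, and the fixed directed edges of $\cA$ are unchanged. Hence the edges of $\cA$ form a directed cycle in the $\bw$-directed graph of $\cNN$, so that graph is cyclic and $\cN$ fails to be conditionally acyclic.

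For the ``if'' direction, again contrapositively, I would assume $\cN$ is not conditionally acyclic, so that for some $\bw$ over $\cS(\cN)$ the $\bw$-directed graph of $\cNN$ contains a directed cycle; since any directed cycle contains a simple one, I extract a simple directed cycle $C$. I then map $C$ back to $\cNN$, replacing each edge that arose from orienting a $\mathsf{ci}$-arc by the corresponding undirected arc, obtaining an edge set $\cA$. Because $C$ is simple and a $\mathsf{ci}$-arc is oriented only one way in a fixed $\bw$-directed graph, distinct edges of $C$ map to distinct edges of $\cNN$, so $\cA^{U}$ is a simple cycle (condition~(1)); and since all edges of $C$ are traversed consistently around it, the surviving directed edges of $\cA$ all point the same way along $\cA^{U}$ (condition~(3)). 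Restricting $\bw$ to the selector sets of the $\mathsf{ci}$-arcs in $\cA$ recovers exactly the orientation that made $\cA$ a directed cycle, so $\cA$ is conditionally directed --- provided $\cA$ is genuinely a semi-directed cycle.

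The hard part is precisely condition~(2): I must guarantee that $C$ traverses at least one oriented $\mathsf{ci}$-arc, for otherwise $\cA$ consists entirely of directed edges of $\cNN$ and is not a semi-directed cycle at all. If $C$ uses no $\mathsf{ci}$-arc, then the fixed directed subgraph of $\cNN$ already contains a directed cycle, one that is present in \emph{every} $\bw$-directed graph and is detectable by ordinary directed-cycle testing. I would therefore dispatch this as a separate degenerate case, appealing to the standing requirement that the fixed directed part of $\cNN$ be acyclic (as is standard for the CP-net backbone, and tacit in the preceding trichotomy of cases). Under that requirement, any directed cycle in a $\bw$-directed graph must cross some $\mathsf{ci}$-arc, condition~(2) holds, and the back-mapped $\cA$ is a genuine conditionally directed semi-directed cycle, which completes the equivalence.
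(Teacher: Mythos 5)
Your proof is correct and rests on the same idea as the paper's own two-sentence argument: the orientation of each ${\mathsf{ci}}$-arc under $\bw$ depends only on that arc's own selector set, so a witnessing assignment for a single semi-directed cycle extends to a global one, and conversely assignments outside $\cS(\cA)$ cannot affect $\cA$. You are in fact noticeably more careful than the paper, which omits both the back-mapping step showing that any directed cycle in a $\bw$-directed graph projects onto a semi-directed cycle of $\cNN$, and the degenerate case of a fully directed cycle in $\cNN$ (under which the lemma as literally stated would fail, but which the paper tacitly excludes, cf.\ the proof of Lemma~\ref{l:aux2}); your explicit treatment of both points is a genuine improvement rather than a deviation.
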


\begin{proof}
  The proof is straightforward: If there is a variable assignment that makes
  one of the semi-directed cycles of $\cNN$ conditionally directed, then no
  other cycle need be examined. Conversely, consider one of the semi-directed
  cycles $\cA$ of $\cNN$. If no assignment to $\cS(\cA)$ makes $\cA$
  conditionally directed, then additional assignments to variables in other
  selector sets do not change this property.
\end{proof}

The decomposition presented by Lemma~\ref{l:decomposition} allows us
to prove our first complexity result for testing conditional
acyclicity. Theorem~\ref{th:TCP-hard} below shows that determining
that a \TCPn\ is conditionally acyclic is {\sc coNP}-hard.

\begin{theorem}
\label{th:TCP-hard}
Given a binary-valued TCP-net $\cN$, the decision problem: is there a
conditionally directed cycle in $\cNN$, is {\sc NP}-complete, even if for every
${\mathsf{ci}}$-arc $\gamma \in \cN$ we have $|\cS(\gamma)| = 1$.
\end{theorem}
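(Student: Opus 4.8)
The plan is to first recast the decision problem into a form convenient for complexity analysis and then reduce 3-SAT to it. By Definition~\ref{d:adag} together with Lemma~\ref{l:decomposition}, $\cNN$ contains a conditionally directed cycle if and only if there exists an assignment $\bw$ to $\cS(\cN)$ under which the ($\bw$-directed, hence purely directed) graph of $\cNN$ contains a directed cycle. Membership in {\sc NP} is then immediate: a nondeterministic machine guesses $\bw$, constructs the $\bw$-directed graph in polynomial time via Definitions~\ref{def:tcp-dependency} and~\ref{def:tcp-w-directed}, and tests it for a directed cycle (e.g.\ by attempting a topological sort) in polynomial time.

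For {\sc NP}-hardness I would reduce from 3-SAT. Given a formula $\phi$ over variables $u_1,\ldots,u_n$ with clauses $C_1,\ldots,C_m$, where $C_j=\ell_{j,1}\vee\ell_{j,2}\vee\ell_{j,3}$, I build a binary \TCPn\ $\cN_\phi$ shaped as a ``necklace''. Its nodes are hubs $h_0,\ldots,h_{m-1}$ (with $h_m:=h_0$), intermediate nodes $a_{j,k}$ for $1\le j\le m$ and $k\in\{1,2,3\}$, and selector nodes $Z_1,\ldots,Z_n$. For each $j,k$ I insert a fixed directed ${\mathsf{i}}$-arc $\iarc{h_{j-1}}{a_{j,k}}$ (an ``entry'' edge), and a ${\mathsf{ci}}$-arc $\ciarc{a_{j,k}}{h_j}$ whose selector set is the single node $Z_i$, where $u_i$ is the variable underlying $\ell_{j,k}$. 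The CIT of this ${\mathsf{ci}}$-arc specifies $a_{j,k}\imp h_j$ (orienting the edge $a_{j,k}\to h_j$) exactly at the value of $Z_i$ that makes $\ell_{j,k}$ \emph{true}, and is left \emph{unspecified} at the opposite value, so that by Definition~\ref{def:tcp-w-directed} no edge appears there. No arc points into any $Z_i$, so every selector is a source; all CPTs are trivial, since no node has ${\mathsf{cp}}$-parents. Every ${\mathsf{ci}}$-arc has $|\cS(\gamma)|=1$, and $\cN_\phi$ is built in polynomial time.

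The correctness claim is that $\cN_\phi$ has a conditionally directed cycle iff $\phi$ is satisfiable. An assignment $\bw$ to $\{Z_1,\ldots,Z_n\}$ is precisely a truth assignment to $\phi$. In the $\bw$-directed graph the only directed edges are the fixed entry edges $h_{j-1}\to a_{j,k}$, the oriented literal edges $a_{j,k}\to h_j$ (present iff $\ell_{j,k}$ is satisfied under $\bw$), and the edges $(Z_i,a_{j,k})$, $(Z_i,h_j)$ attached by the dependency-graph construction. The latter emanate from source nodes and hence lie on no directed cycle. Every remaining edge advances exactly one step ``forward'' along the necklace, so the unique simple directed cycle possible is the full loop, which must cross each segment $j$ through some complete route $h_{j-1}\to a_{j,k}\to h_j$; such a route is directed iff $\ell_{j,k}$ is satisfied. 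Thus a directed cycle exists under $\bw$ iff every clause has a satisfied literal, i.e.\ iff $\bw$ satisfies $\phi$; combined with the reformulation above, this establishes the reduction.

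The main obstacle, and where most of the care goes, is suppressing \emph{spurious} cyclic structure while honoring the modelling constraints. Three points must be handled. First, a \TCPn\ admits at most one ${\mathsf{ci}}$-arc per node pair (its selector set is a function of the pair), forbidding three parallel literal edges between $h_{j-1}$ and $h_j$; routing each literal through its own intermediate node $a_{j,k}$ circumvents this. Second, one must verify that the directed edges the dependency graph attaches to every ${\mathsf{ci}}$-arc cannot close a cycle: making each $Z_i$ a source is exactly what guarantees this, since the two edges leaving $Z_i$ diverge and cannot be consistently oriented around any cycle through $Z_i$ (so, e.g., the triangle $Z_i,a_{j,k},h_j$ is never a semi-directed cycle). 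Third, exploiting \emph{partial} CITs so that an unsatisfied literal contributes no edge—rather than a backward edge—eliminates backward and short-circuit cycles, leaving the forward loop as the sole cyclic possibility and pinning the existence of a conditionally directed cycle to the satisfiability of $\phi$.
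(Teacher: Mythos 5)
Your proposal is correct and takes essentially the same approach as the paper: the paper's reduction is exactly your ``necklace,'' with clause nodes $C_j$ as your hubs, literal nodes $L_{j,k}$ as your intermediates (each receiving an ${\mathsf{i}}$-arc from $C_j$ and sending a singleton-selector ${\mathsf{ci}}$-arc to $C_{(j+1)\bmod n}$), and the propositional variables as source selector nodes whose outgoing dependency-graph edges are likewise observed to be harmless. The only cosmetic difference is that the paper orients the ${\mathsf{ci}}$-arc backward when the literal is false rather than leaving the CIT entry unspecified; both choices block spurious cycles.
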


\begin{proof}
  \label{page-th:TCP-hard}
  The proof of hardness is by reduction from {\sc 3-sat}.  Given a {\sc 3-cnf}
  formula $\cF$, construct the following TCP-net $\cN$. For every variable
  $X_i$ and every clause $C_j$ in $\cF$, construct a boolean variable $X_i$
  and variable $C_j$ in $\cN$, respectively (we retain the same names, for
  simplicity). In addition, for every clause $C_j$, construct three boolean
  variables $L_{j,k}$, $1 \leq k \leq 3$, corresponding to the literals
  appearing in $C_j$.  Let $n$ be the number of clauses in $\cF$.  The TCP-net
  $\cN$ is somewhat degenerate, since it has no ${\mathsf{cp}}$-arcs. However,
  it has an ${\mathsf{i}}$-arc $\iarc{C_j}{L_{j,k}}$ for each clause $C_j$ and
  every literal $L_{j,k} \in C_j$.  In addition, for every literal $L_{j,k} \in
  C_j$, there is a ${\mathsf{ci}}$-arc $\ciarc{L_{j,k}}{C_{(j+1) \mod n}}$,
  whose selector variable is the variable $X_i$ represented in $L_{j,k}$. The
  relative importance between $L_{j,k}$ and $C_{(j+1) \mod n}$ on the selector
  $X_i$ as follows: if $L_{j,k}$ is a positive literal,
  then variable $L_{j,k}$ is more important than $C_{(j+1)
    \mod n}$ if $X_i$ is true, and less important if $X_i$ is false. For
  negative literals, the dependence on the selector variable is reversed. This
  completes the construction - clearly a polynomial-time operation.
  Figure~\ref{fig:th:TCP-hard} illustrates the subnet of $\cN$ corresponding to
  a clause $C_j = (x_1 \vee x_2 \vee \overline{x_3})$, where $L_{j,1},
  L_{j,2},L_{j,3}$ correspond to $x_1,x_2,\overline{x_3}$, respectively.

  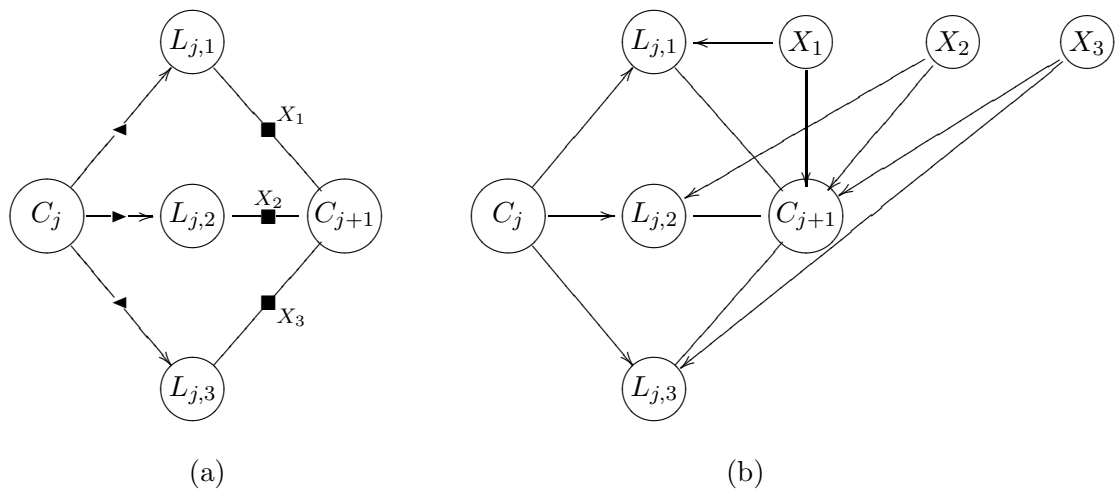
\begin{figure}[h]
   \begin{tabular}{cc}
   \begin{minipage}{2.3in}
   \mbox{
      \def\objectsizestyle{\scriptstyle}
        \xymatrix @R=40pt@C=25pt{
          & *++[o][F]{L_{j,1}} \ar@{-}[dr]|{\blacksquare}^{X_1}\\
          *+++[o][F]{C_j} \ar[ur]|{\LHD} \ar[r]|{\RHD} \ar[dr]|{\LHD} &
          *++[o][F]{L_{j,2}} \ar@{-}[r]|{\blacksquare}^{X_2} & *++[o][F]{C_{j+1}}\\
          & *++[o][F]{L_{j,3}} \ar@{-}[ur]|{\blacksquare}_{X_3}
        }
      }
   \end{minipage}
   &
   \begin{minipage}{3in}
    \mbox{
        \xymatrix @R=40pt@C=25pt{
          & *++[o][F]{L_{j,1}} \ar@{-}[dr] & *++[o][F]{X_1} \ar[l] \ar[d] & *++[o][F]{X_2} \ar[lld] \ar[ld] & *++[o][F]{X_3} \ar[lld] \ar[llldd]  \\
          *+++[o][F]{C_j} \ar[ur] \ar[r] \ar[dr] &
          *++[o][F]{L_{j,2}} \ar@{-}[r] & *++[o][F]{C_{j+1}}\\
          & *++[o][F]{L_{j,3}} \ar@{-}[ur]
        }
      }
   \end{minipage}\\
   \ \\
   (a) & (b)
   \end{tabular}
%
    \caption{(a) TCP-net subnet for $C_j = (x_1 \vee x_2 \vee \overline{x_3})$, and (b) its dependency graph.}
    \label{fig:th:TCP-hard}
  \end{figure}

  We claim that $\cNN$, the dependency graph for the network $\cN$ we just
  constructed, has a conditionally directed cycle just when
  $\cF$ is satisfiable\footnote{In this particular construction, the
    directed edges in $\cNN$ outgoing from the selector variables
    $X_i$ have no effect on the existence of
    conditionally directed cycles in $\cNN$. Therefore, here we can
    simply consider the TCP-net $\cN$ instead of its dependency graph
    $\cNN$.}. It is easy to see that there is a path from $C_j$ to
  $C_{(j+1) \mod n}$ just when the values of the variables
  participating in $C_j$ are such that $C_j$ is satisfied. Thus, an
  assignment that creates a directed path from $C_0$ to $C_0$ is an
  assignment that satisfies all clauses, and the problems are
  equivalent - hence our decision problem is {\sc NP}-hard.  Deciding
  existence of a conditional directed cycle is in {\sc NP}: Indeed,
  verifying the existence of a semi-directed cycle $\cA$ given an assignment to $\cS(\cA)$ (the union
  of the selector sets of all ${\mathsf{ci}}$-arcs in $\cA$) can
  be done in polynomial time. Thus, the problem is {\sc
    NP}-complete.
\end{proof}



One reason for the
complexity of the general problem, as emerges from the proof of
Theorem \ref{th:TCP-hard}, is the possibility that the number of
semi-directed cycles in the TCP-net dependency graph is
exponential in the size of the network. For example, the network in the reduction has
$3^n$ semi-directed cycles, due to the three possible paths generated in each
subnet as depicted in Figure~\ref{fig:th:TCP-hard}(a).
Thus, it is natural to consider networks for which the number of
semi-directed cycles is not too large. In what follows, we call a
TCP-net $\cN$ {\em $m$-cycle bounded} if the number of different
semi-directed cycles in its dependency graph $\cNN$ is at
most $m$.

From Lemma~\ref{l:decomposition} it follows that, given an $m$-cycle
bounded TCP-net $\cN$, if $m$ is polynomial in the size of $\cN$, then
we can reduce testing conditional acyclicity of $\cNN$ into separate tests
for conditional acyclicity of every semi-directed
cycle $\cA$ of $\cNN$. A
naive check for the conditional acyclicity of a semi-directed cycle
$\cA$ requires time exponential in the size of $\cS(\cA)$ -- where
$\cS(\cA)$ is the union of the selector sets of all ${\mathsf{ci}}$-arcs
in $\cA$.
Thus, if $\cS(\cA)$ is small for each
semi-directed cycle in $\cNN$, then conditional
acyclicity of $\cNN$ can be checked quickly.  In fact, often we can
determine that a semi-directed cycle $\cA$ is conditionally directed/acyclic
even more efficiently than enumerating all possible assignments to $\cS(\cA)$.

\begin{lemma}
\label{l:cond1}
Let $\cA$ be a semi-directed cycle in $\cNN$. If $\cA$ is
conditionally acyclic, then it contains a pair of
${\mathsf{ci}}$-arcs $\gamma_i,\gamma_j$ such that
$\cS(\gamma_i)\cap\cS(\gamma_j)\neq\emptyset$.
\end{lemma}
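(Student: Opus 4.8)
The plan is to prove the contrapositive: assuming that the selector sets of the ${\mathsf{ci}}$-arcs of $\cA$ are \emph{pairwise disjoint}, I would show that $\cA$ is conditionally directed, i.e., that some assignment $\bw$ to $\cS(\cA)$ turns $\cA$ into a directed cycle. Note that this formulation also covers the degenerate case in which $\cA$ has a single ${\mathsf{ci}}$-arc, for which pairwise disjointness holds vacuously.

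First I would fix one of the two orientations of the underlying simple cycle $\cA^{U}$ and call it ``clockwise''. By property~(3) of Definition~\ref{d:sdcyc}, all directed edges of $\cA$ already point clockwise (if $\cA$ has any directed edges at all; otherwise this choice is free). It therefore suffices to exhibit an assignment under which every ${\mathsf{ci}}$-arc of $\cA$ is oriented clockwise, since then every edge of $\cA$ is directed clockwise and $\cA$ is a directed cycle, hence conditionally directed.

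The key step is to observe that each ${\mathsf{ci}}$-arc $\gamma=(X,Y)$ of $\cA$ can be oriented clockwise by a suitable assignment to its own selector set $\cS(\gamma)$. Indeed, $\gamma$ is a genuine conditional-importance arc, so by the minimality convention its selector set $\cS(\gamma)$ is non-empty and the relative importance between $X$ and $Y$ genuinely depends on $\cS(\gamma)$ (were it constant, $\gamma$ would be an ${\mathsf{i}}$-arc, not a ${\mathsf{ci}}$-arc). Since every assignment to $\cS(\gamma)$ induces one of the two orientations of $\{X,Y\}$ (Definition~\ref{def:tcp-w-directed}), a dependence that is not constant must realize both orientations; in particular, some value $\bz_\gamma\in\cD(\cS(\gamma))$ orients $\gamma$ clockwise.

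Finally, I would assemble these local choices into a single global assignment. Because the selector sets are pairwise disjoint, $\cS(\cA)=\bigcup_\gamma\cS(\gamma)$ is a disjoint union, so the partial assignments $\bz_\gamma$ have disjoint domains and combine into a single assignment $\bw$ to $\cS(\cA)$. Under $\bw$ every ${\mathsf{ci}}$-arc is oriented clockwise, so $\cA$ becomes a directed cycle and is conditionally directed, contradicting the hypothesis; hence two of the ${\mathsf{ci}}$-arcs must share a selector variable. The main obstacle is exactly the key step: justifying that each ${\mathsf{ci}}$-arc is orientable clockwise. This rests on reading a ${\mathsf{ci}}$-arc as carrying \emph{genuinely} conditional importance over a minimal selector set, so that its orientation is a non-constant function of the selector and thus attains both directions. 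Without this reading (for instance, for a ${\mathsf{ci}}$-arc whose table only ever dictates a single direction), disjointness of selectors alone would not force conditional directedness, so I would make the non-constancy of each ${\mathsf{ci}}$-arc explicit where the argument invokes it.
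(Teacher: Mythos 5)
Your proof follows the same route as the paper's: argue the contrapositive, orient each ${\mathsf{ci}}$-arc of $\cA$ in the cycle direction by an assignment to its own selector set, and use pairwise disjointness to merge these local assignments into a single assignment to $\cS(\cA)$. The paper compresses all of this into one sentence (``trivially there exists an assignment to $\cS(\cA)$ orienting all the ${\mathsf{ci}}$-arcs of $\cA$ in one direction''), so your version is strictly more explicit. The obstacle you flag in your key step is genuine, and it is precisely what the paper's ``trivially'' glosses over: since a CIT is only a \emph{possibly partial} mapping into orders over $\{X_i,X_j\}$ (Definition~\ref{d:tcpnet}, item 6), the minimality convention on selector sets does not by itself exclude a ${\mathsf{ci}}$-arc whose table dictates a single direction wherever it is defined and is simply undefined elsewhere; such an arc can never be oriented against that direction, so disjointness alone would not yield a directed cycle. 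For the lemma to go through one must read each ${\mathsf{ci}}$-arc as realizing both orientations under suitable selector values --- the reading you adopt and, correctly, make explicit. With that assumption stated, your argument is complete and coincides with the paper's intended one.
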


In other words, if the selector sets of the ${\mathsf{ci}}$-arcs in $\cA$
are all pairwise disjoint, then $\cA$ is conditionally directed. Thus,
Lemma~\ref{l:cond1} provides a necessary condition for conditional
acyclicity of $\cA$ that can be checked in time polynomial in the number of
variables.

\begin{nproof}{Lemma~\ref{l:cond1}}
 If all selector sets of the ${\mathsf{ci}}$-arcs in $\cA$ are pairwise disjoint, then trivially
 there exists an assignment to $\cS(\cA)$ orienting all the ${\mathsf{ci}}$-arcs of $\cA$ in one direction.
\end{nproof}

Before developing sufficient conditions for conditional
acyclicity, let us introduce some useful notation. First, given a
${\mathsf{ci}}$-arc $\gamma = \ciarc{X}{Y}$, we say that an
assignment $\bw$ to a subset $\cS'$ of $\cS(\gamma)$ {\em
orients} $\gamma$ if all rows in $\CIT(\gamma)$ consistent with
$\bw$  express the same relative importance between $X$ and $Y$,
if any. In other words, $\bw$ orients $\gamma$ if, given $\bw$,
the relative importance between $X$ and $Y$ is independent of
$\cS(\gamma) \setminus \cS'$.  Second, if a semi-directed cycle
$\cA$ contains some directed edges, we refer to their (by definition, unique) direction as the {\em direction of $\cA$}.

\begin{lemma}
  \label{l:cond2}
  A semi-directed cycle $\cA$ is conditionally acyclic if it contains a pair of
  ${\mathsf{ci}}$-arcs $\gamma_i,\gamma_j$ such that either:

  \begin{enumerate}[(a)]
  \item $\cA$ contains directed edges, and for every assignment $\bw$
    to $\cS(\gamma_i)\cap\cS(\gamma_j)$, either $\gamma_i$ or $\gamma_j$ is
    oriented by $\bw$ in the direction opposite to the direction of $\cA$.

  \item All edges in $\cA$ are undirected, and for every assignment
    $\bw$ to $\cS(\gamma_i)\cap\cS(\gamma_j)$, $\gamma_i$ and
    $\gamma_j$ are oriented by $\bw$ in opposite directions with
    respect to $\cA$.
  \end{enumerate}
\end{lemma}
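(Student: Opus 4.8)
The plan is to argue by contradiction in both cases, exploiting the fact that a semi-directed cycle $\cA$ closes into a genuine directed cycle under an assignment $\bw$ to $\cS(\cA)$ precisely when \emph{every} ${\mathsf{ci}}$-arc of $\cA$ is oriented by $\bw$ consistently around $\cA^{U}$. The first thing I would record is the monotonicity of the ``orients'' relation: if a partial assignment $\bw_0$ to $\cS' \subseteq \cS(\gamma)$ orients $\gamma$ in a given direction, then so does every assignment to $\cS(\gamma)$ extending $\bw_0$. This is immediate from the definition of ``orients'', since any row of $\CIT(\gamma)$ selected by such an extension is consistent with $\bw_0$ and therefore expresses the same orientation. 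Monotonicity is the lever that lets me push a constraint observed on the small set $\cS(\gamma_i)\cap\cS(\gamma_j)$ up to any full assignment to $\cS(\cA)$.

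For case (a), suppose toward a contradiction that $\cA$ is conditionally directed, i.e.\ there is an assignment $\bw$ to $\cS(\cA)$ under which $\cA$ becomes a directed cycle. Since $\cA$ already contains directed edges, all pointing in the (unique) direction $d$ of $\cA$ by Definition~\ref{d:sdcyc}, closing the cycle forces every ${\mathsf{ci}}$-arc, and in particular both $\gamma_i$ and $\gamma_j$, to be oriented by $\bw$ in direction $d$. Now restrict $\bw$ to $\bw_0 := \bw|_{\cS(\gamma_i)\cap\cS(\gamma_j)}$. By the hypothesis of (a), $\bw_0$ already orients one of $\gamma_i,\gamma_j$ in the direction opposite to $d$, and by monotonicity its extension $\bw$ inherits that opposite orientation. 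This contradicts $\bw$ orienting the same arc in direction $d$, so no such $\bw$ exists and $\cA$ is conditionally acyclic.

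Case (b) runs along the same lines, with the twist that $\cA$ has no directed edges and hence no a~priori direction, so I must rule out both ways of closing the cycle. Assuming some $\bw$ makes $\cA$ a directed cycle, all ${\mathsf{ci}}$-arcs, in particular $\gamma_i$ and $\gamma_j$, are oriented by $\bw$ the same way around $\cA^{U}$ (all ``clockwise'' or all ``counter-clockwise''). Restricting to $\bw_0 := \bw|_{\cS(\gamma_i)\cap\cS(\gamma_j)}$ and invoking the hypothesis of (b), $\bw_0$ orients $\gamma_i$ and $\gamma_j$ in \emph{opposite} directions; monotonicity then forces $\bw$ to orient them oppositely as well, contradicting their common orientation under $\bw$. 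Since the argument is symmetric in the two global directions, both are excluded, and $\cA$ is conditionally acyclic.

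The one point needing care — which I expect to be the main obstacle to state cleanly rather than to prove — is the reduction ``$\cA$ is a directed cycle under $\bw$ $\Leftrightarrow$ all its ${\mathsf{ci}}$-arcs are oriented consistently''. I would justify it directly from Definition~\ref{d:sdcyc}: because $\cA^{U}$ is a simple cycle and the pre-existing directed edges of $\cA$ all agree in direction, a closed directed walk around $\cA^{U}$ exists exactly when the remaining (undirected) ${\mathsf{ci}}$-arcs are all oriented to match that common direction (and, in case (b), some common direction chosen freely). Everything else is bookkeeping on top of the monotonicity of ``orients''.
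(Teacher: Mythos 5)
Your proof is correct, and it fills in exactly the argument the paper treats as self-evident: the published proof of Lemma~\ref{l:cond2} consists of the single line ``Follows immediately from the conditions in the lemma.'' Your monotonicity observation about the ``orients'' relation (rows consistent with an extension of $\bw_0$ form a subset of those consistent with $\bw_0$) is the right way to make that immediacy precise, so this is the same approach, just written out.
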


\begin{proof} Follows immediately from the conditions in the lemma.
\end{proof}

Lemma~\ref{l:cond2} provides a sufficient condition for conditional acyclicity
of $\cA$ that can be checked in time exponential in the maximal size of
selector set intersection for a pair of ${\mathsf{ci}}$-arcs in $\cA$.
Note that the size of the \tcpn\ is at least as large as the above
exponential term, because the
description of the CIT is exponential in the size of the corresponding
selector set. Thus, checking this condition
is only linear in the size of the network.

\begin{definition}
  Given a semi-directed cycle $\cA$, let $\shared(\cA)$ denote
  the union of all pairwise intersections of the selector sets of the
  ${\mathsf{ci}}$-arcs in $\cA$:
  \[
    \shared(\cA) = \bigcup_{\gamma_i,\gamma_j\in\cA}{\cS(\gamma_i)\cap\cS(\gamma_j)}
  \]
\end{definition}

\begin{lemma}
  \label{l:cond3}
\

\begin{enumerate}[(a)]
\item If a semi-directed cycle
  $\cA$ contains directed edges, then $\cA$ is conditionally
  acyclic if and only if, for each assignment $\bu$ on $\shared(\cA)$,
  there exists a ${\mathsf{ci}}$-arc $\gamma_{\bu} \in \cA$ that is
  oriented by $\bu$ in the direction opposite to the direction of
  $\cA$.

\item If a semi-directed cycle $\cA$ contains only
${\mathsf{ci}}$-arcs, then $\cA$ is conditionally acyclic if and only
if, for each assignment $\bu$ on $\shared(\cA)$, there exist two
${\mathsf{ci}}$-arcs $\gamma_{\bu}^1, \gamma_{\bu}^2\in \cA$ that are
oriented by $\bu$ in opposite directions with respect to $\cA$.
\end{enumerate}
\end{lemma}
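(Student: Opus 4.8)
The plan is to prove both parts through a single structural reduction. The exponential object is the set of all assignments to $\cS(\cA)$ (the union of the selector sets of the ci-arcs in $\cA$); the lemma replaces this by the much smaller set of assignments to $\shared(\cA)$. I would first justify this reduction, and then, for a fixed $\bu$ on $\shared(\cA)$, analyze exactly when the remaining selector variables can be used to drive all ci-arcs of $\cA$ into one direction around the cycle, which is what a directed cycle requires.

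The key observation I would establish first is a disjointness property: any selector variable lying in two distinct sets $\cS(\gamma_i)$ and $\cS(\gamma_j)$ belongs to $\shared(\cA)$ by definition. Hence, once an assignment $\bu$ on $\shared(\cA)$ is fixed, every selector variable outside $\shared(\cA)$ is \emph{private} to at most one ci-arc of $\cA$, and these private blocks are pairwise disjoint across the arcs. Consequently the orientation of each $\gamma \in \cA$ under an extension $\bw \supseteq \bu$ is governed by $\bu$ restricted to $\cS(\gamma)\cap\shared(\cA)$ together with the private variables $\cS(\gamma)\setminus\shared(\cA)$, and these orientations can therefore be chosen \emph{independently} once $\bu$ is fixed. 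Using the earlier notion of $\bu$ orienting $\gamma$, the arc is forced by $\bu$ (its direction no longer depends on its private variables) precisely when $\bu$ orients it; otherwise its orientation still depends on its private block.

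With this in hand I would characterize conditional directedness. For part (a), since the directed edges of $\cA$ already point in the direction of $\cA$, a full assignment $\bw$ produces a directed cycle if and only if every ci-arc of $\cA$ is oriented in the direction of $\cA$ under $\bw$. For the ``if'' direction, if for each $\bu$ some arc $\gamma_{\bu}$ is oriented by $\bu$ opposite to $\cA$, then under every extension $\bw$ that arc still points opposite, so no full assignment closes the cycle and $\cA$ is conditionally acyclic. For ``only if'' I argue contrapositively: if some $\bu$ forces no arc opposite, then by the independence above each arc can be set through its private variables to point in the direction of $\cA$, yielding a directed cycle, so $\cA$ is conditionally directed. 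Part (b) follows the same template, except that with no directed edges a directed cycle may close in either global orientation; blocking both requires, for each $\bu$, one arc forced ``clockwise'' and one forced ``counter-clockwise'' (the two-oppositely-oriented-arcs condition), and the converse extension argument then fills the remaining free arcs into whichever single direction has been left unblocked.

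The step I expect to be the main obstacle is the ``only if'' extension, where I claim a non-forced ci-arc can actually be oriented in a \emph{prescribed} direction by its private variables rather than merely turned off. This is where I must exploit that $\gamma$ is a \emph{genuine} ci-arc -- its CIT is truly conditional (otherwise it would have been collapsed to an i-arc), so the required orientation is realized for some value of its selectors -- and must check that partial or unspecified CIT entries do not sabotage the construction. Reconciling the phrase ``oriented by $\bu$ in the direction opposite'' with possibly unspecified rows, and confirming the pairwise disjointness of the private selector blocks, is the crux on which both directions of both parts rest.
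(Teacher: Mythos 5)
Your proposal matches the paper's own proof in essentially every respect: sufficiency because an arc oriented by $\bu$ against the required direction stays that way under every extension, and necessity by the contrapositive, partitioning the ${\mathsf{ci}}$-arcs into those forced by $\bu$ and those still free, and using the pairwise disjointness of the private blocks $\cS(\gamma)\setminus\shared(\cA)$ to extend $\bu$ independently and close the cycle. The one crux you flag is resolved not by the arc being a ``genuine'' ${\mathsf{ci}}$-arc globally, but by the definition of \emph{orients}: if $\bu$ fails to orient $\gamma$, then the rows of $CIT(\gamma)$ consistent with $\bu$ express both relative importances, so whichever direction you prescribe is realized by some assignment to $\gamma$'s private selector variables.
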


\begin{proof}
  The sufficiency of the above condition is clear, since it subsumes
  the condition in Lemma~\ref{l:cond2}. Thus, we are left with proving
  necessity.
  We start with the second case in which $\cA$ contains only
  ${\mathsf{ci}}$-arcs. Assume to the contrary that $\cA$ is
  conditionally acyclic, but there exists an assignment $\bu$ on
  $\shared(\cA)$ such that no pair of  ${\mathsf{ci}}$-arcs in $\cA$ are
  oriented by $\bu$ in opposite directions with respect to $\cA$.

  For each ${\mathsf{ci}}$-arc $\gamma \in \cA$, let $\cS^*(\gamma) =
  \cS(\gamma) \setminus \shared(\cA)$. Consider the following disjoint
  partition $\cA = \cA^{{\mathsf{i}}}_{\bu} \cup
  \cA^{{\mathsf{ci}}}_{\bu}$ induced by $\bu$ on $\cA$: For each
  ${\mathsf{ci}}$-arc $\gamma \in \cA$, if $\bu$ orients $\gamma$,
  then we have $\gamma \in \cA^{{\mathsf{i}}}_{\bu}$. Otherwise, if
  the direction of $\gamma$ is not independent of $\cS^*(\gamma)$
  given $\bu$, we have $\gamma \in \cA^{{\mathsf{ci}}}_{\bu}$. We make
  two observations:
  \begin{enumerate}
  \item \label{l:observ1} Our initial (contradicting) assumption
    implies that all the (now directed) edges in
    $\cA^{{\mathsf{i}}}_{\bu}$ agree on the direction with respect to
    $\cA$.
  \item \label{l:observ2} If for some ${\mathsf{ci}}$-arc $\gamma \in
    \cA$ we have $\cS^*(\gamma) = \emptyset$, then we have
    $\gamma \in \cA^{{\mathsf{i}}}_{\bu}$, since all the
    selectors of $\gamma$ are instantiated by $\bu$.
  \end{enumerate}
  If we have $\cA^{{\mathsf{ci}}}_{\bu} = \emptyset$, then the first
  observation trivially contradicts our initial assumption that $\cA$
  is conditionally acyclic. Alternatively, if
  $\cA^{{\mathsf{ci}}}_{\bu} \neq \emptyset$, then, by definition of
  $\shared(\cA)$, we have that $\cS^*(\gamma_i) \cap \cS^*(\gamma_j) =
  \emptyset$ for each pair of ${\mathsf{ci}}$-arcs $\gamma_i,\gamma_j
  \in \cA^{{\mathsf{ci}}}_{\bu}$. This means that we
  can assign each such (non-empty, by the second observation)
  $\cS^*(\gamma_i)$ independently, and thus can extend $\bu$ into an
  assignment on $\cS(\cA)$ that will orient all the
  ${\mathsf{ci}}$-arcs in $\cA^{{\mathsf{ci}}}_{\bu}$ either in the
  direction of $\cA^{{\mathsf{i}}}_{\bu}$ if $\cA^{{\mathsf{i}}}_{\bu}
  \neq \emptyset$, or in an arbitrary joint direction if
  $\cA^{{\mathsf{i}}}_{\bu} = \emptyset$. Again, this contradicts our
  assumption that $\cA$ is conditionally acyclic. Hence, we
  have proved that our condition is necessary for the second
  case. The proof for the first case in which $\cA$ contains some
  directed edges is similar.
\end{proof}

\commentout{
 \begin{proof}
   The sufficiency of the above condition is clear, since it subsumes
   the condition in Lemma~\ref{l:cond2}.  Now, in order to prove its
   necessity, assume to the contrary, that there exists a
   semi-directed cycle $\cA$ which is not a conditionally directed
   cycle, and there exist an assignment $\pi$ on $\shared(\cA)$ such
   that no ${\mathsf{ci}}$-arc can be converted into an
   ${\mathsf{i}}$-arc that violates the direction of $\cA$. By using
   the direction of $\cA$ we assumed that $\cA$ contains some
   ${\mathsf{cp}}$ or ${\mathsf{i}}$ arcs; the proof for the case that
   $\cA$ consists only of ${\mathsf{ci}}$-arcs is be similar.

   Now we show that the condition provided by the lemma is necessary
   for conditional acyclicity of $\cA$. For $1 \leq i\leq k$, let
   $\cS'(\gamma_i) = \cS(\gamma_i) \setminus \shared(\cA)$. By
   definition of $\shared(\cA)$, for each pair of
   ${\mathsf{ci}}$-arcs $\gamma_i,\gamma_j \in \cA$ we have
   $\cS'(\gamma_i) \cap \cS'(\gamma_j) = \emptyset$.  This means that
   we can assign each $\cS'(\gamma_i)$ independently, and thus can
   extend $\pi$ into an assignment on $\cS(\cA)$ that will orient all
   the ${\mathsf{ci}}$-arcs in $\cA$ according to the direction of
   $\cA$.  Clearly, this contradicts our assumption that $\cA$ is
   conditionally acyclic.
\end{proof}
}

In general, the size of $\shared(\cA)$ is $O(|\bV|)$. Since we have to
check the set of assignments over $\shared(\cA)$, this implies that
the problem may be hard. Theorem~\ref{th:1-cycle-TCP-hard} shows that
this is indeed the case.


\begin{theorem}
\label{th:1-cycle-TCP-hard}
Given a binary-valued, $1$-cycle bounded TCP-net $\cN$, the decision problem:
is there a conditionally directed cycle in $\cNN$, is {\sc NP}-complete, even if
for every ${\mathsf{ci}}$-arc $\gamma \in \cN$ we have $|\cS(\gamma)| \leq 3$.
\end{theorem}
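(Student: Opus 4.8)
The plan is to prove NP-completeness in two moves. Membership in NP is essentially inherited from the proof of Theorem~\ref{th:TCP-hard}: a witness is a choice of semi-directed cycle $\cA$ in $\cNN$ together with an assignment $\bw$ to $\cS(\cA)$, and verifying that $\bw$ orients every ${\mathsf{ci}}$-arc of $\cA$ so that $\cA$ closes into a directed cycle is a polynomial-time check. (Here $\cN$ is even $1$-cycle bounded, so there is at most one candidate $\cA$ to begin with.) The real work is in the hardness direction, where I would give a reduction from \textsc{3-sat} that, in contrast to Theorem~\ref{th:TCP-hard}, yields a dependency graph containing only one semi-directed cycle, paying for this with selector sets of size up to three.

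For the reduction, given a 3-CNF formula $\cF$ with clauses $C_1,\dots,C_n$ over variables $x_1,\dots,x_m$, I would build $\cN$ as follows. Introduce a boolean selector variable $X_i$ for each $x_i$ and $n+1$ boolean ring variables $V_0,\dots,V_n$. Place $n$ ${\mathsf{ci}}$-arcs $\gamma_j=\ciarc{V_{j-1}}{V_j}$ ($1\le j\le n$) and a single ${\mathsf{i}}$-arc $\iarc{V_n}{V_0}$; these $n+1$ edges form one cycle, whose orientation I fix as \emph{forward} using the ${\mathsf{i}}$-arc. Let $\cS(\gamma_j)$ be the (at most three) variables $X_i$ occurring in $C_j$, so $|\cS(\gamma_j)|\le 3$, and fill $CIT(\gamma_j)$ so that $\gamma_j$ points forward ($V_{j-1}\imp V_j$) under exactly those selector assignments that satisfy $C_j$, and backward under the unique falsifying row. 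No ${\mathsf{cp}}$-arcs are used, and the construction is plainly polynomial.

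The correctness argument splits into two claims I would verify in turn. The first is that $\cN$ is genuinely $1$-cycle bounded. Each $X_i$ is a source of $\cNN$: it only emits the selector-induced edges of Definition~\ref{def:tcp-dependency}, so no cycle can pass through it, and hence every cycle lies among the $V$-nodes, whose only undirected cycle is the ring. Moreover any undirected cycle visiting some $X_i$ would use two edges both directed out of $X_i$, which therefore point in opposite senses around that cycle, so by Definition~\ref{d:sdcyc}(3) it is not semi-directed --- this is exactly the clockwise/counter-clockwise cancellation already noted before the theorem. Thus the ring $\cA$ is the unique semi-directed cycle. The second claim is the SAT equivalence: since the ${\mathsf{i}}$-arc pins the direction of $\cA$ to forward, a directed cycle can form only when every $\gamma_j$ is oriented forward, which under an assignment to $\{X_1,\dots,X_m\}$ happens iff every $C_j$ is satisfied. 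Combining this with Lemma~\ref{l:decomposition}, $\cNN$ has a conditionally directed cycle iff $\cF$ is satisfiable.

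The hard part will be getting the $1$-cycle bound right: naive clause gadgets that share ring vertices, or that leave the selector-induced arcs free to combine, tend to spawn extra semi-directed cycles. The reduction stays clean precisely because all clause structure is pushed into the CITs --- a variable shared between clauses becomes a shared selector, so the difficulty reappears as a large $\shared(\cA)$, matching the remark preceding the theorem that $|\shared(\cA)|=O(|\bV|)$ is what makes the test intractable. A subtler point I would double-check is that the single ${\mathsf{i}}$-arc is essential: without it, the all-${\mathsf{ci}}$-arc case of Lemma~\ref{l:cond3}(b) would also count an assignment falsifying every clause as producing a directed (backward) cycle, which would break the equivalence.
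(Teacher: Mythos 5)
Your proof is correct and follows essentially the same route as the paper's: a single ring of $n$ ${\mathsf{ci}}$-arcs closed by one ${\mathsf{i}}$-arc (the paper's dummy node $C$ with $\iarc{C}{C_1}$ plays the role of your $\iarc{V_n}{V_0}$), with each clause encoded entirely in the CIT of its ring edge via a selector set of size at most three, so that the ring orients forward exactly under satisfying assignments. The only difference is that you spell out why the selector-induced edges of $\cNN$ cannot create additional semi-directed cycles (two edges leaving a source $X_i$ point in opposite senses around any cycle through it), a detail the paper asserts without argument.
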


\begin{proof}
  \label{page-th:1-cycle-TCP-hard}
  The proof of hardness is by reduction from {\sc 3-sat}.  Given a {\sc 3-cnf}
  formula $\cF$, construct the following TCP-net $\cN$. For every variable
  $X_i$ and every clause $C_j$ in $\cF$, construct boolean variables $X_i$
  and $C_j$ in $\cN$, respectively.
  In addition, add a single dummy variable $C$, and an ${\mathsf{i}}$-arc
  $\iarc{C}{C_1}$.  Let $n$ be the number of clauses in $\cF$. For $1 \leq j \leq n-1$,
  we have $n-1$ ${\mathsf{ci}}$-arcs $E_j = \ciarc{C_j}{C_{j+1}}$. In addition, we have
  ${\mathsf{ci}}$-arc $E_n = \ciarc{C_n}{C}$.
  For all $1 \leq j \leq n$, the $CIT$ for $E_j$ is
  determined by clause $C_j$, as follows. The selector set for $E_j$ is just
  the set of variables appearing in $C_j$, and the relative importance between
  the variables of $E_j$ is determined as follows: $C_j$ is less important than
  $C_{j+1}$ just when the values of the variables in the selector set are such
  that $C_j$ is false. (For $j=n$, read $C$ instead of $C_{j+1}$).

  The constructed TCP-net $\cN$ is $1$-cycle bounded, because there is only one semi-directed
  cycle in its dependency graph $\cNN$, namely $C, C_1,\ldots, C_n, C$.
  We claim that this semi-directed cycle is conditionally directed
  just when $\cF$
  is satisfiable. It is easy to see that the directed path from $C$ to $C$
  exists when all the ${\mathsf{ci}}$-arcs are being directed from $C_j$ to
  $C_{j+1}$, which occurs exactly when the variable assignment makes the clause
  $C_j$ satisfiable.  Hence, a directed cycle occurs in $\cN$ exactly when the
  assignment makes all clauses satisfiable, making the two problems
  equivalent. Thus our decision problem is {\sc NP}-hard.  Finally, as
  deciding existence of a conditional directed cycle is in {\sc NP} (see the
  proof of Theorem~\ref{th:1-cycle-TCP-hard}), the problem is {\sc
    NP}-complete.
\end{proof}


Observe that the proof of Theorem~\ref{th:1-cycle-TCP-hard} does not work when the size of all the selector sets is bounded by
$2$, because {\sc 2-sat} is in {\sc P}.  The immediate question is whether in this latter
case the problem becomes tractable, and for binary-valued TCP-nets the answer is affirmative.

\begin{theorem}
\label{th:m-cycle-TCP-easy}
Given a binary-valued, $m$-cycle bounded TCP-net $\cN$, where $m$ is polynomial
in the size of $\cN$ and, for every ${\mathsf{ci}}$-arc $\gamma \in \cN$ we
have $|\cS(\gamma)| \leq 2$, the decision problem: is there a conditional
directed cycle in $\cNN$, is in {\sc P}.
\end{theorem}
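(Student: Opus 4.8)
The plan is to combine the decomposition of Lemma~\ref{l:decomposition} with a reduction of the per-cycle test to 2-{\sc sat}. By Lemma~\ref{l:decomposition}, $\cNN$ contains a conditionally directed cycle if and only if some semi-directed cycle $\cA$ of $\cNN$ is conditionally directed. Since $\cN$ is $m$-cycle bounded with $m$ polynomial in $|\cN|$, there are at most $m$ semi-directed cycles, and (given the bound) they can be produced in polynomial time by an output-sensitive enumeration over the undirected graph underlying $\cNN$. Hence it suffices to give a polynomial-time test that decides, for a single semi-directed cycle $\cA$, whether there is an assignment $\bw$ to $\cS(\cA)$ that turns $\cA$ into a directed cycle.

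The heart of the argument is that, because all variables are binary and $|\cS(\gamma)|\le 2$ for every ${\mathsf{ci}}$-arc $\gamma$, this per-cycle test is an instance of 2-{\sc sat}. Fix a rotational direction $d$ along $\cA^{U}$. A directed cycle in direction $d$ arises under $\bw$ exactly when every ${\mathsf{ci}}$-arc $\gamma$ of $\cA$ is oriented by $\bw$ consistently with $d$ (the fixed directed edges of $\cA$, if any, already point in the direction of $\cA$ by definition). For each $\gamma$, the set $A^{d}_{\gamma}\subseteq\cD(\cS(\gamma))$ of assignments to its selector set that orient it in direction $d$ is read off directly from $CIT(\gamma)$; since $|\cS(\gamma)|\le 2$ and the variables are binary, $A^{d}_{\gamma}$ is a predicate on at most two Boolean variables, and each complementary (forbidden) assignment can be excluded by a single clause of width at most two. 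Taking the conjunction over all ${\mathsf{ci}}$-arcs of $\cA$ yields a 2-{\sc cnf} formula $\Phi^{d}_{\cA}$ over the Boolean variables of $\cS(\cA)$ whose satisfying assignments are precisely the $\bw$ that realize a directed cycle in direction $d$. Shared selector variables are simply shared propositional variables, so overlaps among the selector sets are handled automatically.

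I would then distinguish two cases for $\cA$. If $\cA$ contains at least one fixed directed edge, its direction $d$ is forced, and $\cA$ is conditionally directed iff $\Phi^{d}_{\cA}$ is satisfiable. If all edges of $\cA$ are ${\mathsf{ci}}$-arcs, a directed cycle may form in either rotational direction, so $\cA$ is conditionally directed iff $\Phi^{\mathrm{cw}}_{\cA}$ or $\Phi^{\mathrm{ccw}}_{\cA}$ is satisfiable. In every case satisfiability of a 2-{\sc cnf} formula is decidable in linear time, so the test for a single cycle is polynomial; running it over the at most $m$ semi-directed cycles gives an overall polynomial-time procedure, and $\cNN$ has a conditionally directed cycle iff at least one of these tests succeeds.

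The step I expect to be the crux is the faithful 2-{\sc cnf} encoding of the orientation predicate $A^{d}_{\gamma}$, and in particular verifying that it remains width-$2$ exactly when $|\cS(\gamma)|\le 2$; this is precisely where the boundary with Theorem~\ref{th:1-cycle-TCP-hard} lies, since selector sets of size $3$ would force width-$3$ clauses and the reduction would instead become {\sc 3-sat}. A secondary point to make rigorous is that partial CITs cause no difficulty: an assignment that leaves $\gamma$ unoriented simply fails to orient it in direction $d$ and is therefore among the forbidden assignments excluded by $\Phi^{d}_{\cA}$. Finally, I should make precise that the semi-directed cycles can indeed be enumerated within a polynomial bound, which I would justify from the $m$-cycle boundedness hypothesis.
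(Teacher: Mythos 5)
Your proposal is correct and follows essentially the same route as the paper: decompose via Lemma~\ref{l:decomposition}, then for each semi-directed cycle build a 2-{\sc cnf} formula over the selector variables by adding one width-$\le 2$ clause per CIT entry that fails to orient its arc in the cycle's direction, and solve by linear-time 2-{\sc sat}. The only cosmetic difference is that for cycles consisting solely of ${\mathsf{ci}}$-arcs the paper forces a direction by inserting a dummy variable and ${\mathsf{i}}$-arc (clockwise and counter-clockwise), whereas you simply test both orientation formulas directly --- these are equivalent.
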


\begin{proof}
 \label{page-th:m-cycle-TCP-easy} The proof uses a reduction from conditional acyclicity
 testing to satisfiability. Let $\cA$ be a semi-directed cycle
 with $|\cS(\gamma)| \leq k$ for every ${\mathsf{ci}}$-arc $\gamma \in \cA$. We reduce the
 conditional acyclicity  testing problem
 to an equivalent {\sc k-sat} problem instance. In particular,
 since {\sc 2-sat} is  solvable in linear time~\cite{Even:Itai:Shamir:76},
 together with Lemma~\ref{l:decomposition} this proves the claim.

 First, assume that $\cA$ has at least one directed edge (either
 ${\mathsf{i}}$-arc or ${\mathsf{cp}}$-arc). By definition of semi-directed
 cycles, all directed edges of $\cA$ point in the same direction,
 specifying the only possible cyclic orientation $\omega$ of $\cA$.
 For each ${\mathsf{ci}}$-arc $\gamma_i \in \cA$, let the selector set be
 $\cS(\gamma_i) = \{X_{i,1},..., X_{i,k}\}$.%
\footnote{
 If $|\cS(\gamma_i)| < k$, the only impact will be a more compact
reduction below.}
 Clearly, $\cA$ is conditionally directed
 if and only if all the ${\mathsf{ci}}$-arcs of $\cA$ can be directed
 consistently with $\omega$.

 Given such a semi-directed cycle $\cA$, we create a corresponding {\sc k-cnf}
 formula $\cF$, such that $\cF$ is satisfiable just when $\cA$ is conditionally
 directed.
 Let us call all $CIT(\gamma_i)$ entries that are consistent with $\omega$
 by the term {\em $\omega$-entries}.
 Since $\cS(\gamma) = \{X_{i,1},... ,X_{i,k}\}$ and $\cN$ is binary valued,
 we can represent the non-$\omega$ entries in $CIT(\gamma_i)$ as a conjunction
 of disjunctions, i.e., in CNF form. The number of disjunctions is equal
 to the number of non-$\omega$ entries in $CIT(\gamma_i)$, and each disjunction
 is comprised of $k$ literals.
Thus, the representation of $CIT(\gamma_i)$
 is a $k$-CNF formula, of size linear in the size of $CIT(\gamma_i)$.
(In fact, the size of the
resulting formula can sometimes be significantly smaller, as one can frequently
simplify the component CNF fragments, but this property is not needed here.)

 Finally, compose all the CNF representations of the $CIT(\gamma_i)$, for
 every $\gamma_i \in \cA$, resulting in a $k$-CNF formula of size linear in
 the combined number of table entries.
 The construction of $\cF$ is clearly a linear-time operation.
 Likewise, it is easy to see that $\cF$ is satisfiable just when there
 is an assignment to $\cS(\cA)$ converting $\cA$ into a directed cycle.

The minor
unresolved issue is with semi-directed cycles consisting of
${\mathsf{ci}}$-arcs only. Given such a semi-directed cycle $\cA$, we reduce
the problem into two sub-problems with a directed arc. Let
$\cA^\prime$ and $\cA^{\prime\prime}$ be cycles created from $\cA$
by inserting one dummy variable and one ${\mathsf{i}}$-arc
into $\cA$ -- clockwise for $\cA^\prime$,
counter-clockwise for $\cA^{\prime\prime}$.
Now, $\cA$ is conditionally directed
if and only if either $\cA^\prime$ or $\cA^{\prime\prime}$ (or both) are conditionally directed.
\end{proof}


To summarize our analysis of verifying conditional acyclicity, one must first identify the semi-directed
cycles in the dependency graph of the TCP-net.
Next, one must show that given each assignment $\bw$ to the
importance-conditioning variables of each semi-directed cycle, the $\bw$-directed graph
is acyclic. This problem is {\sc coNP}-hard in general networks%
\footnote{This actually means that when the network is not too large,
we can probably solve this in a reasonable amount of time.},
but there are interesting classes of networks in which it is tractable.
This is the case when the number of semi-directed cycles
is not too large and either the size of $\shared(\cA)$ for each such cycle
or the size of each selector set is not too large.
Note that in practice, one would expect to have small selector sets --
statements such as ``$X$ is more important than $Y$ when $A=a$ and $B=b$ and $\ldots$ and $Z=z$''
appear to be more complex than what one would expect to hear.
Thus, Lemma~\ref{l:cond2},
Lemma~\ref{l:cond3} (for semi-directed cycles with small $\shared(\cA)$), and
Theorem~\ref{th:m-cycle-TCP-easy} are of more than just
theoretical interest.
Naturally, extending the toolbox of TCP-net
subclasses that can be efficiently tested for consistency is clearly
of both theoretical and practical interest.

\section{Reasoning about Conditionally Acyclic TCP-nets}
\label{S:constraints}
While automated consistency verification is the core part of the
preference elicitation stage, efficiency of reasoning about user
preferences is one of the main desiderata of any model for
preference representation. Of particular importance is the task of
preference-based optimization and constrained optimization, which we
discuss in the first part of this section. Another important task, which
provides an important component in the algorithm for
constrained optimization
we present, is outcome comparison -- discussed in the second part of
this section.

\subsection{Generating Optimal Assignments}
Following the notation of~\citeA{BBDHP.journal},
if $\bx$ and $\by$ are assignments to disjoint
subsets $\bX$ and $\bY$ of the variable set $\bV$, respectively,
we denote the combination of $\bx$ and $\by$ by $\bx\by$.
If  $\bX\cap \bY = \emptyset$ and $\bX\cup \bY = \bV$,
we call $\bx\by$ a {\em completion} of assignment $\bx$, and
denote by $\cmplet(\bx)$ the set of all completions of $\bx$.

One of the central properties of the original CP-net
model~\cite{BBDHP.journal} is that,
given an acyclic CP-net $\cN$ and a
(possibly empty) partial assignment $\bx$ on its variables, it is
simple to determine an outcome consistent with $\bx$ (a completion of $\bx$)
that is {\em preferentially optimal} with respect to
$\cN$. The corresponding linear time {\em forward sweep} procedure is
as follows: Traverse the variables in some topological order induced
by $\cN$, and set each unassigned variable to its most preferred value
given its parents' values.

Our immediate observation is that this procedure works {\em as
is} also for conditionally acyclic \TCPns: The relative
importance relations do not play a role in this case, and the
network is traversed according to a topological order induced by
the CP-net part of the given \TCPn. In fact,
Corollary~\ref{cor:tcp-opt} holds for any TCP-net that has no
directed cycles consisting only of ${\mathsf{cp}}$-arcs.

\begin{corollary}
  \label{cor:tcp-opt}
  Given a conditionally acyclic TCP-net and a (possibly empty) partial assignment $\bx$ on its variables,
  the forward sweep procedure constructs the most preferred outcome in
  $\cmplet(\bx)$.
\end{corollary}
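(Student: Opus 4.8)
The plan is to reduce the claim to the known optimality of the forward sweep for acyclic CP-nets and then observe that the added importance information can only shrink the set of satisfying orders, so it cannot overturn the CP-net--induced dominance.

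First I would note that the forward sweep reads only the ${\sf cp}$-arcs and the CPTs: the value assigned to each unassigned variable is determined entirely by the (conditional) preference information, and the sets ${\sf i}$, ${\sf ci}$, and ${\sf cit}$ are never consulted. For the procedure to be well defined one needs a topological order of the ${\sf cp}$-arcs, so I must first check that the ${\sf cp}$-arcs alone form an acyclic digraph. This follows from conditional acyclicity: by Definition~\ref{def:tcp-dependency} every ${\sf cp}$-arc appears in $\cNN$, and by Definition~\ref{def:tcp-w-directed} it survives unchanged in every $\bw$-directed graph; hence a directed cycle of ${\sf cp}$-arcs would yield a cycle in each $\bw$-directed graph, contradicting Definition~\ref{d:adag}. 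Thus the underlying structure is an acyclic CP-net $\cN_{\sf cp}$ (the net obtained from $\cN$ by setting ${\sf i}={\sf ci}=\emptyset$), and the forward sweep is well defined.

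Next I would invoke the optimality of the forward sweep for acyclic CP-nets~\cite{BBDHP.journal}: the outcome $o^{*}$ it produces is the unique dominating completion of $\bx$, i.e. $o^{*}\succ o$ for every $o\in\cmplet(\bx)\setminus\{o^{*}\}$ in every preference order satisfying $\cN_{\sf cp}$. If desired, this is proved by an improving flipping argument: given $o\neq o^{*}$ in $\cmplet(\bx)$, let $X$ be the earliest variable (in the topological order) on which $o$ and $o^{*}$ disagree; since all of $Pa(X)$ precede $X$, they already agree with $o^{*}$, giving a parent assignment $\bp$, and since $o^{*}$ assigns $X$ its most preferred value under $\succ^{X}_{\bp}$, flipping $X$ in $o$ toward its $o^{*}$-value strictly improves the outcome. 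Iterating produces an improving sequence of single-variable flips from $o$ up to $o^{*}$, whence $o^{*}\succ o$ by transitivity.

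Finally I would transfer this to the full TCP-net. By Definition~\ref{def:tcp-satisf}, any $\succ$ satisfying $\cN$ satisfies in particular $CPT(X)$ for every $X\in\bV$, i.e. it satisfies $\cN_{\sf cp}$; hence the set of orders satisfying $\cN$ is contained in the set of orders satisfying $\cN_{\sf cp}$. Consequently the dominance $o^{*}\succ o$, which holds in every order satisfying $\cN_{\sf cp}$, holds a fortiori in every order satisfying $\cN$, so $o^{*}$ is the most preferred outcome in $\cmplet(\bx)$ with respect to $\cN$. Since $\cN$ is conditionally acyclic it is satisfiable by Theorem~\ref{l:sat1}, so this statement is non-vacuous. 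The main obstacle is purely conceptual: one must see that the importance relations, being additional constraints, can only add orderings between outcomes and never remove the CP-net witness $o^{*}\succ o$; once this monotonicity is in place the argument is immediate, which is precisely why the result is stated as a corollary.
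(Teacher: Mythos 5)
Your proposal is correct and follows the same route the paper takes: the paper justifies the corollary by the observation that the forward sweep consults only the ${\sf cp}$-arcs and CPTs, so the acyclic CP-net result of Boutilier et al.\ applies, and the importance information (being additional constraints that only restrict the set of satisfying orders) cannot overturn the CP-net--entailed dominance. You merely spell out the details the paper leaves implicit, including the check that conditional acyclicity forces the ${\sf cp}$-arc subgraph to be acyclic.
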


\commentout{
However, as in CP-nets, adding hard constraints on the variables
of the TCP-net makes the optimization process hard.
In the rest of this section we discuss constraint-based outcome optimization
with respect to conditionally acyclic TCP-nets.
}

This strong computational property of outcome optimization with respect to acyclic
CP-nets (and conditionally acyclic \TCPns) does not hold if some
of the \TCPn\ variables are constrained by a set of hard
constraints, $\cC$.  In this case, determining the set of
{\em preferentially non-dominated}%
\footnote{An outcome $o$ is said to be non-dominated
with respect to some
  preference order $\succ$ and a set of outcomes $S$ if there is no other $o' \in S$ such that $o'\succ o$.}  {\em feasible} outcomes is not trivial.  For acyclic
CP-nets, a branch and bound algorithm for determining the optimal feasible
outcomes was introduced by~\citeA{BBDHP.journal2}.  This algorithm has the important
{\em anytime} property -- once an outcome is added to the current set of
non-dominated outcomes, it is never removed. An important implication of this
property is that the {\em first\/} generated assignment that satisfies the set
of hard constraints is also preferentially non-dominated. In other words,  finding just one non-dominated solution
in this algorithm boils down to solving the underlying CSP under  certain variable and value ordering strategies.


Here we develop an extension/modification of the algorithm of~\citeA{BBDHP.journal2} to conditionally acyclic \TCPns. The extended
algorithm \searchtcp\ retains the anytime property and is shown in
Figure~\ref{opt2}. The key difference between processing an acyclic
CP-net and a conditionally acyclic \TCPn\ is that the semantics of the
former implicitly induces a single partial order of importance over
the variables (where each node precedes its
descendants)~\cite{BBDHP.journal}, while the semantics of the latter
induces a hierarchically-structured {\em set} of such partial orders:
Each such partial order corresponds to a single assignment to the
set of selector variables of the network, or, more specifically, to a certain $\bw$-directed graph.

\begin{figure}[t]
\begin{center}
\fbox{
  \begin{minipage}{\textwidth}
\begin{tabbing}
xx\=xx\=xx\=xx\=xx\=xx\= \kill
{\small \underline{\searchtcp}} ($\cN$, $\cC$, $\cK$)\\
${\mathsf{Input}}$: Conditionally acyclic \TCPn\ $\cN$,\\
\hspace{27pt}  Hard constraints $\cC$ on the variables of $\cN$,\\
\hspace{27pt}  Assignment $\cK$ to the variables of $\cN_{orig} \setminus \cN$.\\
${\mathsf{Output}}$: Set of all, non-dominated w.r.t.\ $\cN$, solutions for $\cC$.\\
\ \\
1. Choose any variable $X$ s.t. there is no ${\mathsf{cp}}$-arc $\cparc{Y}{X}$,\\
\>\> no ${\mathsf{i}}$-arc $\iarc{Y}{X}$, and no $\ciarc{X}{Y}$ in $\cN$.\\
2. Let $x_1 \succ \ldots \succ x_k$ be a total order on $\cD(X)$ consistent with the preference\\
\>\> ordering of $\cD(X)$ by the assignment on $Pa(X)$ in $\cK$.\\
3. Initialize the set of local results by $\cR = \emptyset$\\
4. {\bf for} $(i=1;\ i\leq k;\ i++)$ {\bf do}\\
5. \>\> $X = x_i$\\
6. \>\> Strengthen the constraints $\cC$ by $X = x_i$ to obtain $\cC_i$\\
7. \>\> {\bf if} $\cC_j \subseteq \cC_i$ for some $j < i$ {\bf or} $\cC_i$
 is inconsistent {\bf then}\\
8. \>\>\> {\bf continue} with the next iteration\\
 \>\> {\bf else}\\
 9. \>\>\> Let $\cK'$ be the partial assignment induced by $X = x_i$
  and $\cC_i$\\
 10. \>\>\> $\cN_i$ = ${\mathsf{Reduce}}$ ($\cN$,$\cK'$)\\
  11. \>\>\> Let $\cN_i^1,\ldots,\cN_i^m$ be the components of $\cN_i$, 
  connected\\
   \>\>\>\>  either by the edges of $\cN_i$ or by the constraints
  $\cC_i$.\\
  12. \>\>\> {\bf for} $(j=1;\ j\leq m;\ j++)$ {\bf do}\\
   13. \>\>\>\> $\cR_i^j$ = \searchtcp ($\cN_i^j, \cC_i, \cK\cup\cK'$)\\
  14. \>\>\> {\bf if} $\cR_i^j \neq \emptyset$ for all $j \leq m$ {\bf
    then}\\
   15. \>\>\>\> {\bf foreach} $o \in \cK' \times \cR_i^1 \times \cdots
      \times \cR_i^m$ {\bf do}\\
     16 \>\>\>\>\> {\bf if} $\cK \cup o' \not\succ \cK \cup o$  holds
      \underline{for each} $o' \in \cR$ {\bf then} add $o$ to $\cR$\\
17. {\bf return} $\cR$
\end{tabbing}
\end{minipage}
}
\end{center}
\caption{The \searchtcp\ algorithm for conditionally acyclic TCP-net based constrained optimization.}
\label{opt2}
\end{figure}

Formally, the problem is defined by a conditionally acyclic
TCP-net $\cN_{orig}$, and a set of hard constraints $\cC_{orig}$,
posed on the variables of $\cN_{orig}$.  The \searchtcp\
algorithm (depicted in Figure~\ref{opt2}) is recursive, and each
recursive call receives three parameters:
\begin{enumerate}
\item A TCP-net $\cN$, which is a subnet of the original conditionally acyclic TCP-net
  $\cN_{orig}$,
\item A set $\cC$ of hard constraints among the variables of $\cN$,
  which is a subset of the original set of constraints $\cC_{orig}$
  obtained by restricting $\cC_{orig}$ to the variables of $\cN$, and
\item An assignment $\cK$ to all the variables in $\cN_{orig} - \cN$. In what follows, we refer to this assignment $\cK$ as a {\em context}.
\end{enumerate}
The initial call to \searchtcp\ is done with $\cN_{orig}$, $\cC_{orig}$,
and $\{\}$, respectively.

Basically, the \searchtcp\ algorithm starts with an empty set of solutions, and gradually extends it by adding new non-dominated solutions to $\cC_{orig}$. At each stage of the algorithm, the current set of solutions serves as a ``lower bound'' for future candidates; A new candidate at any point is compared to all solutions generated up
   to that point. If the candidate is dominated by no member of the current  solution set, then it is added into this set.

The \searchtcp\ algorithm is guided by the graphical structure of $\cN_{orig}$.  It proceeds by assigning values to the variables in a top-down manner, assuring
that outcomes are generated in an order that satisfies (i.e., consistent with) $\cN$. On a recursive call to the \searchtcp\ procedure with a \TCPn\ $\cN$, the
eliminated variable $X$ is one of the root variables of $\cN$ (line 1). Recall that, by Lemma~\ref{l:aux2},
conditional acyclicity of $\cN$ guarantees
the existence of such a root variable $X$.  The values of  $X$ are considered according to the preference ordering induced on $\cD(X)$ by the assignment provided by the context $\cK$ to $Pa(X)$ (where $Pa(X)$ is defined with respect to $\cN_{orig}$). Note that $\cK$
necessarily contains some assignment to $Pa(X)$ since $X$ is a root variable of the currently considered subnet $\cN$ of $\cN_{orig}$.
Any additional variable assignment $X=x_i$ converts the current set of constraints $\cC$ into a strictly non-weaker
constraint set $\cC_i$.  As a result of this propagation of $X = x_i$,
values for some variables (at least, the value of $X$) are fixed
automatically, and this partial assignment $\cK'$ extends the current context
$\cK$ in recursive processing of the next variable. The ${\mathsf{Reduce}}$ procedure,
presented in Figure~\ref{fig:reduce}, refines the \TCPn\ $\cN$ with respect to
$\cK'$: For each variable assigned by $\cK'$, we reduce both the CPTs and the
CITs involving this variable, and remove this variable from the network. This
reduction of the CITs may remove conditioning of relative importance between
some variables, and thus convert some ${\mathsf{ci}}$-arcs into
${\mathsf{i}}$-arcs, and/or remove some ${\mathsf{ci}}$-arcs completely.  The
main point is that, in contrast to CP-nets, for a pair of $X$ values $x_i,x_j$,
the variable elimination orderings for processing the networks $\cN_i$ and $\cN_j$, resulting from
propagating $\cC_i$ and $\cC_j$, respectively, may  {\em disagree} on the
ordering of some variables.

\begin{figure}[ht]
\begin{center}
\fbox{
  \begin{minipage}{\textwidth}
    \begin{tabbing}
      xx\=xx\=xx\=xx\=xx\=xx\= \kill
      \underline{${\mathsf{Reduce}}$} ($\cN$, $\cK'$)\\
     1.  {\bf foreach} $\{X = x_i\} \in \cK'$ {\bf do}\\
      2. \>\> {\bf foreach} ${\mathsf{cp}}$-arc $\cparc{X}{Y} \in \cN$ {\bf do}\\
      3. \>\>\> Restrict the CPT of $Y$ to the rows dictated by $X = x_i$.\\
      4. \>\> {\bf foreach} ${\mathsf{ci}}$-arc $\gamma = \ciarc{Y_1}{Y_2} \in \cN$
      s.t. $X \in \cS(\gamma)$ {\bf do}\\
      5. \>\>\> Restrict the CIT of $\gamma$ to the rows dictated by $X = x_i$.\\
      6. \>\>\> {\bf if}, given the restricted CIT of $\gamma$, relative importance\\
      \>\>\>\>between $Y_1$ and $Y_2$ is independent of $\cS(\gamma)$, {\bf then}\\
      7. \>\>\>\>\> {\bf if} CIT of $\gamma$ is not empty {\bf then}\\
      8. \>\>\>\>\>\> Replace $\gamma$ by the corresponding ${\mathsf{i}}$-arc.\\
      9. \>\>\>\>\> {\bf else} Remove $\gamma$.\\
      10. \>\> Remove from $\cN$ all the edges involving $X$.\\
      11. {\bf return} $\cN$.
    \end{tabbing}
  \end{minipage}
  }
\end{center}
\caption{The {\sf Reduce} procedure.} \label{fig:reduce}
\end{figure}

If the partial assignment $\cK'$ causes the current CP-net to become
disconnected with respect to both the edges of the network and the
inter-variable hard constraints, then each connected component invokes an
independent search (lines 11-16). This is because optimization of the variables
within such a component is independent of the variables outside that
component. In addition, after strengthening the set of constraints
$\cC$ by $X = x_i$ to $\cC_i$ (line 6), some pruning takes place in the
search tree (lines 7-8): If the
set of constraints $\cC_i$ is strictly more restrictive than some
other set of constraints $\cC_j = \cC \cup \{X=x_j\}$ where $j < i$,
then the search under $X = x_i$ is not continued. The reason for this
pruning is that it can be shown that any feasible outcome $a$
involving $X = x_i$ is dominated by (i.e., less preferable than) some
feasible outcome $b$ involving $X = x_j$ and thus $a$ cannot be in
the set of non-dominated solutions for the original set of
constraints\footnote{ This pruning
was introduced by~\citeA{BBDHP.journal2} for acyclic CP-nets, and it remains
valid the same way for conditionally acyclic TCP-nets. For the proof of soundness of this pruning technique we refer the reader to Lemma 2 in~\cite{BBDHP.journal2}.}.  Therefore, the search is depth-first branch-and-bound,
where the set of non-dominated solutions generated so far is a proper
subset of the required set of all the non-dominated solutions for the
problem, and thus it corresponds to the current lower bound.

When the potentially non-dominated solutions for a particular
subgraph are returned with some assignment $X = x_i$, each such
solution is compared to all non-dominated solutions involving more
preferred (in the current context $\cK$) assignments $X = x_j$, $j <
i$ (line 16). A solution with 
$X = x_i$ is added to the set of the
non-dominated solutions for the current subgraph and context if and
only if it passes this non-domination test. From the
semantics of the \TCPns, given the same context $\cK$, a solution
involving $X = x_i$ can not be preferred to a solution involving $X =
x_j$, $j < i$. Thus, the generated global set $\cR$ never shrinks.


\begin{theorem}
\label{t:search-tcp-correct}
  Given a conditionally acyclic TCP-net $\cN$ and a set of hard constraints $\cC$ over the variables of $\cN$, an outcome $o$ belongs to the set $\cR$ generated by the algorithm \searchtcp\ if and only if $o$ is consistent with $\cC$, and there is no other outcome $o'$ consistent with $\cC$ such that $\cN \models o' \succ o$.
\end{theorem}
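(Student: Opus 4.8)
The plan is to prove a stronger loop invariant by induction on the number of variables of $\cN$ (equivalently, on the recursion depth): a call \searchtcp$(\cN,\cC,\cK)$ returns exactly the set of assignments $o$ to the variables of $\cN$ such that (i) $\cK\cup o$ is consistent with $\cC$, and (ii) no assignment $o'$ to the variables of $\cN$ with $\cK\cup o'$ consistent satisfies $\cN \models \cK\cup o'\succ \cK\cup o$. The theorem is the special case of the top-level call with $\cN=\cN_{orig}$, $\cC=\cC_{orig}$, and $\cK=\{\}$. The base case of a single variable is immediate: the values of the unique root are enumerated in preference order and filtered by $\cC$, and the first feasible value dominates all later ones by the root property. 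For the inductive step, let $X$ be the root variable chosen in line~1 (its existence guaranteed by Lemma~\ref{l:aux2}), and let $x_1\succ\cdots\succ x_k$ be the preference order on $\cD(X)$ induced by the assignment of $\cK$ to $Pa(X)$.

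The single semantic fact driving the whole argument is that, because $X$ is a root of $\cN$ (no incoming ${\mathsf{cp}}$- or ${\mathsf{i}}$-arc, and no ${\mathsf{ci}}$-arc making $X$ the higher-importance candidate), improving the value of $X$ never hurts. Concretely, for $j<i$ there is a strict total order satisfying $\cN$ --- namely the lexicographic order built in the proof of Theorem~\ref{l:sat1} with $X$ placed first --- in which every outcome with $X=x_j$ is preferred to every outcome with $X=x_i$; hence an outcome carrying a less preferred root value can never entail-dominate one carrying a more preferred root value. I would use this in two ways: it shows that iterations are processed in a domination-respecting order, so $\cR$ only grows (the anytime property), and it disposes of one case below.

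For soundness I would take $o\in\cR$ with $X=x_i$ and a hypothetical feasible $o'$ with $\cN\models \cK\cup o'\succ\cK\cup o$, and split on the $X$-value $x_j$ of $o'$. If $j>i$, the lexicographic order above rules out entailment. If $j=i$, both $o$ and $o'$ live in the reduced net $\cN_i$, which the partial assignment $\cK'$ may disconnect into independent components $\cN_i^1,\dots,\cN_i^m$; any entailed domination must then be realized within some component, and the inductive hypothesis applied to that component contradicts $o\in\cR_i^{\,\ell}$. If $j<i$, I invoke completeness of the already-finished iterations together with transitivity (Lemma~\ref{l:tcp-transitive}): either $o'$ is itself non-dominated, hence already in $\cR$ when $o$ was tested in line~16, or $o'$ is dominated by some non-dominated $o''$ which by transitivity also dominates $o$ and is in $\cR$; either way $o$ could not have passed line~16, a contradiction. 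Feasibility of every $o\in\cR$ is immediate from the constraint propagation in lines~6--9.

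For completeness I would take a feasible, non-dominated $o$ with $X=x_i$ and argue it is emitted. The pruning in line~7 cannot discard the iteration that builds $o$: if $\cC_i\supseteq\cC_j$ for some $j<i$, the soundness of this pruning (Lemma~2 of~\citeA{BBDHP.journal2}, which carries over unchanged to conditionally acyclic TCP-nets) guarantees that every feasible completion with $X=x_i$ is dominated by one with $X=x_j$, contradicting non-domination of $o$; and $\cC_i$ is consistent since $o$ is feasible. Within iteration $i$, restricting $o$ to each component $\cN_i^\ell$ yields an outcome non-dominated \emph{within} that component, so by induction each restriction lies in $\cR_i^\ell$ and $o$ is formed in the product of line~15, where it passes line~16 since no member of $\cR$ dominates it. The step I expect to be the main obstacle is exactly the factorization of entailment-domination over these independent components: I must show that $\cN\models\cK\cup o'\succ\cK\cup o$, for $o,o'$ agreeing on $X=x_i$, holds if and only if the domination is realized inside a single connected component while the assignments on the remaining components are held fixed. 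Establishing that the decoupling of variables by the reduced net together with the hard constraints forces the satisfying orders to respect this product structure --- so that no cross-component preference can be entailed --- is the crux, and is considerably more delicate than the bookkeeping of the three $X$-value cases.
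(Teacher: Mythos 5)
Your proposal follows essentially the same route as the paper's own proof: induction on the number of variables via elimination of a root variable (Lemma~\ref{l:aux2}), a case split on whether the two outcomes agree on the root value, an appeal to the lexicographic satisfying order constructed in the proof of Theorem~\ref{l:sat1} to rule out entailed dominance across root values, and the cited pruning lemma for the soundness of lines 7--8. The one step you flag as unresolved --- factorizing entailed dominance over the connected components of the reduced net --- is silently elided in the paper as well, whose soundness argument treats the per-component recursive calls as a single call on $\cN_{l}$, so your account is, if anything, the more careful of the two.
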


\begin{proof}
  Let $\cR_{\cC}$ be the desired set of all the preferentially non-dominated solution to $\cC$. To prove this theorem, we should show that:
  \begin{enumerate}
    \item {\em Completeness:} No preferentially
    non-dominated solution to $\cC$ is pruned out, that is, we have $\cR \supseteq \cR_{\cC}$, and
    \item {\em Soundness:} The resulting set $\cR$ contains no preferentially dominated solution to $\cC$, that is, $\cR \subseteq \cR_{\cC}$.
  \end{enumerate}
\noindent (1) The solutions to $\cC$ are pruned by \searchtcp\ only in two places, namely at the search space pruning in lines 7-8, and at the non-dominance test step in line 16. For the first case, the correctness of the pruning technique used in lines 7-8 is given by Lemma 2 in~\cite{BBDHP.journal2}, and thus this pruning does not violate completeness of \searchtcp. For the second case, if an explicitly generated
solution $o$ is rejected due to the failure of its non-dominance test, then $o \not\in \cR_{\cC}$ is apparent since the rejection of $o$ here is based on presenting a concrete solution $o'$ such that $\cN \models o' \succ o$. Hence, we have $\cR \supseteq \cR_{\cC}$.

\ \\
\noindent (2) To show $\cR \subseteq \cR_{\cC}$ it is enough to
prove that a newly generated solution cannot dominate an existing solution, that is, if $o$ was added to the generated set of solutions after $o'$ then it is not the case that $\cN \models o\succ o'$.
The proof is by induction on the number of problem
variables. First, the claim trivially holds for any one-variable TCP-net, as the order in which the solutions are examined in line 16 coincides with the total order selected for the single variable of the network in line 2. Now, assume that the claim holds for all conditionally acyclic \TCPns\ with fewer than $n$ variables.
Let $\cN$ be a \TCPn\ over $n$ variables, $\cC$ be a set of hard constraints on these variables, and $X$ be the root variable of $\cN$ selected in line 1.
Let $\cR = \{o_{1},\ldots,o_{r}\}$ be the output of \searchtcp\ for these $\cN$ and $\cC$, where the elements of $\cR$ are numbered according to the order of their non-dominance examination in line 16.
Now, assume that there exists a pair of assignments $o_{i},o_{j} \in \cR$, such that $i < j$, yet $\cN \models o_{j} \succ o_{i}$.

First, suppose that $o_{i}$ and $o_{j}$ provide the same value to
$X$, that is $o_{i} = x_{l}o_{i}'$ and $o_{j} = x_{l}o_{j}'$, for
some $x_{l} \in \cD(X)$.  In this case, however, $o_{i}'$ and
$o_{j}'$ belong to the output of the same recursive call to
\searchtcp\ with $\cN_{l}$ and $\cC_{l}$, and thus, by our
inductive hypothesis, $o_{i}'$ and $o_{j}'$ are preferentially
incomparable. Likewise, $\cN_{l}$ is obtained in line 10 by
reducing $\cN$ with respect to $x_{l}$, and thus the variables of
$\cN_{l}$ are preferentially independent of $X$. Hence,
preferential incomparability of $o_{i}'$ and $o_{j}'$ implies
preferential incomparability of $o_{i}$ and $o_{j}$, and thus
$\cN \models o_{j} \succ o_{i}$ cannot be the case.

Alternatively, suppose that $o_{i}$ and $o_{j}$ provide two
different values to $X$, that is $o_{i} = x_{l}o_{i}'$ and $o_{j}
= x_{m}o_{j}'$, $x_{l}, x_{m} \in \cD(X)$, where $\cD(X)$ is
numbered according to the total ordering of its values selected
in line 2. Observe that, by the construction of \searchtcp, $i <
j$ trivially implies $l < m$. However, using the arguments
identical to these in the constructive proof of
Theorem~\ref{def:tcp-satisf}, there exists at least one
preference order $\succ$ of the complete assignments to the
variables of $\cN$ in which we have $o_{i} \succ o_{j}$. Hence,
it cannot be the case that $\cN \models o_{j} \succ o_{i}$, and
thus contradiction of our assumption that $\cN \models o_{j} \succ
o_{i}$ is now complete.
\end{proof}

Note that, if we are interested in getting {\em one} non-dominated solution for the given set of hard constraints (which is often the case), we can output the {\em first} feasible outcome
generated by \searchtcp. No comparisons between pairs of outcomes are required because
there is nothing to compare with the first generated
solution. However, if we are interested in getting {\em all}, or even
{\em a few} non-dominated solutions,
then the efficiency of preferential comparison between pairs of outcomes becomes an important factor
in the entire complexity of the \searchtcp\ algorithm. Hence, in the next section we consider such preferential comparisons more closely.

\commentout{
\begin{definition}
  \label{def:tcp-prefix}
  Let $\cN$ be a conditionally acyclic \TCPn, and $\cS(\cN)$ be the union of
  the selector variables in $\cN$. $\cS'(\cN) \subseteq \cS(\cN)$ is referred
  to as a {\em prefix} of $\cS(\cN)$ if and only if, for each $X\in\cS'(\cN)$,
  and for each $Y\in\cS(\cN)\setminus\cS'(\cN)$, $X$ is not reachable from $Y$
  in the dependency graph of $\cN$.
\end{definition}

Let $\Omega$ be a set of partial orders over the variables of $\cN$, that agree
on an assignment to a prefix $\cS'(\cN)$ of $\cS(\cN)$.  By
Definition~\ref{def:tcp-prefix}, all the orders in $\Omega$ agree as well on
(partial) ordering of all the variables in $\cN$, relative importance between
which is fully determined by $\cS'(\cN)$.

Observe, that any set of partial orders over the variables of $\cN$, that agree
on an assignment on a prefix $\cS'(\cN)$ of $\cS(\cN)$, agree on ordering of
all the variables in $\cN$, the relative importance of which is fully
determined by $\cS'(\cN)$.
}

\subsection{Dominance Testing for TCP-nets}
\label{ss:tcp-dominance}
One of the most fundamental queries in any preference-representation
formalism is whether some outcome $o$ dominates (i.e., is strictly
preferred to) some other outcome $o'$. As discussed above, such {\em dominance
queries} are required whenever we wish to generate more than one
non-dominated solution to a set of hard constrains.  Much like in CP-nets, a dominance query
$\langle \cN, o, o'\rangle$ with respect to a TCP-net can be treated
as a search for an improving flipping sequence from the (purported)
less preferred outcome $o'$ to the (purported) more preferred outcome
$o$ through a sequence of successively more preferred outcomes, such
that each flip in this sequence is directly sanctioned by the given
TCP-net. Formally, an improving flipping sequence in the context of
TCP-nets can be defined as follows:

\begin{definition}
  \label{def:tcp-flip}
  A sequence of outcomes
  \[
  o' = o_0 \prec o_1 \prec \cdots \prec o_{m-1} \prec o_m = o
  \]
  is an {\em improving flipping sequence with respect to a \TCPn}\
  $\cN$ if and only if, for $0 \leq i < m$, either
  \begin{enumerate}
  \item {\it (CP-flips)} outcome $o_i$ is different from the outcome
    $o_{i+1}$ in the value of exactly one variable $X_j$, and $o_i[j]
    \prec o_{i+1}[j]$ given the (identical) values of $Pa(X_j)$ in $o_i$
    and $o_{i+1}$, or
  \item {\it (I-flips)} outcome $o_i$ is different from the outcome
    $o_{i+1}$ in the value of exactly {\em two} variables $X_j$ and
    $X_k$, $o_i[j] \prec o_{i+1}[j]$ and $o_i[k] \succ o_{i+1}[k]$
    given the (identical) values of $Pa(X_j)$ and $Pa(X_k)$ in $o_i$ and
    $o_{i+1}$, and $X_j \imp X_k$ given $\cR\cI(X_j,X_k|\bZ)$ and the
    (identical) values of $\bZ$ in $o_i$ and $o_{i+1}$.%
    \footnote{We implicitly assumed that neither node is the
    parent of the other. An implicit consequence of the standard semantics
    of conditional preferences is a node is more important than
    its children. Thus, there is no need to specify this
    explicitly.}
  \end{enumerate}
\end{definition}

Clearly, each value flip in such a flipping sequence is sanctioned by
the \TCPn\ $\cN$, and the CP-flips are exactly the flips allowed in
CP-nets~\cite{BBDHP.journal}.

\begin{theorem}
  \label{t:tcp-flip}
  Given a TCP-net $\cN$ and a pair of outcomes $o$ and $o'$, we have that
  $\cN \models o \succ o'$ if and only if there is an improving flipping
  sequence with respect to $\cN$ from $o'$ to $o$.
\end{theorem}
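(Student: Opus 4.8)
The plan is to prove the two directions separately: the \emph{if} direction (soundness of flips) is routine, while the \emph{only if} direction (completeness) is the crux. Throughout I would write $\succ_f$ for the transitive closure of the single-flip relation, i.e.\ declare $q \succ_f p$ exactly when there is a one-step improving flip (a CP-flip or an I-flip in the sense of Definition~\ref{def:tcp-flip}) turning $p$ into $q$, and then close under transitivity. With this notation, the existence of an improving flipping sequence from $o'$ to $o$ is, by definition, exactly the statement $o \succ_f o'$, so the theorem asserts $\cN\models o\succ o'$ iff $o\succ_f o'$.

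For soundness, suppose there is an improving flipping sequence $o'=o_0\prec o_1\prec\cdots\prec o_m=o$. I would check that each single step is forced in \emph{every} satisfying order. A CP-flip between $o_i$ and $o_{i+1}$ changes only one variable $X_j$, with the better value of $X_j$ appearing in $o_{i+1}$ under the shared assignment to $Pa(X_j)$, so item~1 of Definition~\ref{def:tcp-satisf} gives $o_i\prec o_{i+1}$ in any $\succ$ satisfying $\cN$; an I-flip changes two variables $X_j,X_k$ with $X_j\imp X_k$ (possibly conditioned on a shared assignment to $\bZ$) and improves the more important variable $X_j$, so items~3--4 of Definition~\ref{def:tcp-satisf} again force $o_i\prec o_{i+1}$. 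Since each step is forced in every satisfying order, transitivity of entailment (Lemma~\ref{l:tcp-transitive}) yields $\cN\models o\succ o'$.

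For completeness I would argue by contraposition: assuming there is \emph{no} improving flipping sequence from $o'$ to $o$, i.e.\ $o\not\succ_f o'$, I would exhibit a single preference order that satisfies $\cN$ yet does not rank $o$ above $o'$, thereby showing $\cN\not\models o\succ o'$ in view of Definition~\ref{def:tcp-satisf2}. Two observations drive this: (i) $\succ_f$ is acyclic whenever $\cN$ is satisfiable, since any satisfying strict partial order $\succ_0$ must (by the soundness argument) contain every sanctioned single flip, so $\succ_f\subseteq\succ_0$ and hence inherits antisymmetry; and (ii) \emph{every} linear extension of $\succ_f$ satisfies $\cN$, because each ordering constraint imposed by a CPT or a (conditional) importance relation is, by Definition~\ref{def:tcp-satisf}, exactly a demand that certain one-flip pairs be ordered, and those pairs already lie in $\succ_f$ and are therefore preserved in any extension. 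Given $o\not\succ_f o'$, a standard order-extension argument (Szpilrajn) produces a strict total order $\succ^\star\supseteq\succ_f$ with $o'\succ^\star o$; by (ii) this $\succ^\star$ satisfies $\cN$, and since $o\not\succ^\star o'$ we conclude $\cN\not\models o\succ o'$.

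The main obstacle is securing observation (i): the completeness argument genuinely needs $\succ_f$ to be a strict partial order, which is equivalent to satisfiability of $\cN$. For the conditionally acyclic nets that are the subject of this section this is guaranteed by Theorem~\ref{l:sat1}, so the proof goes through; I would phrase the completeness direction under that hypothesis (equivalently, noting that for an unsatisfiable net entailment is vacuous while the flip graph necessarily contains a cycle). A secondary point to verify carefully is that the two flip types of Definition~\ref{def:tcp-flip} really exhaust the \emph{direct} sanctions of the semantics --- in particular the footnoted assumption that an I-flip never involves a parent/child pair --- so that observation (ii) is airtight; this is also where the non-binary domains must be handled, each CPT and CIT entry contributing its one-flip ordering constraints to $\succ_f$.
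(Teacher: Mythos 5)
Your proposal is correct and follows essentially the same route as the paper's proof: the \emph{if} direction via forced single flips plus transitivity of entailment (Lemma~\ref{l:tcp-transitive}), and the \emph{only if} direction by showing that every order extending the transitive closure of the flip relation (the paper's graph $\cG$) satisfies $\cN$, then extending it so that $o'$ sits above $o$. Your extra care about the acyclicity of $\succ_f$ and the implicit satisfiability hypothesis is a point the paper glosses over, but it does not change the argument.
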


\begin{proof}
  \label{page-t:tcp-flip}

  \vspace{0.1in}
  \noindent $\Longleftarrow$  Given an improving flipping sequence $\cF$:
 \[
  o' = o_0 \prec o_1 \prec \cdots \prec o_{m-1} \prec o_m = o
  \]
  from $o'$ to $o$ with
  respect to $\cN$,
  by Definition~\ref{def:tcp-flip},  we have
  $\cN \models o_i
  \succ o_{i+1}$ for any improving flip from $\cF$.
  The  proposition follows from the transitivity of preferential entailment
  with respect to TCP-nets (Lemma~\ref{l:tcp-transitive}).

  \vspace{0.2in}
  \noindent $\Longrightarrow$ Let $\cG$ be the graph of preferential ordering
  induced by $\cN$, i.e., nodes of $\cG$ stand for all outcomes, and
  there is a directed edge from $o_1$ to $o_2$ if and only if there is
  an improving CP-flip or I-flip of $o_1$ to $o_2$, sanctioned by
  $\cN$.  Clearly, directed paths in $\cG$ are equivalent to improving
  flipping sequences with respect to $\cN$.

  First, we show that any preference ordering $\succ$ that respects
  the paths in $\cG$ (that is, if there is a path from $o_1$ to $o_2$
  in $\cG$, then we have $o_2 \succ o_1$) satisfies $\cN$.  Assume to
  the contrary that $\succ^*$ respects the paths in $\cG$, and does
  not satisfy $\cN$. Then, by the definition of satisfiability
  (Definition~\ref{def:tcp-satisf2}), there must exist either:
  \begin{enumerate}
  \item Some variable $X$, assignment $\bp\in\cD(Pa(X))$, values
    $x,x'\in\cD(X)$, and assignment $\bw$ to the remaining variables $\bW
    = \bV - (X\cup Pa(X))$, such that $\bp x \bw \succ^* \bp x' \bw$,
    but $CPT(X)$ dictates that $x' \succ x$ given $\bp$, or
  \item Some importance arc $\xi$ between a pair of variables $X$ and $Y$,
    assignment $\bz\in \cD(\cS(\xi))$ (if $\xi$ is an
    ${\mathsf{i}}$-arc, then $\cS(\xi) = \emptyset$), values
    $x,x'\in\cD(X), y,y'\in\cD(Y)$, and assignment $\bw$ to the
    remaining variables $\bW = \bV - (\{X,Y\}\cup \cS(\xi))$, such
    that $\bp x y \bw \succ^* \bp x' y' \bw$, but (i) the $CPT(X)$ dictates
    that $x' \succ x$, and (ii) the (possibly empty) $CIT$ of $\xi$
    dictates that $X \imp Y$ given $\bz$.
  \end{enumerate}
  However, in the first case, if $\cN$ specifies $x' \succ x$ given
  $\bp$, there is a CP-flip from $\bp x' \bw$ to $\bp x \bw$,
  contradicting the fact that $\succ^*$ extends $\cG$. Similarly, in
  the second case, if $\cN$ specify $x' \succ x$ given $\bw$, and $X
  \imp Y$ given $z$, then there is an I-flip from $\bp x' y' \bw$ to
  $\bp x y \bw$, contradicting the fact that $\succ^*$ extends $\cG$.

  Now, by the construction of $\cG$, if there is no improving flipping
  sequence from $o'$ to $o$, then there is no directed path in $\cG$
  from $o'$ to $o$. Therefore, there exist a preference ordering  $\succ^*$
  respecting the paths in $\cG$  in which $o' \succ^* o$.
  However, based on the above observation on preference orderings
  respecting the paths in $\cG$, $\succ^*$ also satisfies $\cN$, which implies
  $\cN \not\models o \succ o'$.
\end{proof}


Various
methods can be used to search for a flipping sequence. In
particular, we believe that at least some of the techniques,
developed for this task with respect to CP-nets
by~\citeA{Domshlak:Brafman:kr02,Domshlak:PhD}, and~\citeA{BBDHP.journal} can be
applied to the \TCPn\ model -- an issue left for future
research.  However, in general, dominance testing with respect to
CP-nets (and thus TCP-nets) is known to be {\sc
NP}-hard~\cite{BBDHP.journal}, thus in practice one may possibly
consider performing approximate constrained optimization, using
the \searchtcp\ algorithm with a dominance  testing based on one
of the tractable refinements of TCP-nets such as those discussed
by~\citeA{Brafman:Domshlak:uai04}.

\section{Discussion}
\label{S:summary}
CP-nets~\cite{BBHP.UAI99,BBDHP.journal}
is a relatively new graphical model for representation and reasoning about preferences.
Its development, however, already stimulated research in several  directions
(e.g., see~\shortcite{Brafman:Chernyavsky,Brafman:Dimopoulos:ci04,Brewka:aaai02,Boutilier:Bacchus:Brafman:uai01,DRVW:ijcai03,RVW:aaai04,Lang:kr02,Wilson:aaai04,Wilson:ecai04}).
In this paper we introduced the qualitative notions of absolute and
conditional relative importance between pairs of variables and
extended the CP-net model to capture the corresponding preference
statements. The extended model is called \TCPns. We identified a wide
class of \TCPns\ that are satisfiable, notably the class of
conditionally acyclic \TCPns, and analyzed complexity and algorithms
for testing membership in this class of networks. We also studied
reasoning about \TCPns, focusing on outcome optimization in
conditionally acyclic \TCPns\ with and without hard constraints.

Our work opens several directions for future
research.  First, an important open theoretical question is the
precise complexity of dominance testing in \TCPns.  In the context of
CP-nets this problem has been studied
by~\shortciteA{Domshlak:PhD,BBDHP.journal,Goldsmith:etal:ijcai05}.
Another question is the consistency of \TCPns\ that are not
conditionally acyclic. A preliminary study of this issue in context
of cyclic CP-nets has been done by~\citeA{Domshlak:Brafman:kr02} and~\shortciteA{Goldsmith:etal:ijcai05}.

The growing research on preference modeling
is motivated by the need for preference elicitation,
representation, and reasoning techniques in diverse areas of AI and user-centric information systems. In particular, one
of the main application areas we have in mind is this of automatic personalized product configuration~\cite{Sabin:Weigel:ieee98}. Thus,
in the remaining part of this section, we first consider the process of preference elicitation with TCP-nets, listing a few practical challenges that should be addressed to make this process appealing to
users en-masse. Then, we relate our work to some
other approaches to preference-based optimization.

\subsection{Preference Elicitation with TCP-nets (and Other Logical Models of Preference)}

The process of preference elicitation is known to be complex
as into account should be taken not only the formal model of the user's preferences but also numerous important factors of human-computer interaction (e.g.,
see~\cite{Pu:Faltings:etal:iui04,Pu:Faltings:constraints04}). In this
paper we focus on a formalism for structuring and analyzing the
user's preferences, although for some (probably offline)
applications, this formalism could actually be used to drive the
input process, much like a Bayes network can be used to help experts
express their beliefs.

Depending on the application, a {\em schematic} process of
constructing a \TCPn\ would commence by asking the decision maker
to identify the variables of interest, or by presenting them to
the user, if they are fixed. For example, in the application of
CP-net to adaptive document
presentation~\cite{Domshlak:Brafman:Shimony:ijcai01,Domshlak:Brafman:Shimony:tois04},
the content provider chooses a set of content elements, which
correspond to the set of variables. For an online
shopper-assistant agent, the variables are likely to be fixed
(e.g., if the agent is an online PC
customizer)~\cite{Brafman:Domshlak:uai04}. Next, the user is
asked to consider for each variable, the value of which other
variables influence her preferences over the values of this
variable. At this point ${\mathsf{cp}}$-arcs and CPTs are
introduced. Next, the user is asked to consider relative
importance relations, and the ${\mathsf{i}}$ and
${\mathsf{ci}}$-arcs are added. For each ${\mathsf{ci}}$-arc, the
corresponding CIT is filled.

Clearly, one may prefer to keep the preference elicitation process more user-driven, allowing the user simply provide us with a set of preference statements. But if such a set of statements fits the language expressible by the TCP-nets model, then the specific TCP-net underlying these statements can be constructed from a simple analysis of referents and conditionals of these statements. Such TCP-net extraction from the statements will be simpler if these statements will be provided in this or another formal language, or obtained via some carefully designed, structured user interface. However, for the user it is obviously more natural to provide these statements in natural language. Hence,
an interesting practical question related to elicitation of
qualitative preferences is model acquisition from
speech and/or text~\cite{Asher:Morreau:95,james99challenges,Bethard:etal:04}. Observe that
the intuitiveness of the qualitative preferential statements is
closely related to the fact that they have a straightforward
representation in natural language of everyday life.  In addition,
collections of typical preferential statements seem to form a linguistic domain that is a priori constrained in a very special manner.  This may
allow us to develop specialized techniques and tools for understanding
the corresponding language. Both offline and online language
understanding should be considered, since a user can either describe
her preferences offline, as a self-contained text, or can be asked
online, as a part of interactive process of (possibly mixed)
preference elicitation and preference-based constrained
optimization.

Yet another possible approach for eliciting  TCP-nets, as well as some alternative logical models of preferences, would be to allow the user expressing pair-wise comparisons between completely specified choices, and then construct a TCP-net consistent with this input. In the scope of quantitative models for preference representation, such an example-based model generation has been adopted in numerous user-centric optimization systems (e.g., see~\cite{Linden:Hanks:Lesh:97,veil}.) However, devising such a framework for learning qualitative models of preference seems to be somewhat more challenging. In theory, nothing prevents us from adopting example-based generation of TCP-nets since the latter can be seen as just a compact representation of a preference relation over a space of choices. The question, however, is whether a reasonably small set of pair-wise choice comparisons can provide us with a sufficient basis for learning not just a TCP-net consistent with these ``training examples'', but a compact TCP-net that will generalize  in a justifiable manner beyond the provided examples. To the best of our knowledge, so far this question has been studied for no logical preference-representation models, and hence it clearly poses a challenging venue for future research.\footnote{Note that, if we are only interested in compact modeling of pair-wise comparisons between the choices, then numerous techniques from the area of machine learning can be found useful. For instance, one can learn a decision tree classifying {\em ordered pairs} of choices as ``preferred'' (first choice to the second choice) or ``not preferred''. However, such a classification does not guarantee in general that the resulting binary relation over the space of choices will be anti-symmetric under the assumption of preference transitivity (a joint property that considered to be extremely natural in the literature on preference structures~\cite{Hansson:01}.)}

\subsection{Related Work}

As we show in Section~\ref{S:constraints}, extending CP-nets to TCP-nets is
appealing mainly in the scope of decision scenarios where the space of all
syntactically possible choices is (either explicitly or implicitly) constrained by some hard constraints.
We now review
some related approaches to preference-based optimization that
appeared in the literature.

A primary example of preference-based
optimization is the soft-constraints formalism (e.g., see~\shortciteA{Semiring:acm97}),
developed to model constraint satisfaction problems that are
either over-constrained (and thus unsolvable according to the
standard meaning of satisfaction)~\cite{Freuder:Wallace:aij92},
or suffer from imprecise knowledge about the actual
constraints~\cite{Fargier:Lang:93}.  In this formalism, the
constrained optimization problem is represented as a set of
preference orders over assignments to subsets of variables,
together with some operator for combining the preference relations
over these subsets of variables to a preference relation over the
assignments to the whole set of variables.  Each such subset of
variables corresponds to a soft constraint that can be
satisfied to different extent by different variable assignments.
There is much flexibility in how such ``local'' preference orders
are specified, and how they are combined. Various
soft-constraints models, such as weighted~\shortcite{softCSP:c99},
fuzzy~\cite{Schiex:uai92}, probabilistic~\cite{Fargier:Lang:93},
and lexicographic~\shortcite{Fargier:Lang:Schiex:eufit93} CSPs, are
discussed in the literature on soft constraint satisfaction.

The conceptual difference between our approach and the
soft-constraints formalism is that the latter is based on tightly
coupled representation of preferences and constraints, while our
representation of these two paradigms is completely decoupled.
Informally, soft constraints machinery has been developed for
optimization {\em of} partial constraint satisfaction, while we
are dealing with optimization {\em in the face} of constraint
satisfaction.  For instance, in personalized product
configuration, there are two parties involved typically: the
manufacturer and the consumer.  The manufacturer brings forth its
product expertise, and with it a set of hard constraints on
possible system configurations and the operating environment.
The user expresses her preferences over properties of the final
product. Typically numerous configurations satisfy the production
constraints, and the manufacturer strives to provide the user
with maximal satisfaction by finding one of the most preferred,
feasible product configuration. This naturally leads to a
decoupled approach.

\citeA{Freuder:OSullivan:cp01} proposed a framework of
interactive sessions for over-constrained problems. During such a session, if
the constraint solver discovers that no solution for the current set of
constraints is available, the user is asked to consider ``tradeoffs''.  For
example, following~\citeA{Freuder:OSullivan:cp01}, suppose that the set of
user requirements for a photo-camera configuration tool is that the weight of
the camera should be less that 10 ounces and the zoom lens should be at least
10X. If no camera meets these requirements, the user may specify tradeoffs such
as ``I will increase my weight limit to 14 ounces, if I can have a zoom lens of
at least 10X'' (possibly using some suggestions automatically generated by the
tool). In turn, these tradeoffs are used for refining the current set of
requirements, and the system goes into a new constraint satisfaction process.

The tradeoffs exploited by~\citeA{Freuder:OSullivan:cp01} correspond to the
information captured in TCP-nets by the ${\mathsf{i}}$-arcs.  However, instead
of treating this information as an incremental ``compromising'' update to the
set of hard constraints as done by~\citeA{Freuder:OSullivan:cp01}, in the TCP-net
based constrained optimization presented in Section~\ref{S:constraints}, we
exploit this information to guide the constraint solver to the preferable
feasible solutions. On the other hand,
the motivation and ideas behind the
work of Freuder and O'Sullivan as well as these in some other works on
interactive search (see works, e.g., on interactive goal
programming~\cite{dyer:mgmtsci72}, and interactive optimization based on example critique~\cite{Pu:Faltings:constraints04}) open a venue for future research on
interactive preference-based constrained optimization with TCP-nets, where elicitation of the user preferences is to be interleaved with the search for
feasible solution.

The notion of lexicographic orders/preferences~\shortcite{Fishburn:74,Schiex:Fargier:Verfaillie:95,Freuder:ordinalCSP} is closely
related to our notion of importance. The idea of a lexicographic
ordering is often used in qualitative approaches for
multi-criteria decision making. Basically, it implies that if one
item does better than another on the most important
(lexicographically earlier) criteria on which they differ, it is
considered better overall, regardless on how poorly it may do on
all other criteria. Thus, if we have four criteria (or attributes)
$A,B,C,D$, thus ordered, and $o$ does well on $A$ but miserably on
$B,C$ and $D$, whereas $o'$ is slightly worse on $A$ but much
better on all other criteria, $o$ is still deemed better. In terms of
our notion of variable importance, a lexicographic ordering of
attributes denotes a special form of relative importance of an
attribute versus a set of attributes. Thus, in the example above,
$A$ is more important than $B,C$ and $D$ combined; $B$ is more
important than $C$ and $D$ combined, and $C$ is more important
than $D$. We note that~\citeA{Wilson:aaai04} provides a nice language
that can capture such statements and more. Wilson allows
statements of the form ``$A=a$ is preferred to $A=a$
all-else-being-equal, except for $B$ and $C$.'' That is, given two
outcomes that differ on $A,B$ and $C$ only, the one that assigns
$a$ to $A$ is preferred to the one that assigns $a'$, regardless of the value of $B$
and $C$ in these outcomes. 
Hence, this richer language can in particular capture
lexicographic preferences.

While we believe a lexicographic ordering over attributes is
typically too strong, we think the flexibility provided by
Wilson's language could be quite useful. However, once one starts
analyzing relationships between sets of attributes, the utility
of graphical models and their analysis power becomes
questionable. Indeed, we are not aware of a graphical analysis of
Wilson's approach, except for the special case covered by \TCPns.
Moreover, our intuition is that relative importance of sets will
be a notion that users are much less comfortable specifying in
many applications. However, this hypothesis requires
empirical verification, as well as a more general study of the exact
expressive power of \TCPns, i.e., characterizing partial orders
that are expressible using this language. We believe that this is an
important avenue for future research.


\commentout{
\subsection{Problem modeling}
\label{ss:discussion}

Note that, in decision theory, the preference order reflects the
preferences of a decision maker. The typical decision maker in
preference-based product configuration is the consumer. However, in
some domains, the product vendor may decide that customer preference
elicitation is inappropriate, or simply impossible. In this case, the
role of the decision maker may be relegated to a {\em product expert},
who is knowledgeable about appropriate component combinations for
different customer classes.  Following ~\cite{Junker:conf01},
consider the example of configuring a car.
Suppose the expert indicates that:
\begin{enumerate}[(i)]
\item Older women typically prefer white cars to red cars,
  while the reverse is true for all other groups of customers, and
\item Young men typically value a fashionable look more than reliability,
  while the reverse is true for older men.
\end{enumerate}
In this case, the product expert could specify a \TCPn\ as in
Figure~\ref{f:example3}(a) over variables that stand for parameters both of the
product and of the customer.  The latter variables serve as {\em network
  parameters}, and are depicted by the shaded nodes in Figure~\ref{f:example3}(a).  The
network parameters are instantiated at the beginning of each session of
personalized configuration once some personal information about the customer is
obtained. Each such variable is a root variable in the \TCPn, as it does not
participate (as an end-point) in ${\sf ci}$-arcs or in ${\sf i}$-arcs, and it has only outgoing
${\sf cp}$-arcs.  However, it may serve as a selector variable in some ${\sf
  ci}$-arcs of the network. No preference relation on its values is needed.
After the network parameters are instantiated, the network is refined
accordingly. For example, Figures~\ref{f:example3}(a), (b) and (c) depict
the TCP-net from Figure~\ref{f:example3}(a), refined for young male,
older male, and women customers, respectively.

\begin{figure}[t]
    \begin{center}
      \begin{tabular}{ccccc}
      \mbox{
      \def\objectsizestyle{\scriptstyle}
      \xymatrix @R=25pt@C=8pt{
        *+[F-,:<3pt>]{{\mathsf{Gender}}} \ar[dr] & &
        *+[F-,:<3pt>]{{\mathsf{Age}}} \ar[dl]\\
        & *+[F-:<3pt>]{{\mathsf{Color}}}\\
        *+[F-:<3pt>]{{\mathsf{Look}}} \ar@{-}[rrr]|{\blacksquare}^{{\mathsf{Gender,Age}}} & & &
        *+[F-:<3pt>]{{\mathsf{Reliability}}}
      }
      }  & & & &
      \mbox{
      \def\objectsizestyle{\scriptstyle}
      \xymatrix @R=25pt@C=8pt{
        & *+[F-:<3pt>]{{\mathsf{Color}}}\\
        *+[F-:<3pt>]{{\mathsf{Look}}} \ar[rrr]|{\RHD} & & &
        *+[F-:<3pt>]{{\mathsf{Reliability}}}
      }
      } \\
      \ \\
      (a) & & & & (b)\\
      \ \\
      \ \\
      \mbox{
      \def\objectsizestyle{\scriptstyle}
      \xymatrix @R=25pt@C=8pt{
        & *+[F-:<3pt>]{{\mathsf{Color}}}\\
        *+[F-:<3pt>]{{\mathsf{Look}}} & & &
        *+[F-:<3pt>]{{\mathsf{Reliability}}} \ar[lll]|{\LHD}
      }
      } & & & &
      \mbox{
      \def\objectsizestyle{\scriptstyle}
      \xymatrix @R=25pt@C=8pt{
        & *+[F-:<3pt>]{{\mathsf{Color}}}\\
        *+[F-:<3pt>]{{\mathsf{Look}}} & & &
        *+[F-:<3pt>]{{\mathsf{Reliability}}}
      }
      }\\
      \ \\
      (c) & & & & (d)
      \end{tabular}
    \end{center}
\caption{A parameterized TCP-net.} \label{f:example3}
\end{figure}
} 

\commentout{
An additional issue that, at least in some cases, can be addressed
naturally using the \TCPn\ model, is {\em non-rigidness} of the
variable
set~\cite{Mittal:Falkenhainer:aaai90,Sabin:Weigel:ieee98,Soininen:Gelle:conf99,Veron:Aldanondo:conf00}.
As argued in~\cite{Sabin:Weigel:ieee98}, different components for the
same functional role may need nonidentical sets of additional
components, or functional roles. For example, if a CD player is
part of the car's configuration, one has the option of specifying
preference for front/back speakers and an equalizer. However, if no
audio component is included, the choice of speakers become
irrelevant. In such cases, the set of variables in a solution changes
dynamically on the basis of assignments of values to other variables
-- the related CSP formulation is known as a Dynamic
CSP, or DCSP for short~\cite{Mittal:Falkenhainer:aaai90}.  Although in general
DCSP is known to be more expressive that regular CSP~\cite{SGN:cp99}, in many cases
DCSPs can be reduced to standard CSPs using a simple transformation, which can be applied
in this case as well.  We simply let one of the values in the
domain of $X$ stand for the {\em absence of $X$}.  This way, the
presence/absence of some variables may condition the presence of some
other variables, their relative importance, and the preference on
their values (and even their presence). Some of such conditionings may
be entailed by the core configuration problem, and not by the user's
preferences.  In this case, they can be added into the \TCPn\
automatically, {\em after} the preference elicitation stage, by a
straightforward extension of the specified \TCPn. This approach of
automatically extending CP-nets was exploited
in~\cite{Gudes:Domshlak:Orlov:mdde02} for preference-based
presentation of tree-structured multimedia documents.  In this domain,
(i) the preferences on the value (= option of content appearance) of a
document's component may be conditioned on the value of some other
components, and (ii) whether it should be shown or hidden depends
directly on which of its ancestors in the document hierarchical
structure are shown.

\begin{figure}[ht]
    \begin{center}
      \begin{tabular}{c}
      \mbox{
      \def\objectsizestyle{\scriptstyle}
      \xymatrix @R=35pt@C=15pt{
        & *+[F-:<3pt>]{{\mathsf{Convertible}}}\\
        *+[F-:<3pt>]{{\mathsf{Roof\;Window}}} &
        *+[F-:<3pt>]{{\mathsf{Sound\; System}}} \ar@{-}[rrr]|{\blacksquare}^{{\mathsf{Convertible}}} & & &
        *+[F-:<3pt>]{{\mathsf{Audio System}}}
      }
      }\\
      \ \\
      (a)\\
      \ \\
      \mbox{
      \def\objectsizestyle{\scriptstyle}
      \xymatrix @R=35pt@C=15pt{
        & *+[F-:<3pt>]{{\mathsf{Convertible}}} \ar[dl]\\
        *+[F-:<3pt>]{{\mathsf{Roof\;Window}}} &
        *+[F-:<3pt>]{{\mathsf{Sound\; System}}} \ar@{-}[rrr]|{\blacksquare}^{{\mathsf{Convertible}}} & & &
        *+[F-:<3pt>]{{\mathsf{Audio System}}}
      }
      }\\
      \ \\
      (b)
      \end{tabular}
    \end{center}
\caption{A TCP-net with activity values. (a) is the original TCP-net, and
(b) is the modified one.} \label{f:example4}
\end{figure}

To illustrate the above notion of activity values, consider the TCP-net
graph in Figure~\ref{f:example4}(a), specifying preferences of a person
looking
for a sports utility car. This person prefers:
\begin{enumerate}[(i)]
  \item convertibles (${\mathsf{Convertible}}$),
  \item a sliding sun roof to a moon roof (${\mathsf{Roof\;Window}}$),
  \item a multiple CD player to a single CD player
    (${\mathsf{Audio\;System}}$), and
  \item a premium sound system to a standard one (${\mathsf{Sound\;System}}$).
\end{enumerate}
Likewise, in a convertible, the customer values the quality
of the sound system more than the robustness of the audio system.

A piece of information that is missing in the above description is
that convertibles do not have sun or moon roofs.  In general, the
customer may be unaware of such facts. Therefore, once the preferences
of the customer as above has been elicited, the search/configuration
engine of the dealership (possibly) adds a value ${\mathtt{None}}$ to
the domain of the variable ${\mathsf{Roof\;Window}}$, and sets this
value to be the most preferable if ${\mathsf{Convertible}} =
{\mathtt{Yes}}$ (see Figure~\ref{f:example4}(b)).

} 

\subsection*{Acknowledgments} 
Ronen Brafman and Solomon Shimony were partly supported by 
the Paul Ivanier Center for Robotics and Production Management.
Ronen Brafman was partly supported by NSF grants SES-0527650 and
IIS-0534662. Ronen Brafman's permanent address is: Department of Computer Science, Ben Gurion University, Israel.


\bibliographystyle{theapa}
\bibliography{brafman06a}

\begin{thebibliography}{}

\bibitem[\protect\BCAY{Asher\ \BBA\ Morreau}{Asher\ \BBA\
  Morreau}{1995}]{Asher:Morreau:95}
Asher, N.\BBACOMMA\  \BBA\ Morreau, M. \BBOP1995\BBCP.
\newblock \BBOQ What some generic sentences mean\BBCQ\
\newblock In Carlson, G.\BBACOMMA\  \BBA\ Pelletier, F.~J.\BEDS, {\Bem The
  Generic Book}, \BPGS\ 300--338. Chicago University Press.

\bibitem[\protect\BCAY{Bethard, Yu, Thornton, Hatzivassiloglou,\ \BBA\
  Jurafsky}{Bethard et~al.}{2004}]{Bethard:etal:04}
Bethard, S., Yu, H., Thornton, A., Hatzivassiloglou, V., \BBA\ Jurafsky, D.
  \BBOP2004\BBCP.
\newblock \BBOQ Automatic extraction of opinion propositions and their
  holders\BBCQ\
\newblock In {\Bem Proceedings of the AAAI Spring Symposium on Exploring
  Attitude and Affect in Text: Theories and Applications}.

\bibitem[\protect\BCAY{Bistarelli, Fargier, Montanari, Rossi, Schiex,\ \BBA\
  Verfaillie}{Bistarelli et~al.}{1999}]{softCSP:c99}
Bistarelli, S., Fargier, H., Montanari, U., Rossi, F., Schiex, T., \BBA\
  Verfaillie, G. \BBOP1999\BBCP.
\newblock \BBOQ Semiring-based {CSP}s and valued {CSP}s: Frameworks,
  properties, and comparison\BBCQ\
\newblock {\Bem Constraints}, {\Bem 4\/}(3), 275--316.

\bibitem[\protect\BCAY{Bistarelli, Montanari,\ \BBA\ Rossi}{Bistarelli
  et~al.}{1997}]{Semiring:acm97}
Bistarelli, S., Montanari, U., \BBA\ Rossi, F. \BBOP1997\BBCP.
\newblock \BBOQ Semiring-based constraint solving and optimization\BBCQ\
\newblock {\Bem Journal of the ACM}, {\Bem 44\/}(2), 201--236.

\bibitem[\protect\BCAY{Blythe}{Blythe}{2002}]{veil}
Blythe, J. \BBOP2002\BBCP.
\newblock \BBOQ Visual exploration and incremental utility elicitation\BBCQ\
\newblock In {\Bem Proceedings of the National Conference on Artificial
  Intelligence (AAAI)}, \BPGS\ 526--532.

\bibitem[\protect\BCAY{Boutilier, Bacchus,\ \BBA\ Brafman}{Boutilier
  et~al.}{2001}]{Boutilier:Bacchus:Brafman:uai01}
Boutilier, C., Bacchus, F., \BBA\ Brafman, R.~I. \BBOP2001\BBCP.
\newblock \BBOQ {UCP}-networks: {A} directed graphical representation of
  conditional utilities\BBCQ\
\newblock In {\Bem Proceedings of Seventeenth Conference on Uncertainty in
  Artificial Intelligence}, \BPGS\ 56--64.

\bibitem[\protect\BCAY{Boutilier, Brafman, Domshlak, Hoos,\ \BBA\
  Poole}{Boutilier et~al.}{2004a}]{BBDHP.journal}
Boutilier, C., Brafman, R., Domshlak, C., Hoos, H., \BBA\ Poole, D.
  \BBOP2004a\BBCP.
\newblock \BBOQ {CP}-nets: {A} tool for representing and reasoning about
  conditional {\em ceteris paribus} preference statements\BBCQ\
\newblock {\Bem Journal of Artificial Intelligence Research (JAIR)}, {\Bem 21},
  135--191.

\bibitem[\protect\BCAY{Boutilier, Brafman, Domshlak, Hoos,\ \BBA\
  Poole}{Boutilier et~al.}{2004b}]{BBDHP.journal2}
Boutilier, C., Brafman, R., Domshlak, C., Hoos, H., \BBA\ Poole, D.
  \BBOP2004b\BBCP.
\newblock \BBOQ Preference-based constrained optimization with {CP}-nets\BBCQ\
\newblock {\Bem Computational Intelligence (Special Issue on Preferences in AI
  and CP)}, {\Bem 20\/}(2), 137--157.

\bibitem[\protect\BCAY{Boutilier, Brafman, Hoos,\ \BBA\ Poole}{Boutilier
  et~al.}{1999}]{BBHP.UAI99}
Boutilier, C., Brafman, R., Hoos, H., \BBA\ Poole, D. \BBOP1999\BBCP.
\newblock \BBOQ Reasoning with conditional {\em ceteris paribus} preference
  statements\BBCQ\
\newblock In {\Bem Proceedings of the Fifteenth Annual Conference on
  Uncertainty in Artificial Intelligence}, \BPGS\ 71--80. Morgan Kaufmann
  Publishers.

\bibitem[\protect\BCAY{Brafman\ \BBA\ Chernyavsky}{Brafman\ \BBA\
  Chernyavsky}{2005}]{Brafman:Chernyavsky}
Brafman, R.\BBACOMMA\  \BBA\ Chernyavsky, Y. \BBOP2005\BBCP.
\newblock \BBOQ Planning with goal preferences and constraints\BBCQ\
\newblock In {\Bem Proceedings of the International Conference on Automated
  Planning and Scheduling}, \BPGS\ 182--191, Monterey, CA.

\bibitem[\protect\BCAY{Brafman\ \BBA\ Domshlak}{Brafman\ \BBA\
  Domshlak}{2002}]{Brafman:Domshlak:uai02}
Brafman, R.\BBACOMMA\  \BBA\ Domshlak, C. \BBOP2002\BBCP.
\newblock \BBOQ Introducing variable importance tradeoffs into {CP}-nets\BBCQ\
\newblock In {\Bem Proceedings of the Eighteenth Annual Conference on
  Uncertainty in Artificial Intelligence}, \BPGS\ 69--76, Edmonton, Canada.

\bibitem[\protect\BCAY{Brafman, Domshlak,\ \BBA\ Kogan}{Brafman
  et~al.}{2004a}]{Brafman:Domshlak:uai04}
Brafman, R., Domshlak, C., \BBA\ Kogan, T. \BBOP2004a\BBCP.
\newblock \BBOQ Compact value-function representations for qualitative
  preferences\BBCQ\
\newblock In {\Bem Proceedings of the Twentieth Annual Conference on
  Uncertainty in Artificial Intelligence}, \BPGS\ 51--58, Banff, Canada.

\bibitem[\protect\BCAY{Brafman, Domshlak,\ \BBA\ Shimony}{Brafman
  et~al.}{2004b}]{Domshlak:Brafman:Shimony:tois04}
Brafman, R., Domshlak, C., \BBA\ Shimony, S.~E. \BBOP2004b\BBCP.
\newblock \BBOQ Qualitative decision making in adaptive presentation of
  structured information\BBCQ\
\newblock {\Bem ACM Transactions on Information Systems}, {\Bem 22\/}(4),
  503--539.

\bibitem[\protect\BCAY{Brafman\ \BBA\ Dimopoulos}{Brafman\ \BBA\
  Dimopoulos}{2004}]{Brafman:Dimopoulos:ci04}
Brafman, R.~I.\BBACOMMA\  \BBA\ Dimopoulos, Y. \BBOP2004\BBCP.
\newblock \BBOQ Extended semantics and optimization algorithms for
  cp-networks\BBCQ\
\newblock {\Bem Computational Intelligence (Special Issue on Preferences in AI
  and CP)}, {\Bem 20\/}(2), 218--245.

\bibitem[\protect\BCAY{Brafman\ \BBA\ Friedman}{Brafman\ \BBA\
  Friedman}{2005}]{BF05}
Brafman, R.~I.\BBACOMMA\  \BBA\ Friedman, D. \BBOP2005\BBCP.
\newblock \BBOQ Adaptive rich media presentations via preference-based
  constrained optimization\BBCQ\
\newblock In {\Bem Proceedings of the IJCAI-05 Workshop on Advances in
  Preference Handling}, \BPGS\ 19--24, Edinburgh, Scotland.

\bibitem[\protect\BCAY{Brewka}{Brewka}{2002}]{Brewka:aaai02}
Brewka, G. \BBOP2002\BBCP.
\newblock \BBOQ Logic programming with ordered disjunction\BBCQ\
\newblock In {\Bem Proceedings of Eighteenth National Conference on Artificial
  Intelligence}, \BPGS\ 100--105, Edmonton, Canada. AAAI Press.

\bibitem[\protect\BCAY{Burke}{Burke}{2000}]{Burke:00}
Burke, R. \BBOP2000\BBCP.
\newblock \BBOQ Knowledge-based recommender systems\BBCQ\
\newblock In Kent, A.\BED, {\Bem Encyclopedia of Library and Information
  Systems}, \lowercase{\BVOL}~69, \BPGS\ 180--200. Marcel Dekker, New York.

\bibitem[\protect\BCAY{Domshlak}{Domshlak}{2002}]{Domshlak:PhD}
Domshlak, C. \BBOP2002\BBCP.
\newblock {\Bem Modeling and Reasoning about Preferences with {CP}-nets}.
\newblock Ph.D.\ thesis, Ben-Gurion University, Israel.

\bibitem[\protect\BCAY{Domshlak\ \BBA\ Brafman}{Domshlak\ \BBA\
  Brafman}{2002}]{Domshlak:Brafman:kr02}
Domshlak, C.\BBACOMMA\  \BBA\ Brafman, R. \BBOP2002\BBCP.
\newblock \BBOQ {CP}-nets - reasoning and consistency testing\BBCQ\
\newblock In {\Bem Proceedings of the Eighth International Conference on
  Principles of Knowledge Representation and Reasoning}, \BPGS\ 121--132,
  Toulouse, France.

\bibitem[\protect\BCAY{Domshlak, Brafman,\ \BBA\ Shimony}{Domshlak
  et~al.}{2001}]{Domshlak:Brafman:Shimony:ijcai01}
Domshlak, C., Brafman, R., \BBA\ Shimony, S.~E. \BBOP2001\BBCP.
\newblock \BBOQ Preference-based configuration of web page content\BBCQ\
\newblock In {\Bem Proceedings of the Seventeenth International Joint
  Conference on Artificial Intelligence}, \BPGS\ 1451--1456, Seattle.

\bibitem[\protect\BCAY{Domshlak, Rossi, Venable,\ \BBA\ Walsh}{Domshlak
  et~al.}{2003}]{DRVW:ijcai03}
Domshlak, C., Rossi, F., Venable, K.~B., \BBA\ Walsh, T. \BBOP2003\BBCP.
\newblock \BBOQ Reasoning about soft constraints and conditional preferences:
  Complexity results and approximation techniques\BBCQ\
\newblock In {\Bem Proceedings of the Eighteenth International Joint Conference
  on Artificial Intelligence}, \BPGS\ 215--220, Acapulco, Mexico.

\bibitem[\protect\BCAY{Dyer}{Dyer}{1972}]{dyer:mgmtsci72}
Dyer, J.~S. \BBOP1972\BBCP.
\newblock \BBOQ Interactive goal programming\BBCQ\
\newblock {\Bem Management Science}, {\Bem 19}, 62--70.

\bibitem[\protect\BCAY{Even, Itai,\ \BBA\ Shamir}{Even
  et~al.}{1976}]{Even:Itai:Shamir:76}
Even, S., Itai, A., \BBA\ Shamir, A. \BBOP1976\BBCP.
\newblock \BBOQ On the complexity of timetable and multicommodity flow
  problems\BBCQ\
\newblock {\Bem SIAM Journal on Computing}, {\Bem 5}, 691--703.

\bibitem[\protect\BCAY{Faltings, Pu, Torrens,\ \BBA\ Viappiani}{Faltings
  et~al.}{2004}]{Pu:Faltings:etal:iui04}
Faltings, B., Pu, P., Torrens, M., \BBA\ Viappiani, P. \BBOP2004\BBCP.
\newblock \BBOQ Designing example-critiquing interaction\BBCQ\
\newblock In {\Bem Proceedings of the International Conference on Intelligent
  User Interfaces}, \BPGS\ 22--29, Funchal, Madeira, Portugal.

\bibitem[\protect\BCAY{Fargier\ \BBA\ Lang}{Fargier\ \BBA\
  Lang}{1993}]{Fargier:Lang:93}
Fargier, H.\BBACOMMA\  \BBA\ Lang, J. \BBOP1993\BBCP.
\newblock \BBOQ Uncertainty in constraint satisfaction problems: A
  probabilistic approach\BBCQ\
\newblock In {\Bem Proceedings of the European Conference on Symbolic and
  Qualitative Approaches to Reasoning and Uncertainty}, \lowercase{\BVOL}\ 747
  of {\Bem LNCS}, \BPGS\ 97--104.

\bibitem[\protect\BCAY{Fargier, Lang,\ \BBA\ Schiex}{Fargier
  et~al.}{1993}]{Fargier:Lang:Schiex:eufit93}
Fargier, H., Lang, J., \BBA\ Schiex, T. \BBOP1993\BBCP.
\newblock \BBOQ Selecting preferred solutions in fuzzy constraint satisfaction
  problems\BBCQ\
\newblock In {\Bem Proceedings of the First European Congress on Fuzzy and
  Intelligent Technologies}, \BPGS\ 1128--1134.

\bibitem[\protect\BCAY{Fishburn}{Fishburn}{1974}]{Fishburn:74}
Fishburn, P. \BBOP1974\BBCP.
\newblock \BBOQ Lexicographic orders, utilities, and decision rules: A
  survey\BBCQ\
\newblock {\Bem Management Science}, {\Bem 20\/}(11), 1442--1471.

\bibitem[\protect\BCAY{French}{French}{1986}]{french}
French, S. \BBOP1986\BBCP.
\newblock {\Bem Decision Theory}.
\newblock Halsted Press, New York.

\bibitem[\protect\BCAY{Freuder\ \BBA\ O'Sullivan}{Freuder\ \BBA\
  O'Sullivan}{2001}]{Freuder:OSullivan:cp01}
Freuder, E.\BBACOMMA\  \BBA\ O'Sullivan, B. \BBOP2001\BBCP.
\newblock \BBOQ Generating tradeoffs for interactive constraint-based
  configuration\BBCQ\
\newblock In {\Bem Proceedings of the 7th International Conference on
  Principles and Practice of Constraint Programming}, \BPGS\ 590--594, Paphos,
  Cyprus.

\bibitem[\protect\BCAY{Freuder\ \BBA\ Wallace}{Freuder\ \BBA\
  Wallace}{1992}]{Freuder:Wallace:aij92}
Freuder, E.~C.\BBACOMMA\  \BBA\ Wallace, R.~J. \BBOP1992\BBCP.
\newblock \BBOQ Partial constraint satisfaction\BBCQ\
\newblock {\Bem Artificial Intelligence}, {\Bem 58}, 21--70.

\bibitem[\protect\BCAY{Freuder, Wallace,\ \BBA\ Heffernan}{Freuder
  et~al.}{2003}]{Freuder:ordinalCSP}
Freuder, E.~C., Wallace, R.~J., \BBA\ Heffernan, R. \BBOP2003\BBCP.
\newblock \BBOQ Ordinal constraint satisfaction\BBCQ\
\newblock In {\Bem Proceedings of the Fifth International Workshop on Soft
  Constraints}.

\bibitem[\protect\BCAY{Glass}{Glass}{1999}]{james99challenges}
Glass, J. \BBOP1999\BBCP.
\newblock \BBOQ Challenges for spoken dialogue systems\BBCQ\
\newblock In {\Bem Proceedings of the IEEE ASRU Workshop}, Keystone, CO.

\bibitem[\protect\BCAY{Goldsmith, Lang, Truszczynski,\ \BBA\ Wilson}{Goldsmith
  et~al.}{2005}]{Goldsmith:etal:ijcai05}
Goldsmith, J., Lang, J., Truszczynski, M., \BBA\ Wilson, N. \BBOP2005\BBCP.
\newblock \BBOQ The computational complexity of dominance and consistency in
  {CP}-nets\BBCQ\
\newblock In {\Bem Proceedings of the Nineteenth International Joint Conference
  on Artificial Intelligence}, \BPGS\ 144--149, Edinburgh, Scotland.

\bibitem[\protect\BCAY{Haag}{Haag}{1998}]{Haag:ieee98}
Haag, A. \BBOP1998\BBCP.
\newblock \BBOQ Sales configuration in business processes\BBCQ\
\newblock {\Bem IEEE Intelligent Systems and their Applications}, {\Bem
  13\/}(4), 78--85.

\bibitem[\protect\BCAY{Hansson}{Hansson}{2001}]{Hansson:01}
Hansson, S.~O. \BBOP2001\BBCP.
\newblock \BBOQ Preference logic\BBCQ\
\newblock In Gabbay, D.~M.\BBACOMMA\  \BBA\ Guenthner, F.\BEDS, {\Bem Handbook
  of Philosophical Logic\/} (2 \BEd)., \lowercase{\BVOL}~4, \BPGS\ 319--394.
  Kluwer.

\bibitem[\protect\BCAY{Keeney\ \BBA\ Raiffa}{Keeney\ \BBA\ Raiffa}{1976}]{KR}
Keeney, R.~L.\BBACOMMA\  \BBA\ Raiffa, H. \BBOP1976\BBCP.
\newblock {\Bem Decision with {M}ultiple {O}bjectives: {P}references and
  {V}alue {T}radeoffs}.
\newblock Wiley.

\bibitem[\protect\BCAY{Lang}{Lang}{2002}]{Lang:kr02}
Lang, J. \BBOP2002\BBCP.
\newblock \BBOQ From preference representation to combinatorial vote\BBCQ\
\newblock In {\Bem Proceedings of the Eight International Conference on
  Principles of Knowledge Representation and Reasoning (KR)}, \BPGS\ 277--288.

\bibitem[\protect\BCAY{Linden, Hanks,\ \BBA\ Lesh}{Linden
  et~al.}{1997}]{Linden:Hanks:Lesh:97}
Linden, G., Hanks, S., \BBA\ Lesh, N. \BBOP1997\BBCP.
\newblock \BBOQ Interactive assessment of user preference models: The automated
  travel assistant\BBCQ\
\newblock In {\Bem Proceedings of the Sixth International Conference on User
  Modeling}, \BPGS\ 67--78.

\bibitem[\protect\BCAY{Pu\ \BBA\ Faltings}{Pu\ \BBA\
  Faltings}{2004}]{Pu:Faltings:constraints04}
Pu, P.\BBACOMMA\  \BBA\ Faltings, B. \BBOP2004\BBCP.
\newblock \BBOQ Decision tradeoff using example critiquing and constraint
  programming\BBCQ\
\newblock {\Bem Constraints: An International Journal}, {\Bem 9\/}(4),
  289--310.

\bibitem[\protect\BCAY{Resnick\ \BBA\ Varian}{Resnick\ \BBA\
  Varian}{1997}]{acmc40}
Resnick, P.\BBACOMMA\  \BBA\ Varian, H.~R.\BEDS. \BBOP1997\BBCP.
\newblock {\Bem Special Issue on Recommender Systems}, \lowercase{\BVOL}~40 of
  {\Bem Communications of the ACM}.

\bibitem[\protect\BCAY{Rossi, Venable,\ \BBA\ Walsh}{Rossi
  et~al.}{2004}]{RVW:aaai04}
Rossi, F., Venable, K.~B., \BBA\ Walsh, T. \BBOP2004\BBCP.
\newblock \BBOQ m{CP} nets: Representing and reasoning with preferences of
  multiple agents\BBCQ\
\newblock In {\Bem Proceedings of the Nineteenth National Conference on
  Artificial Intelligence}, \BPGS\ 729--734, San Jose, CL.

\bibitem[\protect\BCAY{Sabin\ \BBA\ Weigel}{Sabin\ \BBA\
  Weigel}{1998}]{Sabin:Weigel:ieee98}
Sabin, D.\BBACOMMA\  \BBA\ Weigel, R. \BBOP1998\BBCP.
\newblock \BBOQ Product conguration frameworks - {A} survey\BBCQ\
\newblock {\Bem IEEE Intelligent Systems and their Applications}, {\Bem
  13\/}(4), 42--49.

\bibitem[\protect\BCAY{Schiex}{Schiex}{1992}]{Schiex:uai92}
Schiex, T. \BBOP1992\BBCP.
\newblock \BBOQ Possibilistic cosntraint satisfaction, or "{H}ow to handle soft
  constraints"\BBCQ\
\newblock In {\Bem Proceedings of Eighth Conference on Uncertainty in
  Artificial Intelligence}, \BPGS\ 269--275.

\bibitem[\protect\BCAY{Schiex, Fargier,\ \BBA\ Verfaillie}{Schiex
  et~al.}{1995}]{Schiex:Fargier:Verfaillie:95}
Schiex, T., Fargier, H., \BBA\ Verfaillie, G. \BBOP1995\BBCP.
\newblock \BBOQ Valued constraint satisfaction problems: Hard and easy
  problems\BBCQ\
\newblock In {\Bem Proceedings of the Fourteenth International Joint Conference
  on Artificial Intelligence}, \BPGS\ 631--637.

\bibitem[\protect\BCAY{Wilson}{Wilson}{2004a}]{Wilson:ecai04}
Wilson, N. \BBOP2004a\BBCP.
\newblock \BBOQ Consistency and constrained optimisation for conditional
  preferences\BBCQ\
\newblock In {\Bem Proceedings of the Sixteenth European Conference on
  Artificial Intelligence}, \BPGS\ 888--894, Valencia.

\bibitem[\protect\BCAY{Wilson}{Wilson}{2004b}]{Wilson:aaai04}
Wilson, N. \BBOP2004b\BBCP.
\newblock \BBOQ Extending {CP}-nets with stronger conditional preference
  statements\BBCQ\
\newblock In {\Bem Proceedings of the Nineteenth National Conference on
  Artificial Intelligence}, \BPGS\ 735--741, San Jose, CL.

\end{thebibliography}

\end{document}